\newtheorem{theorem}{Theorem}
\newtheorem{lemma}[theorem]{Lemma}
\newcommand{\oea}{\mbox{$(1 + 1)$~EA}\xspace}
\newcommand{\mplea}{\mbox{$(\mu+\lambda)$~EA}\xspace}
\newcommand{\mclea}{\mbox{$(\mu,\lambda)$~EA}\xspace}
\newcommand{\ollga}{\mbox{$(1+(\lambda,\lambda))$~GA}\xspace}
\newcommand{\OM}{\textsc{OneMax}\xspace}
\newcommand{\onemax}{\OM}
\newcommand{\LO}{\textsc{Leading\-Ones}\xspace}
\newcommand{\leadingones}{\LO}
\newcommand{\binval}{\textsc{BinVal}\xspace}
\DeclareMathOperator{\jump}{\textsc{Jump}}
\DeclareMathOperator{\Sample}{Sample}
\DeclareMathOperator{\minmax}{minmax}
\DeclareMathOperator{\paral}{par}
\DeclareMathOperator{\poly}{poly}
\newcommand{\Ymax}{Y_{\max}}
\newcommand{\R}{\ensuremath{\mathbb{R}}}
\newcommand{\N}{\ensuremath{\mathbb{N}}} 
\newcommand{\calA}{\ensuremath{\mathcal{A}}} 
\newcommand{\calF}{\ensuremath{\mathcal{F}}} 
\newcommand{\calP}{\ensuremath{\mathcal{P}}}
\DeclareMathOperator{\Bin}{Bin}
\newcommand{\eps}{\varepsilon}
\newcommand{\assign}{\leftarrow}
\begin{document}

\title{The Runtime of the Compact Genetic Algorithm on Jump Functions\thanks{Extended version of results that appeared at GECCO 2019~\cite{Doerr19gecco} and FOGA 2019~\cite{Doerr19foga}. It contains as new result the $\Omega(\mu \sqrt n + n \log n)$ lower bound. All other results have been significantly rewritten, both to polish the arguments and to give a more unified treatment of the two previous works. In this process, the GECCO 2019 results were extended to subjump functions, the FOGA 2019 results were extended to superjump functions -- two natural extensions of the jump functions class. This work was supported by a public grant as part of the Investissement d'avenir project, reference ANR-11-LABX-0056-LMH, LabEx LMH, in a joint call with Gaspard Monge Program for optimization, operations research and their interactions with data sciences.
}}


\author{Benjamin Doerr\\ Laboratoire d'Informatique (LIX)\\ CNRS\\ \'Ecole Polytechnique\\ Institut Polytechnique de Paris\\ Palaiseau\\ France
}

\maketitle

\sloppy{
\begin{abstract}
  In the first and so far only mathematical runtime analysis of an estimation-of-distribution algorithm (EDA) on a multimodal problem, Hasen\"ohrl and Sutton (GECCO 2018) showed for any $k = o(n)$ that the compact genetic algorithm (cGA) with any hypothetical population size $\mu = \Omega(ne^{4k} + n^{3.5+\eps})$ with high probability finds the optimum of the $n$-dimensional jump function with jump size $k$ in time $O(\mu n^{1.5} \log n)$.
  
  We significantly improve this result for small jump sizes $k \le \frac 1 {20} \ln n -1$. In this case, already for $\mu = \Omega(\sqrt n \log n) \cap \poly(n)$ the runtime of the cGA with high probability is only $O(\mu \sqrt n)$. For the smallest admissible values of $\mu$, our result gives a runtime of $O(n \log n)$, whereas the previous one only shows $O(n^{5+\eps})$. Since it is known that the cGA with high probability needs at least $\Omega(\mu \sqrt n)$ iterations to optimize the unimodal $\onemax$ function, our result shows that the cGA in contrast to most classic evolutionary algorithms here is able to cross moderate-sized valleys of low fitness at no extra cost. 
    
  For large $k$, we show that the exponential (in $k$) runtime guarantee of Hasen\"ohrl and Sutton is tight and cannot be improved, also not by using a smaller hypothetical population size. We prove that any choice of the hypothetical population size leads to a runtime that, with high probability, is at least exponential in the jump size $k$. This result might be the first non-trivial exponential lower bound for EDAs that holds for arbitrary parameter settings. 
  
  To complete the picture, we show that the cGA with hypothetical population size $\mu = \Omega(\log n)$ with high probability needs $\Omega(\mu \sqrt n + n \log n)$ iterations to optimize any $n$-dimensional jump function. This bound was known for \onemax, but, as we also show, the usual domination arguments do not allow to extend lower bounds on the performance of the cGA on \onemax to arbitrary functions with unique optimum.
  
  As a side result, we provide a simple general method based on parallel runs that, under mild conditions, (i)~overcomes the need to specify a suitable population size and still gives a performance close to the one stemming from the best-possible population size, and (ii)~transforms EDAs with high-probability performance guarantees into EDAs with similar bounds on the expected runtime.
\end{abstract}

\section{Introduction}

Estimation-of-distribution algorithms (EDAs)~\cite{LarranagaL02,PelikanHL15} are a particular class of evolutionary algorithms. Whereas typical classic evolutionary algorithms evolve a population of (hopefully good) solutions, EDAs evolve a probabilistic model of the search space, that is, a probability distribution over the set of all solutions. The target is to obtain distributions that allow to easily sample good solutions for the optimization problem regarded. 

While the mathematical analysis of classical evolutionary algorithms (EAs) has produced a plethora of insightful results, see, e.g.,~\cite{NeumannW10,AugerD11,Jansen13,DoerrN20}, the rigorous understanding of EDAs is much less developed, see, e.g., the recent survey~\cite{KrejcaW20bookchapter}. Obviously, this is due to the highly complex stochastic processes that describe the runs of such algorithms. In consequence, despite significant efforts and deep results~\cite{Droste06,SudholtW19,LenglerSW18}, not even the runtime of the compact genetic algorithm (cGA) on the \onemax benchmark function is fully understood (here we would argue that the cGA is the simplest EDA and that the unimodal \onemax function, counting the number of ones in a bit string, is the easiest optimization problem with unique global optimum). It is therefore not surprising that many questions which are well-understood for EAs are only started to be understood for EDAs. 

One such question is how EDAs optimize objective functions that are not unimodal. In the first and, prior to this work, only runtime analysis of an EDA on a multimodal problem, Hasen\"ohrl and Sutton~\cite{HasenohrlS18} regard the optimization time of the cGA on the jump function class. These functions are unimodal apart from having a valley of low fitness of scalable size $k$ around the global optimum. For a sufficiently large constant $C$ and any constant $\eps > 0$, they show~\cite[Theorem~3.3]{HasenohrlS18} that the cGA with hypothetical population size $\mu \ge \max\{C n e^{4k}, n^{3.5+\eps}\}$\footnote{In the paper, this is stated as minimum of the two terms, but from the proofs it is clear that it should be the maximum.} with probability $1 - o(1)$ finds the optimum of any jump function with jump size $k = o(n)$ in $O(\mu n^{1.5} \log n)$ generations (which is also the number of fitness evaluations, since the cGA evaluates only two search points in each iteration).

This result is remarkable in that it shows that the cGA with the right choice of $\mu$ and for $k \ge 6$ is more efficient on jump functions than most evolutionary algorithms, which have a runtime of at least $\Omega(n^k)$; see Section~\ref{ssec:prevjump}.

\subsection{An Improved Upper Bound for Small Jump Sizes}

When the jump size $k$ is small, the runtime guarantee given by Hasen\"ohrl and Sutton~\cite{HasenohrlS18} is still relatively large. We note that even when choosing the smallest possible population size $\mu = n^{3.5+\eps}$, the runtime guarantee becomes at least $\Omega(n^{5+\eps})$. While clearly a polynomial runtime, and thus \emph{efficient} in the classic complexity theory view, this is a runtime that is not practical in many applications. Also, this runtime guarantee is weaker than the $O(n^k)$ bound for simple mutation-based EAs such as the \oea when $k \le 5$. Hence one could feel that the result of Hasen\"ohrl and Sutton shows the superiority of EDAs rather for problem instances for which both the runtime of typical EAs and the performance guarantee for the cGA are prohibitively large. In a similar vein, one has to question if a practitioner would run the cGA with a hypothetical population size of more than $n^{3.5}$ when solving a problem defined over bit strings of length $n$.

\textbf{Our first main result} is that these potential weaknesses of the cGA are not real and that the cGA performs in fact much better than what the previous work shows. We prove rigorously that the cGA with hypothetical population size $\mu \ge K \sqrt n \log n$, $K$ a sufficiently large constant, and $\mu$ polynomially bounded in $n$, with high probability\footnote{that is, with probability $1 - o(1)$, where the asymptotics is in $n$ for a fixed $k$ (which might be a function of $n$)} optimizes any $n$-dimensional jump function with jump size $k \le \frac 1 {20} \ln n -1$ in only $O(\mu \sqrt n)$ iterations. Hence we both improve the runtime guarantee in terms of $n$ and we enlarge the range of admissible values for $\mu$. For the smallest admissible population size $\mu = \Theta(\sqrt n \log n)$, we obtain a runtime guarantee of $O(n \log n)$.

From a broader perspective our result yields that the cGA (and we expect similar results to hold for other EDAs) does not suffer from moderate-size valleys of low fitness. We recall that Sudholt and Witt~\cite{SudholtW19} have shown that the cGA with any hypothetical population size (polynomial in $n$) with high probability needs $\Omega(\mu\sqrt n)$ iterations to optimize the \onemax function. Hence our result shows that adding a valley of low fitness to the \onemax function does not worsen the asymptotic performance of the cGA as long as the fitness valley has a width of at most $\frac 1 {20} \ln n -1$.

On the technical side, our work makes some arguments of~\cite{HasenohrlS18} more rigorous. In particular, we observe that the progress of the cGA cannot be estimated by taking the progress one would have when no fitness valley were present and correcting this estimate by inverting the progress with the probability that a search point is sampled in the fitness valley. This argument ignores the stochastic dependencies between the absolute value of the progress and the event that a solution in the fitness valley is sampled. These dependencies are real and have, in fact, a negative impact on the progress as discussed in more detail before Lemma~\ref{ldrift}.

We note that the approach of intentionally ignoring some dependencies to make a mathematical analysis tractable, often called mean-field analysis, is common in some scientific areas, most notably statistical physics, and has also been used in evolutionary computation, e.g.,~\cite{DoerrZ20de}. This approach, however, needs an additional justification, e.g., via specific experiments, why the omission of the dependencies should not change the matter substantially. In any case, such mean-field approaches do not lead to results fully proven with mathematical rigor. In this sense, we hope that our work also provides methods that help in future analyses of EDAs on multimodal optimization problems.

\subsection{An Exponential Lower Bound}

When $k$ is larger, say $k = \omega(\log n)$, then the runtime guarantee given in~\cite{HasenohrlS18} is exponential in $k$, simply because $\mu$ has to be at least exponential in $k$ to fulfill the assumptions of the result. It is clear that with an exponential hypothetical population size, the runtime must be exponential as well (for the sake of completeness, we shall make this elementary argument precise in Lemma~\ref{lem:elem}). What is not immediately clear is if by choosing a smaller hypothetical population size the cGA can optimize jump functions more efficiently. 

\textbf{Our second main result} is a negative answer to this question. In Theorem~\ref{thm:main} we show that, regardless of the hypothetical population size, the runtime of the cGA on a jump function with jump size $k$ with high probability is at least exponential in $k$. Interestingly, not only our result is a uniform lower bound independent of the hypothetical population size, but our proof is also ``uniform'' in the sense that it needs case distinctions neither w.r.t.\ the hypothetical population size nor w.r.t.\ the different reasons for the lower bound. Here we recall that the existing runtime analyses, see, e.g., again~\cite{Droste06,SudholtW19,LenglerSW18}, find two reasons why an EDA can be inefficient. (i) The hypothetical population size is large and consequently it takes long to move the frequencies into the direction of the optimum. (ii) The hypothetical population size is small and thus, in the absence of a strong fitness signal, the random walk of the frequencies brings some frequencies close to the boundaries of the frequency spectrum (this effect is known as \emph{genetic drift}, see~\cite{DoerrZ20tec} for a recent discussion and relatively precise quantification); from there they are hard to move back into the game. 

We avoid such potentially tedious case distinctions via an elegant drift argument on the sum of the frequencies. Ignoring some technicalities here, we show that, regardless of the hypothetical population size, the frequency sum overshoots a value of $n - \frac 14 k$ only after an expected number of $\exp(\Omega(k))$ iterations. However, in an iteration where the frequency sum is below $n - \frac 14 k$, the optimum is sampled only with probability $\exp(-\Omega(k))$. These two arguments prove our lower bound of order $\exp(\Omega(k))$.

\subsection{A Lower Bound for Small Jump Sizes}

Since the exponential lower bound just discussed is not very strong for small jump sizes $k$, we also prove a lower bound of $\Omega(\mu \sqrt n + n \log n)$ for the performance of the cGA with hypothetical population size $\mu = \Omega(\log n)$ on any jump function. This lower bound was shown before for the \onemax function~\cite{SudholtW19}. While it is not surprising that the cGA is not more efficient on jump functions than on \onemax, this is not trivial to show. As we also observe in Section~\ref{sec:nlogn}, a result like ``\onemax is an easiest function with a unique global optimum'', which is true for many other evolutionary algorithms, cannot be proven with the usual arguments for the cGA. In fact, we currently have no indication that such a result is true for the cGA, nor do we have a counter-example.

\subsection{Expected Runtimes of EDAs vs.\ Bounds with High Probability}

As a \textbf{side result}, triggered by the fact that we ``only'' show an upper bound that holds with high probability, but not a bound on the expected runtime, we provide in Section~\ref{sec:whp} a general approach to transform an EDA using a population size parameter $\mu$ into an algorithm that does not require the specification of such a parameter, but has a performance similar to the one of the EDA with optimally chosen parameter. This performance guarantee also holds for the expected runtime, even if for the EDA only a with-high-probability runtime guarantee is known.

\section{Preliminaries}

\subsection{The Compact Genetic Algorithm}

The \emph{compact genetic algorithm} (cGA) is an estimation-of-distribution algorithm (EDA) proposed by Harik, Lobo, and Goldberg~\cite{HarikLG99} for the maximization of pseudo-Boolean functions $\calF : \{0,1\}^n \to \R$. Being a univariate EDA, it develops a probabilistic model described by a \emph{frequency vector} $f \in [0,1]^n$. This frequency vector describes a probability distribution on the search space $\{0,1\}^n$. If $X = (X_1, \dots, X_n) \in \{0,1\}^n$ is a search point sampled according to this distribution---we write \[X \sim \Sample(f)\] to indicate this---then we have $\Pr[X_i = 1] = f_i$ independently for all $i \in [1..n] \coloneqq \{1, \dots, n\}$. In other words, the probability that $X$ equals some fixed search point $y$ is 
\[\Pr[X = y] = \prod_{i : y_i = 1} f_i \prod_{i : y_i = 0} (1 - f_i).\]

In each iteration, the cGA updates this probabilistic model as follows. It samples two search points $x^1, x^2 \sim \Sample(f)$, computes the fitness of both, and defines $(y^1,y^2) = (x^1,x^2)$ when $x^1$ is at least as fit as $x^2$ and $(y^1,y^2) = (x^2,x^1)$ otherwise. Consequently, $y^1$ is the better search point of the two (if not both have the same fitness). We then define a preliminary frequency vector by $f' \coloneqq f + \frac 1 \mu (y^1 - y^2)$, where $\mu$ is an algorithm parameter called  \emph{hypothetical population size}. This definition ensures that, when $y^1$ and $y^2$ differ in some bit position $i$, the $i$-th preliminary frequency moves by a step of $\frac 1 \mu$ into the direction of $y^1_i$, which we hope to be the right direction since $y^1$ is the better of the two search points. The hypothetical population size~$\mu$ controls how strong this update is. 

To avoid a premature convergence, we ensure that the new frequency vector is in $[\frac 1n, 1 - \frac 1n]^n$ by capping too small or too large values at the corresponding boundaries. More precisely, for all $\ell \le u$ and all $r \in \R$ we define 
\[
\minmax(\ell,r,u) \coloneqq \max\{\ell,\min\{r,u\}\} = \begin{cases} 
\ell & \mbox{if $r < \ell$}\\
r & \mbox{if $r \in [\ell,u]$}\\
u & \mbox{if $r > u$}
\end{cases}
\] 
and we lift this notation to vectors by reading it component-wise. Now the new frequency vector is $\minmax(\frac 1n \mathbf{1}_n, f', (1 - \frac 1n) \mathbf{1}_n)$.

This iterative frequency development is pursued until some termination criterion is met. Since we aim at analyzing the time (number of iterations) it takes to sample the optimal solution (this is what we call the \emph{runtime} of the cGA), we do not specify a termination criterion and pretend that the algorithm runs forever.

The pseudo-code for the cGA is given in Algorithm~\ref{alg:cga}. We shall use the notation given there frequently in our proofs. For the frequency vector $f_t$ obtained at the end of iteration $t$, we denote its $i$-th component by $f_{i,t}$ or, when there is no risk of ambiguity, by $f_{it}$. We shall frequently argue with the sum of the frequencies, which can be written as $\|f_t\|_1 := \sum_{i=1}^n |f_{it}|$ since the frequencies are non-negative. With a slight \textbf{abuse of notation}, we extend this common notation also to preliminary frequency vectors $f'$ and thus write $\|f'\|_1 \coloneqq \sum_{i=1}^n f'_{it}$, when there is not danger of confusion. Where there could be a chance of a critical misunderstanding, we use the much less common notation $\sum[v] \coloneqq \sum_{i=1}^n v_i$ to denote the sum of the entries of an $n$-dimensional vector $v \in \R^n$.
	
\begin{algorithm2e}%
	$t \assign 0$\;
	$f_t = (\frac 12, \dots, \frac 12) \in [0,1]^n$\;
	\Repeat{forever}{
    $x^1 \assign \Sample(f_t)$\;
    $x^2 \assign \Sample(f_t)$\;
    \leIf{$\calF(x^1) \ge \calF(x^2)$}{$(y^1,y^2) \assign (x^1,x^2)$}{$(y^1,y^2) \assign (x^2,x^1)$}
    $f'_{t+1} \assign f_t + \frac 1 \mu (y^1-y^2)$\;
    $f_{t+1} \assign \minmax(\frac 1n \mathbf{1}_n, f'_{t+1}, (1 - \frac 1n) \mathbf{1}_n)$\;
    $t \assign t+1$\; 
  }
\caption{The compact genetic algorithm (cGA) to maximize a function $\calF : \{0,1\}^n \to \R$.}
\label{alg:cga}
\end{algorithm2e}

\textbf{Well-behaved frequency assumption:} 
For the hypothetical population size $\mu$, we take the common assumption that any two frequencies that can occur in a run of the cGA differ by a multiple of $\frac 1 \mu$. We call this the \emph{well-behaved frequency assumption}. This assumption was implicitly already made in~\cite{HarikLG99} by using even $\mu$ in all experiments (note that the hypothetical population size is denoted by $n$ in~\cite{HarikLG99}). This assumption was made explicit in~\cite{Droste06} by requiring $\mu$ to be even. Both works do not use the frequencies boundaries $\frac 1n$ and $1 - \frac 1n$, so an even value for $\mu$ ensures well-behaved frequencies. 

For the case with frequency boundaries, the well-behaved frequency assumption is equivalent to $(1-\frac 2n)$ being an even multiple of the update step size $\frac 1 \mu$. In this case, $n_\mu = (1 - \frac 2n) \mu \in 2 \N$ and the set of frequencies that can occur is 
\begin{equation}
F \coloneqq F_\mu \coloneqq \{\tfrac 1n + \tfrac i \mu \mid i \in [0..n_\mu]\}.\label{eq:fmu}
\end{equation}
This assumption was made, e.g., in the papers~\cite{FriedrichKKS17} (see the last paragraph of Section~II.C) and~\cite{LenglerSW18} (see the paragraph following Lemma~2.1) as well as in the proof of Theorem~2 in~\cite{SudholtW19}.

\textbf{A trivial lower bound:} We finish this subsection on the cGA with the following very elementary remark, which shows that the cGA with hypothetical population size $\mu$ with probability $1 - \exp(-\Omega(n))$ has a runtime of at least $\min\{\frac{\mu}{4}, \exp(\Theta(n))\}$ on any $\calF : \{0,1\}^n \to \R$ with a unique global optimum (and also on all functions with a sufficiently small exponential number of optima). This shows, in particular, that the cGA with the parameter value $\mu = \exp(\Omega(k))$ used to optimize jump functions with gap size $k \in \omega(\log n) \cap o(n)$ in time $\exp(O(k))$ in~\cite{HasenohrlS18} cannot have a runtime better than exponential in $k$.

\begin{lemma}\label{lem:elem}
  Let $\alpha, \beta \ge 0$ be constants such that $\alpha \beta < \frac 43$. Let $\calF : \{0,1\}^n \to \R$ have at most $\alpha^n$ optima. The probability that the cGA generates an optimum of $\calF$ in $T = \min\{\frac{\mu}{4}, \beta^n\}$ iterations is at most $2 (\alpha\beta \frac 34)^n = \exp(-\Omega(n))$.  
\end{lemma}

\begin{proof}
  By the definition of the cGA, the frequency vector $f$ used in iteration $t = 1, 2, 3, \dots$ satisfies $f \in [\frac 12 - \frac{t-1}{\mu}, \frac 12 + \frac{t-1}{\mu}]^n$. Consequently, the probability that a fixed one of the two search points which are generated in this iteration is a fixed solution, is at most $(\frac 12 + \frac{t-1}{\mu})^n$. For $t \le \frac{\mu}{4}$, this is at most $(\frac 34)^n$. Hence by a simple union bound (over time and the global optima), the probability that an optimum is generated in the first $T = \min\{\frac{\mu}{4}, \beta^n\}$ iterations, is at most $2 \alpha^n T (\frac 34)^n \le 2 (\alpha\beta \frac 34)^n = \exp(-\Omega(n))$.
\end{proof}

\subsection{Runtime Analysis for the cGA}\label{sec:prevcga}

In this subsection, we brief{}ly describe the relevant previous runtime analyses for the cGA. For simplicity, we shall always assume that the hypothetical population size is at most polynomial in the problem size $n$, that is, that there is a constant $c$ such that $\mu \le n^c$. This is justified, among others, by Lemma~\ref{lem:elem}, which shows that a super-polynomial hypothetical population size immediately leads to a super-polynomial runtime on any objective function with at most $\alpha^n$ optima, where $\alpha$ can be any constant less than $\frac 43$. 

The first to conduct a rigorous runtime analysis for the cGA was Droste in his seminal work~\cite{Droste06}. He regarded the cGA without frequency boundaries, that is, he just took $f_{t+1} \coloneqq f'_{t+1}$ in our notation. He showed that this algorithm with $\mu \ge n^{1/2 + \eps}$, $\eps > 0$ any positive constant, finds the optimum of the $\onemax$ function defined by 
\[\onemax(x) = \|x\|_1 = \sum_{i=1}^n x_i\] 
for all $x \in \{0,1\}^n$ with probability at least $\frac 12$ in $O(\mu \sqrt n)$ iterations~\cite[Theorem~8]{Droste06}. 

Droste also showed that this cGA for any objective function $\calF$ with unique optimum has an expected runtime of $\Omega(\mu \sqrt n)$ when conditioning on no premature convergence~\cite[Theorem~6]{Droste06}. It is easy to see that his proof of the lower bound can be extended to the cGA with frequency boundaries, that is, to Algorithm~\ref{alg:cga}. For this, it suffices to deduce from his drift argument the result that the first time $T_{n/4}$ that the frequency distance $D = \sum_{i=1}^n (1 - f_{it})$ is less than $\frac n4$ satisfies $E[T_{n/4}] \ge \frac{\sqrt 2}{4}  \mu \sqrt n $. Since the probability to sample the optimum from a frequency distance of at least $\frac n4$ is at most $\exp(-\frac n4)$, see Lemma~\ref{loptUB}, the algorithm with high probability does not find the optimum before time~$T_{n/4}$.

Around ten years after Droste's work, Sudholt and Witt~\cite{SudholtW19} showed that the $O(\mu \sqrt n)$ upper bound also holds for the cGA with frequency boundaries. There (but the same should be true for the cGA without boundaries) a hypothetical population size of $\mu =\Omega(\sqrt n \log n)$ suffices (recall that Droste required $\mu = \Omega(n^{1/2+\eps})$). The technically biggest progress with respect to upper bounds most likely lies in the fact that the analysis in~\cite{SudholtW19} also holds for the expected optimization time, which means that it also includes the rare case that frequencies reach the lower boundary (see our discussion of the relation of expectations and tail bounds for runtimes of EDAs in Section~\ref{sec:whp}). Sudholt and Witt also show that the cGA with frequency boundaries with high probability (and thus also in expectation) needs at least $\Omega(\mu\sqrt n + n \log n)$ iterations to optimize $\onemax$. While the $\Omega(\mu\sqrt n)$ lower bound could have been also obtained with methods similar to Droste's (in Lemma~\ref{lonemax} we do something very similar), the innocent-looking $\Omega(n \log n)$ bound is surprisingly difficult to prove.

Not much is known for hypothetical population sizes below the order of $\sqrt n$. It is clear that then the frequencies will reach the lower boundary of the frequency range, so working with a non-trivial lower boundary like~$\frac 1n$ is necessary to prevent premature convergence. The recent lower bound $\Omega(\mu^{1/3} n)$ valid for $\mu = O(\frac{\sqrt n}{\log n \log\log n})$ of~\cite{LenglerSW18} indicates that already a little below the $\sqrt n$ regime significantly larger runtimes occur, but with no upper bounds this regime remains largely not understood.

We refer the reader to the recent survey~\cite{KrejcaW20bookchapter} for more results on the runtime of the cGA on classic unimodal test functions like \leadingones and \binval. Interestingly, nothing was known for multimodal functions before the recent work of Hasen\"ohrl and Sutton~\cite{HasenohrlS18} on jump functions, which we discussed already in the introduction. 

The general topic of lower bounds on runtimes of EDAs remains largely little understood. Apart from the lower bounds for the cGA on \onemax discussed above, the following is known. Krejca and Witt~\cite{KrejcaW20} prove a lower bound for the UMDA on \onemax, which is of a similar flavor as the lower bound for the cGA of Sudholt and Witt~\cite{SudholtW19}: For $\lambda = (1 + \beta) \mu$, where $\beta > 0$ is a constant, and $\lambda$ polynomially bounded in $n$, the expected runtime of the UMDA on \onemax is $\Omega(\mu \sqrt n + n \log n)$. For the binary value function \binval, Droste~\cite{Droste06} and Witt~\cite{Witt18} together give a lower bound of $\Omega(\min\{n^2, \mu n\})$ for the runtime of the cGA. Apart from these sparse results, we are not aware of any lower bounds for EDAs. Of course, the black-box complexity of the problem is a lower bound for any black-box algorithm, hence also for EDAs, but these bounds are often lower than the true complexity of a given algorithm. For example, the black-box complexities of \onemax, \leadingones, and jump functions with jump size $k \le \frac 12 n - n^{\eps}$, $\eps > 0$ any constant, are $\Theta(\frac{n}{\log n})$~\cite{DrosteJW06,AnilW09}, $\Theta(n \log\log n)$~\cite{AfshaniADDLM19}, and $\Theta(\frac{n}{\log n})$~\cite{BuzdalovDK16}, respectively.

\subsection{Runtime Results for Jump Functions}\label{ssec:prevjump}

To complete the picture, we briefly describe some typical runtimes of evolutionary algorithms on jump functions. We recall that the $n$-dimensional jump function with jump size $k \ge 1$ is defined by
\[
\jump_{nk}(x) = 
\begin{cases}
\|x\|_1+k & \mbox{if $\|x\|_1 \in [0..n-k] \cup \{n\}$,}\\
n - \|x\|_1 & \mbox{if $\|x\|_1 \in [n-k+1\, ..\, n-1]$}.
\end{cases}
\]
Hence for $k = 1$, we have a fitness landscape identical to the one of $\onemax$ apart from all fitness values being larger by one. For larger values of $k$, we still have a fitness landscape identical to \onemax apart from constant shifts when only regarding the lowest $n-k$ fitness levels of the \onemax function, however, now there is a fitness valley (``gap'')
\begin{equation}
G_{nk} \coloneqq \{x \in \{0,1\}^n \mid n-k < \|x\|_1 < n\}\label{eq:gnk}
\end{equation}
consisting of the $k-1$ highest sub-optimal fitness levels of the \onemax function. 

This valley is hard to cross via standard-bit mutation with mutation rate~$\frac 1n$. Consequently, as proven in the classic paper~\cite{DrosteJW02}, the \oea has an expected optimization time of at least $n^k$ on $\jump_{nk}(x)$. This lower bound also holds for the \mplea for all values of $\mu$ and $\lambda$ as well as, more surprisingly, for the \mclea for large ranges of the population sizes~\cite{Doerr20gecco}. By using larger mutation rates or a heavy-tailed mutation operator, a $k^{\Theta(k)}$ runtime improvement for the runtime of the \oea can be obtained~\cite{DoerrLMN17}, but the runtime remains $\Omega(n^k)$ for $k$ constant (and this is also true for the variation of the heavy-tailed mutation rate proposed in~\cite{FriedrichQW18}). The runtime stemming from the optimal mutation rate can be automatically obtained (apart from constant factors) via a self-adjusting choice of the mutation rate~\cite{RajabiW20}. 

Asymptotically better runtimes can be achieved when using crossover, though this is harder than expected. The first work in this direction~\cite{JansenW02}, among other results, could show that a simple $(\mu+1)$ genetic algorithm using uniform crossover with rate $p_c = O(\frac{1}{kn})$ obtains an $O(\mu n^2 k^3 + 2^{2k} p_c^{-1})$ runtime when the population size is at least $\mu = \Omega(k \log n)$. A shortcoming of this result, already noted by the authors, is that it only applies to uncommonly small crossover rates. Using a different algorithm that first always applies crossover and then mutation, a runtime of $O(n^{k-1} \log n)$ was achieved by Dang et al.~\cite[Theorem~2]{DangFKKLOSS18}. For $k \ge 3$, the logarithmic factor in the runtime can be removed by using a higher mutation rate. With additional diversity mechanisms, the runtime can be further reduced up to ${O(n \log n + 4^k)}$, see~\cite{DangFKKLOSS16}. In the light of this last result, the insight stemming from the previous work~\cite{HasenohrlS18} and ours is that the cGA apparently without further modifications supplies the necessary diversity to obtain a runtime of $O(n \log n + 2^{O(k)})$.

With a three-parent majority vote crossover, among other results, a runtime of $O(n \log n)$ could be obtained via a suitable island model for all $k = O(n^{\frac 12 - \eps})$~\cite{FriedrichKKNNS16}. Via a hybrid genetic algorithm using as variation operators only local search and a deterministic voting crossover, an $O(n)$ runtime for $m = O(\log n)$ was obtained in~\cite{WhitleyVHM18}. Via a different voting mechanism, an $O(n \log n)$ runtime was obtained even for $m$ as large as $O(n)$~\cite{RoweA19}. With the right static or heavy-tailed parameters, the \ollga optimizes jump functions in time roughly $n^{(k+O(1))/2}$~\cite{AntipovDK20,AntipovD20ppsn}, however, when using the parameterization developed for \onemax~\cite{DoerrDE15}, then several self-adjusting versions of the \ollga cannot beat mutation-based EAs as shown in~\cite{FajardoS20}.

Finally, we note that runtimes of $O(n \binom{n}{k})$ and $O(k \log(n) \binom{n}{k})$ were shown for the $(1+1)$~IA$^{\mathrm hyp}$ and the $(1+1)$ Fast-IA artificial immune systems, respectively~\cite{CorusOY17,CorusOY18fast}.

\subsection{Expected Runtimes versus Guarantees with High Probability}\label{sec:whp}

We note that our main upper bound result as well as the previous one~\cite{HasenohrlS18} for this problem give runtime bounds that hold with high probability, that is, with probability $1 - o(1)$. However, we do not show a bound on the expected runtime. Let us quickly argue what the differences are, why we chose to prove a high-probability statement, and how to transform EDAs with high-probability guarantees into EDAs with guarantees on the expected runtime. We note that Wegener~\cite[Section~3]{Wegener05} with different arguments also suggests to prefer high-probability guarantees over expected runtimes.

For most evolutionary algorithms a high-probability guarantee can easily be turned into a bound on the expected runtime. If we know that a certain algorithm from any initial state finds the optimum in time $T$ with at least constant probability, then by splitting time into consecutive segments of length $T$ we see that after time $\gamma T$ the probability that the algorithm has not succeeded is at most $\exp(-\Omega(\gamma))$. Consequently, the runtime is stochastically dominated (see Section~\ref{ssec:toolscga} for the definition of this notation) by $T$ times a geometric random variable with constant success rate, and consequently, the expected runtime is $O(T)$. The same argument gives a scalable tail bound of type ``for all $\gamma>1$, the probability that the runtime is more than $\gamma T$ is at most $\exp(-\Omega(\gamma))$.''

For EDAs, it is usually much harder to show a good performance for any initial situation since there are some states which are particularly unfavorable (usually when all frequencies are close to the wrong boundary value). This does not rule out that the expected runtime and the time that is obtained with high probability are of the same order, but proving the bound on the expected runtime needs stronger arguments. The analysis of the expected runtime of the cGA on \onemax in~\cite{SudholtW19} is an example for such a result. 

This additional proof complexity raises the question if this effort is justified if the hardest part is dealing with states of the algorithm that are rarely reached (in~\cite{SudholtW19} with probability $O(n^{-c})$ only, where $c$ can be any positive constant). While we think that it was very valuable that the work~\cite{SudholtW19} showed how to compute expected runtimes for EDAs, we feel that such results are not always needed, both because of the difficulty to obtain such results and because, in some sense, they are a mildly unnatural remedy to the deeper problem.

As said, the main reason why guarantees for the expected runtime of an EDA can be difficult to show is that the EDA with small probability can end up in a state from which the optimum is hard to reach. When in such a state, however, instead of spending much time to leave the unfavorable state, it would be more efficient and more natural to simply restart the algorithm and have a new good chance for a fast optimization process. While we cannot expect the algorithm to detect that it is in an unfavorable state, the following simple parallel-run strategy under mild assumptions can do this automatically. More precisely, via suitable parallel runs we obtain an expected runtime that is only a logarithmic factor above the runtime the EDA would have with high probability when using the optimal population size. Hence this approach both obtains expected runtimes and optimizes the value of the parameter $\mu$. We note that the ``noise-oblivious scheme'' proposed in~\cite[Algorithm~4]{FriedrichKKS17} can also be used to optimize the parameter $\mu$, however only under the much stronger assumption that the runtime (or an upper bound, which influences the runtime of the scheme) is known. In this case, a simple restart scheme with multiplicatively increasing $\mu$ values does the job.

We now proceed with detailing our parallel-run strategy. In the remainder, we shall assume the following.\\

\noindent\textbf{General assumption:} Let $\calA$ be an EDA (or any other randomized search heuristic) with a parameter $\mu$ and let $\calP$ be a problem instance we want to solve. We assume that there are unknown values $\tilde\mu$ and $T$ such that $\calA$ with any parameter value $\mu \ge \tilde\mu$ solves $\calP$ in time $\mu T$ with probability at least $\frac 34$. \\

For this situation, we proposed the following strategy.\\

\noindent\textbf{Parallel EDA runs with exponentially growing population size:} We propose the following strategy to solve $\calP$ via parallel runs of $\calA$ with different parameter values. We start with no process running. In round $i = 1, 2, \dots$ of our strategy, we let all running processes (which are process $1$ to $i-1$) use a computational budget of $2^{i-1}$; further, we start process $i$ with parameter $\mu =  2^{i-1}$ and let it use a budget of $\sum_{j=0}^{i-1} 2^j$. These processes can be run in parallel or sequentially in any order. The pseudocode for this strategy is given in Algorithm~\ref{alg:parallel}.\\

\begin{algorithm2e}
 Initialize process 1 with population size $\mu=1$ and run it for one generation\;
 \For{$i=2,3,\dots$}{
   Run processes $1,\dots, i-1$ each for another $2^{i-1}$ generations\;
   Initialize process $i$ with population size $\mu=2^{i-1}$ and run it for $\sum_{j=0}^{i-1}2^j$ generations\; 
   }
\caption{The parallel-run cGA to maximize a function $f: \{0,1\}^n \rightarrow \R$}\label{alg:parallel}
\end{algorithm2e}

\noindent\textbf{Analysis:}
We observe that at the end of round $i$, processes $1$ to $i$ are running and have each spent a budget of $\sum_{j=0}^{i-1} 2^j = 2^i -1$ up to this point in time. Consequently, the total budget spent in the first $i$ rounds is less than $i 2^i$. 

Note that after round $i_0 \coloneqq 1 + \lceil \log_2 \tilde\mu \rceil + \lfloor \log_2 T \rfloor$, the process started with parameter value $\mu = \mu_0 \coloneqq 2^{\lceil \log_2 \tilde\mu \rceil} \ge \tilde\mu$ has started and has used a time budget of 
\[\sum_{j=0}^{i_0 - 1} 2^j \ge \sum_{j=\lceil \log_2 \tilde\mu \rceil}^{i_0 - 1} 2^j = \mu_0 \sum_{j=0}^{\lfloor \log_2 T \rfloor} 2^j \ge \mu_0 T.\] 
Consequently, with probability $\frac 34$ this process has found the optimum at that time. With the same type of computation, we see that after round $i_0 + j$, the process with parameter value $\mu = 2^j \mu_0$ is finished with probability $\frac 34$. Consequently, the round in which we find the solution is stochastically dominated (see Section~\ref{ssec:toolscga}) by $i_0 - 1$ plus a geometric distribution (on $1, 2, \ldots$) with success rate $\frac 34$. The expected time taken by this strategy to solve $\calP$ thus is at most 
\[\sum_{i = i_0}^\infty \left(\frac 14\right)^{i - i_0} \left(\frac 34\right) i \, 2^i = \frac 34 \, 2^{i_0} \sum_{j=0}^\infty 2^{-j} (j+i_0) = 3 \cdot 2^{i_0-1} (i_0+1)\]
using the well-known equality $\sum_{j=0}^\infty j \, 2^{-j} = 2$. We continue estimating the expected runtime of our parallel-run strategy by
\[
3 \cdot 2^{i_0-1} (i_0+1) \le 6 \tilde\mu T (\log_2 (\tilde\mu T) + 3) \eqqcolon T_{\paral}.
\]
We note that if the values of $\tilde \mu$ and $T$ were known in advance, then restarting the EDA with $\mu = \tilde \mu$ and with a budget of $T$ until the problem is solved would immediately give an algorithm with expected runtime at most $T^* = \frac 43 \tilde \mu T$. This is the best-possible expected runtime that can be deduced from our assumptions. Consequently, our parallel-run strategy with its $O(T^* \log T^*)$ expected runtime obtains the optimal expected runtime apart from a logarithmic factor. 

In summary, we have shown the following result.

\begin{theorem}\label{tscheme}
  Under the general assumptions made above, with $i_0 \coloneqq 1 + \lceil \log_2 \tilde\mu \rceil + \lfloor \log_2 T \rfloor$, the parallel run strategy described above has the following performance.
  \begin{itemize}
  \item The expected time until $\calP$ is solved is at most 
  \[3 \cdot 2^{i_0-1} (i_0+1) \le 6 \tilde\mu T (\log_2 (\tilde\mu T) + 3) = \tfrac 92 T^* (\log_2 (\tfrac 34 T^*) + 3),\] where $T^* = \frac 43 \tilde \mu T$ is the best expected runtime that can be achieved via restarts of $\calA$ under the general assumptions.  
  \item For all $j = 0, 1, 2, \ldots$, the probability that a runtime of $2^{i_0+j} (i+j)$ does not suffice to solve $\calP$, is at most $4^{-j-1}$.
  \end{itemize}
\end{theorem}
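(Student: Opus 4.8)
The plan is to formalize the bookkeeping already sketched in the Analysis paragraph, resting on two pillars: (i) a deterministic bound on the cumulative budget consumed by the end of each round, and (ii) a stochastic-domination statement for the round in which a solution is first found. I would begin with the deterministic part: by construction, at the end of round $i$ every running process $1, \dots, i$ has consumed a budget of $\sum_{j=0}^{i-1} 2^j = 2^i - 1$, so the total budget spent in the first $i$ rounds is strictly less than $i\,2^i$. This converts any statement about ``the round in which $\calP$ is solved'' into a statement about the actual running time.

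For the stochastic part, I would fix $i_0 = 1 + \lceil \log_2 \tilde\mu \rceil + \lfloor \log_2 T \rfloor$ and $\mu_0 = 2^{\lceil \log_2 \tilde\mu\rceil} \ge \tilde\mu$, and observe that for every $j \ge 0$ the process started with parameter $2^j \mu_0$ has, by the end of round $i_0 + j$, been allotted a budget of at least $2^j \mu_0 T \ge \tilde\mu T$ (via the chain of inequalities in the Analysis). Since its parameter satisfies $2^j \mu_0 \ge \tilde\mu$, the general assumption guarantees that this process finds the optimum within that budget with probability at least $\tfrac 34$. Because these are independent executions of $\calA$ with different parameter values, the events ``process $2^{j'}\mu_0$ has succeeded by round $i_0 + j'$'' are independent across $j'$, so the probability that none of the processes $\mu_0, 2\mu_0, \dots, 2^j \mu_0$ has succeeded by round $i_0 + j$ is at most $4^{-(j+1)}$. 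Combined with the budget bound $(i_0 + j)\,2^{i_0 + j}$ from the first part this is exactly the second claim, and it also shows that the round $R$ in which the solution appears is stochastically dominated by $i_0 - 1$ plus a random variable $G \sim \Geom(\tfrac 34)$ on $\{1, 2, \dots\}$.

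Given this domination, the expected-time bound is a direct computation. Since $\phi(i) = i\,2^i$ is increasing and $R$ is dominated as above, $E[\text{runtime}] \le E[\phi(R)] \le E[\phi(i_0 - 1 + G)] = \sum_{i \ge i_0} (\tfrac14)^{i - i_0}\,\tfrac34\, i\,2^i$. Substituting $i = i_0 + j$ and evaluating the resulting series with the standard identities $\sum_{j \ge 0} 2^{-j} = 2$ and $\sum_{j \ge 0} j\,2^{-j} = 2$ yields $3 \cdot 2^{i_0 - 1}(i_0 + 1)$. The closed-form estimates then follow from $i_0 \le 2 + \log_2(\tilde\mu T)$, hence $2^{i_0} \le 4 \tilde\mu T$ and $i_0 + 1 \le 3 + \log_2(\tilde\mu T)$, giving the bound $6\tilde\mu T(\log_2(\tilde\mu T) + 3)$; substituting $\tilde\mu T = \tfrac 34 T^*$ produces the equivalent form in terms of $T^*$.

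I do not expect a genuine mathematical obstacle; the result is elementary once the right quantities are tracked. The one point that needs care is the off-by-one bookkeeping in the domination argument — correctly matching ``parameter $2^j \mu_0$'' to ``round $i_0 + j$'' to ``the $(j+1)$-st independent trial'' — together with the explicit justification that the per-process success events are \emph{independent} (being independent runs of the randomized heuristic $\calA$), which is precisely what licenses the geometric tail $4^{-(j+1)}$ rather than merely a union bound.
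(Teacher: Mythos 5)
Your proposal is correct and follows essentially the same route as the paper's own analysis: the deterministic budget bound $i\,2^i$, the stochastic domination of the solving round by $i_0 - 1$ plus a $\Geom(\tfrac 34)$ variable, the same geometric series evaluation yielding $3 \cdot 2^{i_0-1}(i_0+1)$, and the same closed-form estimates. Your explicit justification of the geometric tail via the independence of the parallel runs is a point the paper leaves implicit, but it is the same argument.
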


We remark that a logarithmic factor performance loss over the optimal strategy (requiring the precise values of $\tilde\mu$ and $T$) is not a lot compared to what can be lost by choosing a wrong algorithm parameter, in particular, when the parameter is hard to guess. We note here that the recent work~\cite{LenglerSW18} suggests that already for the simple \onemax function, the hypothetical population size has a non-obvious influence on the runtime: Sufficiently small values give an $O(n \log n)$ runtime, in a middle regime the runtime increases to $\tilde\Omega(n^{7/6})$ before dropping again to $O(n \log n)$ and then increasing linearly with $\mu$. In the light of such results, a logarithmic overhead for automatically finding a near-optimal rate appears to be a good trade-off.

Finally, we remark without further proof that when our general assumption is fulfilled with some failure probability $p$ instead of $\frac 14$, then tail probabilities in the second item of Theorem~\ref{tscheme} are of order $p^{j+1}$ instead of $(\frac 14)^{j+1}$. This could potentially be interesting when the performance of $\calA$ is strongly concentrated so that the general assumptions hold with some $p = o(1)$. We also note that our strategy could be adjusted to deal with smaller success probabilities than $\frac 34$, either by increasing the $\mu$ value by a smaller factor than $2$ or by having several processes using the same $\mu$ value. We spare the details.

Finally, we note that recently a similar approach was proposed in~\cite{DoerrZ20gecco}. The main difference to ours is that runs where stopped after a time that was based on a mathematical analysis of when genetic drift could become problematic. From the implementation point of view, in this approach the runs with different values of $\mu$ can be conducted one after the other. From the theoretical perspective, this approach has the advantage that with the right choice of the hyperparameters the $\Theta(\log(\tilde \mu T))$ factor in the runtime bound of Theorem~\ref{tscheme} can be saved. The experimental results in~\cite{DoerrZ20gecco} suggest that their approach is superior when the hyperparameters are chosen suitably, which is however non-trivial. We note that here that there is a general agreement in the community that genetic drift leads to an undesired behavior of the EDA. Genetic drift can lead to catastrophic runtimes (compare the results of~\cite{LehreN19foga,DoerrK20evocop}), but not always does (\cite{LehreN17,Witt19} show that the UMDA already with a population size of $\Theta(\log n)$ and thus clearly in the genetic drift regime can optimize \onemax in the for this algorithm best known runtime $O(n \log n)$).

\section{Technical Tools}

In this section, we collect a number of technical results that will be used in our main proofs. These include standard arguments like elementary estimates, Chernoff bounds, and drift theorems, as well as original arguments for the analysis of the cGA which might be of general interest such as a tool to quantify the effect of frequencies being capped at the boundaries (Lemma~\ref{lboundary}), an upper and a lower bound for the probability of sampling the optimum given the $\ell_1$-distance between the current frequency vector and the optimum (Lemma~\ref{loptUB} and~\ref{lopt}), an estimate for the time taken to sample a search point close to the current frequency vector, and a lower bound on the probability to sample two different search points in one iteration (Lemma~\ref{ldiff}).

\subsection{Standard Tools}

The following estimate seems well-known (e.g., it was used in~\cite{JansenJW05} without proof or reference). Gie{\ss}en and Witt~\cite[Lemma~3]{GiessenW17} give a proof via estimates of binomial coefficients and the binomial identity. A more elementary proof can be found in~\cite[Lemma~1.10.37]{Doerr20bookchapter}.

\begin{lemma}\label{lprobbino}
  Let $X \sim \Bin(n,p)$. Let $k \in [0..n]$. Then \[\Pr[X \ge k] \le \binom{n}{k} p^k.\]
\end{lemma}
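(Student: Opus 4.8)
The plan is to establish this inequality by a union bound over the $k$-element subsets of the index set $[1..n]$, which yields the cleanest elementary argument and avoids any manipulation of binomial coefficients. I would write $X = \sum_{i=1}^n X_i$, where the $X_i$ are independent Bernoulli random variables with $\Pr[X_i = 1] = p$. The case $k = 0$ is trivial, since there the claimed bound reads $\Pr[X \ge 0] \le 1$, so I assume $k \ge 1$ throughout.

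The key observation is that the event $\{X \ge k\}$ can be written exactly as the union
\[\{X \ge k\} = \bigcup_{\substack{S \subseteq [1..n] \\ |S| = k}} \{X_i = 1 \text{ for all } i \in S\}.\]
Indeed, if at least $k$ of the trials succeed, then some $k$-element subset $S$ of successful indices exists, which places the outcome in the corresponding event; conversely, if all trials in some $k$-element subset succeed, then certainly $X \ge k$. Applying the union bound, using independence to evaluate $\Pr[X_i = 1 \text{ for all } i \in S] = p^k$ for each fixed $S$, and noting that there are exactly $\binom{n}{k}$ subsets of size $k$, I obtain
\[\Pr[X \ge k] \le \sum_{\substack{S \subseteq [1..n] \\ |S| = k}} \Pr[X_i = 1 \text{ for all } i \in S] = \binom{n}{k} p^k,\]
which is the claim.

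There is essentially no hard step here, and I would not expect any real obstacle. The bound is in general not tight precisely because the union overcounts: an outcome with more than $k$ successes lies in several of the $\binom{n}{k}$ events, but for an upper bound this overcounting is harmless. An alternative route, taken by Gie{\ss}en and Witt, would expand $\Pr[X \ge k] = \sum_{j=k}^n \binom{n}{j} p^j (1-p)^{n-j}$ and bound the tail against $\binom{n}{k} p^k$ via estimates on ratios of consecutive summands; I prefer the union-bound argument here since it requires no such calculation and makes the combinatorial meaning of the factor $\binom{n}{k} p^k$ transparent.
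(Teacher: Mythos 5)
Your proof is correct. The paper itself states this lemma without proof, citing two external arguments, and your union bound over $k$-element subsets is exactly the ``more elementary proof'' the paper points to (in contrast to the Gie{\ss}en--Witt route via binomial-coefficient estimates, which you correctly identify as the alternative), so there is nothing to add.
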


We regularly use the following well-known \emph{multiplicative Chernoff bounds}, which can be derived from~\cite{Hoeffding63}, see, e.g., Theorems~1.10.1 and 1.10.5 together with Section 1.10.1.8 in~\cite{Doerr20bookchapter}. 

\begin{theorem}\label{tchernoff}
  Let $X_1, \ldots, X_n$ be independent random variables taking values in $[0,1]$. Let $X = \sum_{i = 1}^n X_i$. Let $\mu^+ \ge E[X]$ and $\mu^- \le E[X]$. Let $\delta \ge 0$ and $\tilde\delta \in [0,1]$. Then 
  \begin{align*}
  \Pr[X \ge (1+\delta) \mu^+] &\le \exp\bigg(-\frac{\min\{\delta^2,\delta\} \mu^+}{3}\bigg),\\
  \Pr[X \le (1-\tilde\delta) \mu^-] &\le \exp\bigg(-\frac{\tilde\delta^2 \mu^-}{2}\bigg).
  \end{align*}
\end{theorem}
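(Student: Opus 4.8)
The plan is to use the standard exponential-moment (Bernstein--Chernoff) method, which yields both tails uniformly and is self-contained, rather than tracing through Hoeffding's original inequalities. For the upper tail I would fix a parameter $t > 0$ and apply Markov's inequality to the non-negative random variable $e^{tX}$, obtaining $\Pr[X \ge (1+\delta)\mu^+] \le e^{-t(1+\delta)\mu^+}\, E[e^{tX}]$. By independence, $E[e^{tX}] = \prod_{i=1}^n E[e^{tX_i}]$, and the crucial per-variable estimate exploits that each $X_i$ takes values in $[0,1]$: by convexity of $x \mapsto e^{tx}$, the curve lies below its secant on $[0,1]$, so $e^{tX_i} \le 1 + X_i(e^t - 1)$ pointwise, whence $E[e^{tX_i}] \le 1 + E[X_i](e^t-1) \le \exp(E[X_i](e^t-1))$ using $1+y \le e^y$. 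Multiplying these and using $E[X] \le \mu^+$ together with $e^t - 1 > 0$ gives $E[e^{tX}] \le \exp(\mu^+(e^t-1))$.

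Substituting the classical optimal choice $t = \ln(1+\delta)$ then yields the familiar form $\Pr[X \ge (1+\delta)\mu^+] \le \big(e^\delta/(1+\delta)^{1+\delta}\big)^{\mu^+}$. The lower tail is handled symmetrically by running the same argument on $-X$, i.e. applying Markov's inequality to $e^{-tX}$ with $t > 0$, using $E[e^{-tX_i}] \le \exp(E[X_i](e^{-t}-1))$, and now exploiting that $e^{-t}-1 < 0$ together with $E[X] \ge \mu^-$ to bound $E[e^{-tX}] \le \exp(\mu^-(e^{-t}-1))$. Choosing $t = -\ln(1-\tilde\delta)$ produces $\Pr[X \le (1-\tilde\delta)\mu^-] \le \big(e^{-\tilde\delta}/(1-\tilde\delta)^{1-\tilde\delta}\big)^{\mu^-}$.

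The remaining work, and the only genuinely fiddly part, is to simplify these sharp-but-awkward expressions into the stated clean forms. For the upper tail this reduces to the scalar inequality $\delta - (1+\delta)\ln(1+\delta) \le -\tfrac 13 \min\{\delta^2,\delta\}$ for all $\delta \ge 0$; I would split into the cases $\delta \in [0,1]$ (right side $-\delta^2/3$) and $\delta > 1$ (right side $-\delta/3$) and verify each by comparing derivatives. For the lower tail the corresponding claim is $-\tilde\delta - (1-\tilde\delta)\ln(1-\tilde\delta) \le -\tfrac 12 \tilde\delta^2$ for $\tilde\delta \in [0,1]$, which gives the slightly stronger constant $\tfrac 12$; here the Taylor expansion $\ln(1-\tilde\delta) = -\sum_{j\ge 1}\tilde\delta^j/j$ makes everything transparent, since a direct computation gives $-\tilde\delta - (1-\tilde\delta)\ln(1-\tilde\delta) = -\sum_{j\ge 2}\tilde\delta^j(\tfrac1{j-1}-\tfrac1j)$, and the $j=2$ term alone already accounts for $\tfrac12\tilde\delta^2$ while all terms are non-negative. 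These elementary function estimates are the crux; once they are in place, raising to the power $\mu^+$ (respectively $\mu^-$) and taking exponentials immediately yields the two claimed bounds.
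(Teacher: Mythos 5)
Your proof is correct: the exponential-moment (Chernoff) method with the secant bound $e^{tX_i} \le 1 + X_i(e^t-1)$, the choices $t = \ln(1+\delta)$ and $t = -\ln(1-\tilde\delta)$, and the two scalar inequalities you state (the upper-tail one holds by the case split and derivative comparison you describe, the lower-tail one by your power-series computation, whose $j=2$ term indeed already gives $\tfrac 12 \tilde\delta^2$) yield exactly the claimed bounds, with the signs of $e^t-1$ and $e^{-t}-1$ correctly exploited to pass from $E[X]$ to $\mu^+$ and $\mu^-$, and the degenerate cases $\delta = 0$ and $\tilde\delta \in \{0,1\}$ being trivial or handled by a limit. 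The paper itself gives no proof of this theorem---it cites Hoeffding (1963) and the book chapter of Doerr---and your argument is essentially the standard derivation found in those references, so the two approaches coincide.
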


A direct consequence of these Chernoff bounds are the following estimates, which state that the \onemax fitness of a search point sampled from $\Sample(f)$ is close to the expected \onemax fitness $\|f\|_1$. Since we mostly need such results for frequency vectors close to $(1, \ldots, 1)$, we formulate this result in terms of distances to the maximum value $n$. 

\begin{lemma}\label{lsample}
  Let $f \in [0,1]^n$, $D \coloneqq n - \|f\|_1$, $D^- \le D \le D^+$, $x \sim \Sample(f)$, and $d(x) \coloneqq n - \|x\|_1$. Then for all $\delta \ge 0$ and $\tilde\delta \in [0,1]$, we have
  \begin{align*}
  \Pr[d(x) \ge (1+\delta) D^+] & \le \exp(-\tfrac 13 \min\{\delta^2,\delta\} D^+),\\
  \Pr[d(x) \le (1-\tilde\delta) D^-] & \le \exp(-\tfrac 12 \tilde\delta^2 D^-).
  \end{align*}
\end{lemma}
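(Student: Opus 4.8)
The plan is to recognize Lemma~\ref{lsample} as nothing more than a restatement of the multiplicative Chernoff bounds of Theorem~\ref{tchernoff}, applied to the right random variable. The key observation is that the quantity of interest, $d(x) = n - \|x\|_1$, is naturally a sum of independent $[0,1]$-valued variables whose mean is exactly $D$. Concretely, I would write
\[
d(x) = n - \|x\|_1 = \sum_{i=1}^n (1 - x_i),
\]
and set $Y_i \coloneqq 1 - x_i$ for each $i \in [1..n]$. Since $x \sim \Sample(f)$ samples each bit independently with $\Pr[x_i = 1] = f_i$, the $Y_i$ are independent, take values in $\{0,1\} \subseteq [0,1]$, and satisfy $E[Y_i] = 1 - f_i$. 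Hence $d(x) = \sum_{i=1}^n Y_i$ is a sum of independent $[0,1]$-valued random variables with
\[
E[d(x)] = \sum_{i=1}^n (1 - f_i) = n - \|f\|_1 = D.
\]

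With this in hand, I would apply Theorem~\ref{tchernoff} to $X = d(x)$. The hypotheses of that theorem ask for values $\mu^+ \ge E[X]$ and $\mu^- \le E[X]$. Since we are given $D^- \le D \le D^+$ and $E[X] = D$, the choices $\mu^+ \coloneqq D^+$ and $\mu^- \coloneqq D^-$ are admissible. The upper-tail bound of Theorem~\ref{tchernoff} then reads, for any $\delta \ge 0$,
\[
\Pr[d(x) \ge (1+\delta) D^+] \le \exp\!\bigl(-\tfrac 13 \min\{\delta^2,\delta\} D^+\bigr),
\]
and the lower-tail bound reads, for any $\tilde\delta \in [0,1]$,
\[
\Pr[d(x) \le (1-\tilde\delta) D^-] \le \exp\!\bigl(-\tfrac 12 \tilde\delta^2 D^-\bigr),
\]
which are precisely the two claimed inequalities.

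There is essentially no genuine obstacle here; the only point requiring a moment's care is the choice to work with the complementary variables $Y_i = 1 - x_i$ rather than the $x_i$ themselves, so that the relevant mean is the distance $D$ to the maximum rather than $\|f\|_1$. This is exactly the reformulation ``in terms of distances to the maximum value $n$'' announced before the statement, and it is what lets us plug $D^+$ and $D^-$ directly into the $\mu^+$ and $\mu^-$ slots of the Chernoff bound. One should also briefly note that, since the bounds in Theorem~\ref{tchernoff} are increasing in $\mu^+$ (for the upper tail) and the admissible slack only grows as $\mu^-$ decreases, replacing the exact mean $D$ by the looser bounds $D^+$ and $D^-$ is harmless and in fact what the theorem is designed to allow.
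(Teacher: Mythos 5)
Your proof is correct and is essentially identical to the paper's own argument: both write $d(x)$ as a sum of independent binary random variables $1 - x_i$ with mean $1 - f_i$, note that $E[d(x)] = D$, and invoke Theorem~\ref{tchernoff} with $\mu^+ = D^+$ and $\mu^- = D^-$. The paper's proof is merely more terse, leaving the substitution of $D^+$ and $D^-$ into the Chernoff bound implicit.
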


\begin{proof}
  The random variable $n-\|x\|_1$ can be written as a sum $n-\|x\|_1 = \sum_{i=1}^n Z_i \eqqcolon Z$ of $n$ independent binary random variables $Z_1, \dots, Z_n$ such that $\Pr[Z_i = 1] = 1 - f_{i}$. By definition, $E[Z] = D$. The claims follow directly from Theorem~\ref{tchernoff}.
\end{proof}

We need the lemma above in particular to argue that the probability to sample a search point in the gap region of the $\jump$ function is small. For the $\jump_{nk}$ function, we observe that when $D \coloneqq n - \|f\|_1$ is at least $2k$, then the probability that $x \sim \Sample(f)$ lies in the gap, that is, satisfies $n-k < \|x\|_1 < n$, is $e^{-\Omega(k)}$. This result is sufficient for our purposes. We note that we could also obtain a low constant probability for sampling in the gap when $D \ge k + \Omega(\sqrt k)$ with large implicit constant. In~\cite[Lemma~3.2]{HasenohrlS18}, a gap probability of at most $1 - \frac{1}{\sqrt 2} \le 0.293$ is claimed already when $D \ge k+c$ for $c$ a sufficiently large constant and $k = o(n)$, but we are skeptical that this is true. Note that when $f = \frac{n-k-c}{n} \mathbf{1}_n$, then $X = n - \|x\|_1$ with $x \sim \Sample(f)$ follows a binomial distribution with parameters $n$ and $\frac{k+c}{n}$. Hence if $k$ is large compared to $c$, then $\Pr[X < k] = \Pr[X < E[X] - c] \approx \frac 12$. 

At one point, in the proof of Lemma~\ref{lconc}, we need an additive Chernoff bound not only for the sum of independent random variables, but also for all partial sums. Such bounds are less known despite the fact that many classical Chernoff bounds hold equally well in this more demanding fashion. The following result is from Hoeffding~\cite[Theorem~2 together with~(2.17)]{Hoeffding63}. It can also be found in~\cite{Doerr20bookchapter}, Theorems~1.10.9 and~1.10.31.

\begin{theorem}\label{tchernoffmax}
  Let $X_1, \ldots, X_n$ be independent random variables such that for all $i \in [1..n]$, the variable $X_i$ takes values in some interval $[a_i,b_i]$ and has expectation $E[X_i] = 0$. Then for all $\lambda \ge 0$, we have
  \begin{align*}
  &\Pr\left[\exists j \in [1..n] : \sum_{i=1}^j X_i \ge \lambda\right] \le \exp\left(-\frac{2 \lambda^2}{\sum_{i=1}^n (b_i-a_i)^2}\right),\\
  &\Pr\left[\exists j \in [1..n] : \sum_{i=1}^j X_i \le -\lambda\right] \le \exp\left(-\frac{2 \lambda^2}{\sum_{i=1}^n (b_i-a_i)^2}\right).
  \end{align*}
\end{theorem}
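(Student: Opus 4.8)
The statement is the maximal version of Hoeffding's inequality, so the plan is to combine a martingale maximal inequality with the usual exponential-moment method. First I would fix a parameter $s > 0$ and consider the partial sums $S_j \coloneqq \sum_{i=1}^j X_i$. Since the $X_i$ are independent with $E[X_i]=0$, the sequence $(S_j)_{j=0}^n$ (with $S_0 \coloneqq 0$) is a martingale with respect to its natural filtration, and because $x \mapsto e^{sx}$ is convex, conditional Jensen shows that $M_j \coloneqq \exp(s S_j)$ is a non-negative submartingale. The event $\{\exists j : S_j \ge \lambda\}$ is exactly $\{\max_{1\le j\le n} M_j \ge e^{s\lambda}\}$, so Doob's maximal inequality for non-negative submartingales gives
\[
\Pr\Big[\exists j : S_j \ge \lambda\Big] \le e^{-s\lambda}\, E[M_n] = e^{-s\lambda}\, E\Big[\exp\big(s\textstyle\sum_{i=1}^n X_i\big)\Big].
\]

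Next I would exploit independence to factorize the moment generating function, $E[\exp(s\sum_i X_i)] = \prod_{i=1}^n E[e^{s X_i}]$, and bound each factor by Hoeffding's lemma: for a mean-zero variable supported on $[a_i,b_i]$ one has $E[e^{sX_i}] \le \exp(s^2(b_i-a_i)^2/8)$. Writing $C \coloneqq \sum_{i=1}^n (b_i-a_i)^2$, this yields $\Pr[\exists j : S_j \ge \lambda] \le \exp(-s\lambda + s^2 C/8)$. It remains to choose $s$ to minimize the exponent; the derivative $-\lambda + sC/4$ vanishes at $s = 4\lambda/C$, and substituting this value collapses the exponent to $-2\lambda^2/C$, which is precisely the claimed bound. (The optimal $s$ is positive whenever $\lambda > 0$, and the case $\lambda = 0$ is trivial since any probability is at most $1 = \exp(0)$.)

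The lower-tail estimate follows by symmetry: applying the upper-tail bound just derived to the variables $-X_i$, which are again independent, mean zero, and supported on intervals $[-b_i,-a_i]$ of the same widths $b_i-a_i$, turns the event $\{\exists j : \sum_{i\le j} X_i \le -\lambda\}$ into $\{\exists j : \sum_{i\le j}(-X_i) \ge \lambda\}$ and reproduces the identical right-hand side. I expect the only genuinely non-routine ingredients to be the two classical lemmas on which the proof rests: Doob's maximal inequality, which is exactly what upgrades an ordinary tail bound into one controlling the maximal partial sum, and Hoeffding's lemma for the bounded mean-zero moment generating function. Everything else---the submartingale property, the factorization over independent factors, and the scalar optimization over $s$---is mechanical. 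Since the paper in any case cites Hoeffding directly for this result, a fully self-contained argument would only need to additionally include proofs (or references) for these two standard facts.
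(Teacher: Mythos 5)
Your proof is correct: the submartingale property of $\exp(s S_j)$, Doob's maximal inequality, Hoeffding's lemma for bounded mean-zero variables, and the optimization $s = 4\lambda/\sum_{i=1}^n(b_i-a_i)^2$ all work exactly as you describe, and the reflection $X_i \mapsto -X_i$ handles the lower tail. The paper gives no proof of its own---it cites Hoeffding (1963), Theorem~2 together with~(2.17)---and your argument is precisely the classical one behind that citation (Hoeffding's remark~(2.17) is exactly the observation that Doob's inequality upgrades the tail bound to the maximal partial sum), so your write-up simply makes the statement self-contained.
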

  
Finally, we state the additive drift theorem of He and Yao~\cite{HeY01} (see also the recent survey~\cite{Lengler20bookchapter}), which allows to translate an expected progress (or bounds on it) into bounds for expected hitting times.   
  
\begin{theorem}\label{tdrift}
  Let $S \subseteq \R_{\ge 0}$ be finite and $0 \in S$. Let $X_0, X_1, \ldots$ be a random process taking values in $S$. Let $\delta > 0$. Let $T = \inf\{t \ge 0 \mid X_t = 0\}$.
  \begin{enumerate}
  \item \label{it:driftub} If for all $t \ge 0$ and all $s \in S \setminus \{0\}$ we have $E[X_t - X_{t+1} \mid X_t = s] \ge \delta$, then $E[T] \le \frac{E[X_0]}{\delta}$.
  \item \label{it:driftlb} If for all $t \ge 0$ and all $s \in S \setminus \{0\}$ we have $E[X_t - X_{t+1} \mid X_t = s] \le \delta$, then $E[T] \ge \frac{E[X_0]}{\delta}$.
  \end{enumerate}
\end{theorem}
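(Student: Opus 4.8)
The plan is to derive both bounds from a single telescoping computation on the stopped process, exploiting the elementary identity $E[T] = \sum_{t \ge 0} \Pr[T > t]$. The intuition is that $E[X_0]$ is the total expected distance the process must travel down to $0$, that each step while $X_t \ne 0$ reduces this distance by an expected amount which is at least $\delta$ in case (i) and at most $\delta$ in case (ii), and that the number of such steps is precisely $T$; hence $E[X_0]$ and $\delta E[T]$ must be comparable.

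First I would fix a horizon $N$ and telescope the stopped process: since the increment $X_{\min\{t+1,T\}} - X_{\min\{t,T\}}$ equals $\mathbf{1}[T > t]\,(X_{t+1} - X_t)$ (it vanishes once $t \ge T$), one obtains
\[
  X_0 - X_{\min\{N,T\}} = \sum_{t=0}^{N-1} \mathbf{1}[T > t]\,(X_t - X_{t+1}).
\]
Now observe that $\{T > t\} = \{X_0 \ne 0, \dots, X_t \ne 0\}$ is determined by $X_0, \dots, X_t$, and on this event $X_t \ne 0$, so the drift hypothesis bounds the conditional expectation of $X_t - X_{t+1}$. Taking expectations and applying the hypothesis term by term therefore yields, in case (i),
\[
  E[X_0] - E[X_{\min\{N,T\}}] \ge \delta \sum_{t=0}^{N-1} \Pr[T > t],
\]
and the reverse inequality (with $\le$) in case (ii).

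For the upper bound (i), I would simply drop the nonnegative term $E[X_{\min\{N,T\}}]$, so that $E[X_0] \ge \delta \sum_{t=0}^{N-1} \Pr[T > t]$ for every $N$; since the partial sums are bounded by $E[X_0]/\delta$, the identity $E[T] = \sum_{t \ge 0}\Pr[T>t]$ gives $E[T] \le E[X_0]/\delta$ (and, as a byproduct, $T < \infty$ almost surely). For the lower bound (ii), if $\Pr[T = \infty] > 0$ then $E[T] = \infty$ while $E[X_0] \le \max S < \infty$, so the claim is trivial; otherwise $T < \infty$ almost surely, and since $S$ is finite the process is bounded, whence $X_{\min\{N,T\}} \to X_T = 0$ pointwise gives $E[X_{\min\{N,T\}}] \to 0$ by bounded convergence. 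Rearranging the displayed inequality as $E[X_0] \le E[X_{\min\{N,T\}}] + \delta \sum_{t=0}^{N-1}\Pr[T>t]$ and letting $N \to \infty$ then yields $E[X_0] \le \delta E[T]$.

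The only genuine obstacle is the passage to the limit $N \to \infty$, that is, controlling the process at the (possibly infinite) stopping time: this is exactly where the two structural hypotheses enter, namely nonnegativity of $X$ for the upper bound and finiteness of $S$ (hence boundedness of $X$, and $E[X_0] < \infty$) for the lower bound. A secondary point to state carefully is the term-by-term use of the drift condition: one should note that $\{T > t\}$ is measurable with respect to the history up to time $t$, so conditioning on $X_t$ on this event is legitimate; for a non-Markovian process one would instead phrase the hypothesis directly in terms of the natural filtration of $(X_t)$.
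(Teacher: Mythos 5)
The paper does not prove Theorem~\ref{tdrift}: it quotes the result from He and Yao~\cite{HeY01} (with a pointer to the survey~\cite{Lengler20bookchapter}), so your argument has to stand on its own. Structurally it is the classical stopped-process proof, and the analytic side is handled correctly: the telescoping identity, dropping the nonnegative term $E[X_{\min\{N,T\}}]$ in case~(i), and in case~(ii) the split according to whether $\Pr[T=\infty]>0$ followed by bounded convergence (using finiteness of $S$) are exactly the right steps, and you correctly locate where nonnegativity and finiteness of $S$ enter.

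However, the step you label a ``secondary point'' is a genuine gap, and it cannot be repaired without strengthening the hypothesis. You need
\[
E\bigl[\mathbf{1}[T>t]\,(X_t - X_{t+1})\bigr] \ge \delta \, \Pr[T>t],
\]
which is a drift bound conditional on the events $\{T>t\}\cap\{X_t=s\}$. The hypothesis only bounds $E[X_t - X_{t+1} \mid X_t = s]$, an average over \emph{all} histories sitting at $s$ at time $t$, including those that have already visited $0$ and returned (nothing in the statement makes $0$ absorbing). A bound on a conditional expectation does not pass to sub-events, and the measurability of $\{T>t\}$ with respect to $X_0,\dots,X_t$ does not help: that would be the tower property if the conditioning were on $X_0,\dots,X_t$, but it is only on $X_t$. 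This is not pedantry; read literally, the statement is false for non-Markovian processes, so no complete proof of it can exist. Counterexample to~(i): let $S=\{0,1\}$, let $N$ be large, and let $(X_t)$ be one of three deterministic paths, each with probability $\tfrac 13$: path $A$ with $X_t=1$ for $t\le N$ and $X_t=0$ for $t>N$; path $B$ with $X_t=1$ exactly for $t$ even; path $C$ with $X_t=1$ exactly for $t$ odd. For $t<N$, conditional on $\{X_t=1\}$ the process is $A$ (one-step drift $0$) or the bouncing path of matching parity (drift $1$), each with conditional probability $\tfrac 12$; for $t\ge N$ every path at state $1$ has drift $1$. So the hypothesis of~(i) holds with $\delta=\tfrac 12$, yet $E[X_0]=\tfrac 23$ while $E[T]=\tfrac 13\bigl((N+1)+1+0\bigr)=\tfrac{N+2}{3}$, exceeding $E[X_0]/\delta=\tfrac 43$ for all $N\ge 3$. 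Consequently, your closing remark---that the hypothesis should be phrased via the natural filtration---is not an optional reformulation but the only reading under which both the theorem and your term-by-term step are valid; alternatives are to assume $(X_t)$ Markovian, to assume $0$ absorbing, or to condition on $\{X_t=s,\,T>t\}$, i.e., to impose the hypothesis on the stopped process. Under any of these readings your estimate becomes the tower property and the rest of your proof goes through verbatim; this is also how the paper uses the theorem, since, e.g., in Lemma~\ref{lmiddle} the drift bound of Lemma~\ref{ldrift} is established conditional on the full frequency vector $f_t$ and hence for every history.
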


\subsection{Tools for the Analysis of the cGA}\label{ssec:toolscga}

In this section, we prove a number of general arguments for the analysis of the cGA. Since we expect that they are helpful for other runtime analyses of EDAs, we fix no general notation apart from the one defined in Algorithm~\ref{alg:cga} (at the price of occasionally restating a notation). 

We recall the notation of stochastic domination, which will be used several times in this work. For two random variables $X$ and $Y$, not necessarily defined over the same probability space, we say that $Y$ \emph{stochastically dominates}~$X$, written as $X \preceq Y$, if for all $\lambda \in \R$ we have 
\[\Pr[X \ge \lambda] \le \Pr[Y \ge \lambda].\]
Stochastic domination is a strong way of saying that $Y$ is not smaller than $X$. It implies that $E[X] \le E[Y]$. We refer to~\cite{Doerr19tcs} for more details.

\textbf{Boundary effects:}
When, in the notation of Algorithm~\ref{alg:cga}, the current frequency vector $f_t$ is such that $f_{it} \in \{\frac 1n, 1 - \frac 1n\}$ for some $i \in [1..n]$, then it may happen that $f'_{t+1} \notin [\frac 1n, 1-\frac 1n]$ and consequently $f_{t+1}$ does not satisfy the nice relation $f_{t+1} = f_t + \frac 1 \mu (y^1 - y^2)$. The following lemma quantifies these discrepancies. We here recall the common definition that for an $n$-dimensional vector $x$ and a subset $L \subseteq [1..n]$ of its index set, $x_{|L}$ denotes the restriction of $x$ to $L$, that is, the vector $(x_\ell)_{\ell \in L}$.
 
\begin{lemma}\label{lboundary}
  Let $P = 2 \frac 1n (1-\frac 1n)$. Let $t \ge 0$. Using the notation given in Algorithm~\ref{alg:cga}, consider iteration $t+1$ of a run of the cGA started with a fixed frequency vector $f_t \in [\frac 1n, 1-\frac 1n]^n$. 
  \begin{enumerate} 
  \item\label{it:boundaryL} Let $L = \{i \in [1..n] \mid f_{it} = \frac 1n\}$, $\ell = |L|$, and $M = \{i \in L \mid x^1_i \neq x^2_i\}$. Then $|M| \sim \Bin(\ell,P)$ and 
  \[\|f_{t+1}\|_1 - \|f'_{t+1}\|_1 \preceq \|(f_{t+1})_{|L}\|_1 - \|(f'_{t+1})_{|L}\|_1 \preceq \tfrac 1\mu |M| \preceq \tfrac 1\mu \Bin(n,\tfrac 2n).\] 
  \item\label{it:boundaryU} Let $L = \{i \in [1..n] \mid f_{it} = 1 - \frac 1n\}$, $\ell = |L|$, and $M = \{i \in L \mid x^1_i \neq x^2_i\}$. Then $|M| \sim \Bin(\ell,P)$ and \[\|f'_{t+1}\|_1 - \|f_{t+1}\|_1 \preceq \|(f'_{t+1})_{|L}\|_1 - \|(f_{t+1})_{|L}\|_1 \preceq \tfrac 1\mu |M| \preceq \tfrac 1\mu \Bin(n,\tfrac 2n).\]
  \end{enumerate} 
\end{lemma}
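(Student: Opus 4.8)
The plan is to prove both parts by a single pointwise argument, observing that the two leftmost ``$\preceq$'' relations in fact hold deterministically for every outcome of the two samples (so stochastic domination follows for free), while only the rightmost ``$\preceq$'' is a genuine distributional comparison. I would treat part~\ref{it:boundaryL} in full and obtain part~\ref{it:boundaryU} by the symmetric argument.

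First I would settle the distribution of $M$. Since $x^1$ and $x^2$ are sampled independently from $\Sample(f_t)$, for each coordinate $i$ the bits $x^1_i, x^2_i$ are independent $\mathrm{Bernoulli}(f_{it})$, and for $i \in L$ we have $f_{it} = \frac 1n$, so $\Pr[x^1_i \neq x^2_i] = 2 \frac 1n (1 - \frac 1n) = P$. As the coordinates are sampled independently and $M$ depends only on $x^1,x^2$ (not on the fitness-based ordering into $y^1,y^2$), the indicators $(\mathbf 1[x^1_i \neq x^2_i])_{i \in L}$ are independent, giving $|M| \sim \Bin(\ell, P)$ irrespective of $\calF$.

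Next I would analyze the per-coordinate discrepancy $f_{i,t+1} - f'_{i,t+1}$ caused by the capping. Here the well-behaved frequency assumption does the key work: since $f'_{i,t+1} \ge f_{it} - \frac 1\mu$, capping from below (the only source of a strictly positive discrepancy) can occur only where $f_{it} < \frac 1n + \frac 1\mu$, and by~\eqref{eq:fmu} the only admissible frequency below $\frac 1n + \frac 1\mu$ is $\frac 1n$ itself; hence positive discrepancies arise only at coordinates of $L$. At such an $i$, a nonzero update requires $x^1_i \neq x^2_i$, i.e.\ $i \in M$, and when $i \in M$ the update is $\pm \frac 1\mu$, with capping from below occurring only in the downward case and then producing a discrepancy of exactly $\frac 1\mu$; thus $0 \le f_{i,t+1} - f'_{i,t+1} \le \frac 1\mu \mathbf 1[i \in M]$ pointwise. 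Summing over $L$ yields the middle inequality $\|(f_{t+1})_{|L}\|_1 - \|(f'_{t+1})_{|L}\|_1 \le \frac 1\mu |M|$ deterministically. For the first inequality I would note that the only other coordinates where $f_{t+1} \neq f'_{t+1}$ are those capped from above, i.e.\ the set $U = \{i : f_{it} = 1 - \frac 1n\}$, where the discrepancy $f_{i,t+1} - f'_{i,t+1}$ is non-positive; dropping this non-positive contribution can only increase the value, so $\|f_{t+1}\|_1 - \|f'_{t+1}\|_1 \le \|(f_{t+1})_{|L}\|_1 - \|(f'_{t+1})_{|L}\|_1$ holds for every outcome. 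Finally, the rightmost relation is the standard binomial comparison $\Bin(\ell,P) \preceq \Bin(n,\frac 2n)$, valid because $\ell \le n$ and $P = \frac 2n(1 - \frac 1n) \le \frac 2n$; multiplying by the positive constant $\frac 1\mu$ preserves domination.

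Part~\ref{it:boundaryU} then follows by exchanging the roles of the two boundaries: now $L$ is the set of coordinates at $1 - \frac 1n$, capping from above produces the positive discrepancies $f'_{i,t+1} - f_{i,t+1}$ confined to $L$, and the contribution of the lower boundary carries the favourable (non-positive) sign. The main obstacle I anticipate is not any hard computation but the careful deployment of the well-behaved frequency assumption: it is precisely what guarantees that below-capping is confined to $L$ (rather than also occurring at frequencies just above $\frac 1n$), which is what upgrades the restriction-to-$L$ step from an in-expectation estimate to the pointwise inequality underlying the claimed stochastic domination.
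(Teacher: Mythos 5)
Your proof is correct and takes essentially the same route as the paper's: the well-behaved frequency assumption confines below-capping to coordinates $i \in L$ with $x^1_i \neq x^2_i$, giving per-coordinate bounds $0 \le f_{i,t+1} - f'_{i,t+1} \le \frac 1\mu \mathbf{1}[i \in M]$ whose sum yields the claimed chain, with $|M| \sim \Bin(\ell,P)$ and part (ii) by symmetry. The only difference is presentational: you make explicit that the first two relations hold pointwise (so domination is automatic) and spell out the final comparison $\Bin(\ell,P) \preceq \Bin(n,\frac 2n)$, both of which the paper leaves implicit.
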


\begin{proof}
By symmetry, it suffices to prove the first part. For an $i \in L$, we have $\Pr[x^1_i \neq x^2_i] = 2 \frac 1n (1-\frac 1n) = P$. Since the bits of $x^1$ and $x^2$ were sampled independently, we have $|M| \sim \Bin(\ell,P)$.

By the well-behaved frequency assumption and the fact that $f'_{t+1} = f_t + \frac{1}{\mu} (y^1 - y^2)$ for binary vectors $y^1$ and $y^2$, we can have $f'_{i,t+1} < \frac 1n$ and thus $f_{i,t+1} > f'_{i,t+1}$ only when $f_{it} = \frac 1n$ and $x^1_i \neq x^2_i$, that is, when $i \in M$. This shows $\|f_{t+1}\|_1 - \|f'_{t+1}\|_1 \preceq \|(f_{t+1})_{|L}\|_1 - \|(f'_{t+1})_{|L}\|_1$. 

Since $f_{i,t+1} > f'_{i,t+1}$ implies $f_{i,t+1} = f'_{i,t+1} + \frac 1\mu$, we also have $\|(f_{t+1})_{|L}\|_1 - \|(f'_{t+1})_{|L}\|_1 \preceq \frac 1\mu |M| \preceq \tfrac 1\mu \Bin(n,\tfrac 2n)$.  
\end{proof}

\textbf{Sampling a particular solution:}
The following two elementary estimates give an upper and a lower bound on the probability to sample a particular search point $x^*$. Note that the quantity $D = \|x^* - f\|_1$ is our usual distance measure $D = n - \|f\|_1$ when $x^* = (1, \dots, 1)$. 

\begin{lemma}\label{loptUB}
  Let $x^* \in \{0,1\}^n$. Let $f \in [0,1]^n$ and $D = \|x^* - f\|_1$. Let $x \sim \Sample(f)$. Then $\Pr[x = x^*] \le \exp(-D)$.
\end{lemma}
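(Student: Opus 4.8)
The plan is to exploit the independence of the coordinates under $\Sample(f)$ and reduce everything to a per-coordinate identity, followed by the elementary inequality $1 - z \le e^{-z}$. First I would write out the sampling probability explicitly. Since $x \sim \Sample(f)$ samples each bit independently with $\Pr[x_i = 1] = f_i$, we have
\[
\Pr[x = x^*] = \prod_{i : x^*_i = 1} f_i \prod_{i : x^*_i = 0} (1 - f_i) = \prod_{i=1}^n \Pr[x_i = x^*_i].
\]

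The key step is to observe that each factor can be rewritten in terms of the $\ell_1$-distance contribution of coordinate $i$. Concretely, if $x^*_i = 1$ then $\Pr[x_i = x^*_i] = f_i = 1 - (1 - f_i) = 1 - |x^*_i - f_i|$, and if $x^*_i = 0$ then $\Pr[x_i = x^*_i] = 1 - f_i = 1 - |x^*_i - f_i|$. Hence in both cases $\Pr[x_i = x^*_i] = 1 - |x^*_i - f_i|$, and the per-coordinate agreement probability is exactly one minus the coordinate's contribution to $D = \|x^* - f\|_1 = \sum_{i=1}^n |x^*_i - f_i|$.

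Putting these together, I would bound the product using $1 - z \le e^{-z}$, which is valid for all real $z$ (in particular for each $z = |x^*_i - f_i| \in [0,1]$), to get
\[
\Pr[x = x^*] = \prod_{i=1}^n \bigl(1 - |x^*_i - f_i|\bigr) \le \prod_{i=1}^n \exp\bigl(-|x^*_i - f_i|\bigr) = \exp\Bigl(-\sum_{i=1}^n |x^*_i - f_i|\Bigr) = \exp(-D),
\]
which is the claimed bound.

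I do not expect any genuine obstacle here: the statement is elementary and the proof is a two-line computation once the per-coordinate identity $\Pr[x_i = x^*_i] = 1 - |x^*_i - f_i|$ is noticed. The only point worth stating carefully is this identity, which unifies the two cases $x^*_i \in \{0,1\}$ and is precisely what makes the exponent collapse into the $\ell_1$-distance $D$; everything else follows from independence and the standard inequality $1 - z \le e^{-z}$.
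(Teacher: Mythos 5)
Your proof is correct and follows exactly the paper's argument: write $\Pr[x = x^*] = \prod_{i=1}^n (1 - |x^*_i - f_i|)$ and apply $1 - z \le e^{-z}$ coordinate-wise so the exponents sum to $D$. You spell out the per-coordinate identity slightly more explicitly than the paper, but the approach is identical.
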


\begin{proof}
  The probability to sample $x^*$ is 
  \begin{align*}
    \prod_{i=1}^n (1 &- |x^*_i - f_{it}|) \le \prod_{i=1}^n \exp(-|x^*_i - f_{it}|) \\
    &= \exp\left(-\sum_{i=1}^n |x^*_i - f_{it}|\right) = \exp(-\|x^* - f\|_1) = \exp(-D).
  \end{align*}
\end{proof}

To ease reading, we formulate the following estimate only for $x^* = (1, \dots, 1)$, but it is clear that by symmetry analoguous statements hold for arbitrary $x^*$ (when $\|x^* - f\|_\infty \le 1 - c$).
\begin{lemma}\label{lopt}
  Let $0 < c < 1$, $f \in [c,1]^n$, and $D = n - \|f\|_1$. Let $x \sim \Sample(f)$. Then $\Pr[x = (1, \dots, 1)] \ge c^{D/(1-c)}$.
\end{lemma}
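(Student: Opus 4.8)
The plan is to start from the exact sampling probability and reduce the claim to a one-variable inequality handled by concavity. Since the bits are sampled independently with $\Pr[x_i = 1] = f_i$, we have $\Pr[x = (1,\dots,1)] = \prod_{i=1}^n f_i$. Taking logarithms, the target bound $\prod_{i=1}^n f_i \ge c^{D/(1-c)}$ becomes $\sum_{i=1}^n \ln f_i \ge \frac{\ln c}{1-c}\, D$, and since $\|f\|_1 = \sum_{i=1}^n f_i$ gives $D = \sum_{i=1}^n (1 - f_i)$, it suffices to prove the term-wise inequality
\[
\ln f_i \ge \frac{\ln c}{1-c}\,(1 - f_i) \qquad \text{for each } i \in [1..n],
\]
because summing these over $i$ reproduces exactly the desired logarithmic bound.

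The second step is to establish the corresponding one-variable inequality $\ln t \ge \frac{\ln c}{1-c}(1-t)$ for every $t \in [c,1]$, and then apply it with $t = f_i \in [c,1]$. The right-hand side, as a function of $t$, is affine and agrees with $\ln t$ at both endpoints: at $t = 1$ both sides vanish, and at $t = c$ both equal $\ln c$. Thus the line $t \mapsto \frac{\ln c}{1-c}(1-t)$ is precisely the chord of the graph of $\ln$ through the points $(c, \ln c)$ and $(1,0)$. Because $\ln$ is concave on $(0,\infty)$, its graph lies on or above each such chord over the spanning interval, which yields $\ln t \ge \frac{\ln c}{1-c}(1-t)$ on $[c,1]$. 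Summing over the coordinates then completes the argument.

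There is no genuine obstacle here beyond recognizing that the exponent $D/(1-c)$ is explained by a \emph{linear} per-coordinate lower bound on $\ln$, whose slope is pinned down by the two endpoints $c$ and $1$. The only point that needs a little care is the sign bookkeeping: both $\ln c$ and $\ln t$ are negative, so one must check that it is concavity (not convexity) that places $\ln$ above the chord. If one prefers to avoid invoking concavity by name, the same conclusion follows by verifying that $h(t) := \ln t - \frac{\ln c}{1-c}(1-t)$ satisfies $h(c) = h(1) = 0$ together with $h''(t) = -1/t^2 < 0$, so that $h$ is concave and therefore nonnegative throughout $[c,1]$.
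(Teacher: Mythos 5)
Your proof is correct and follows essentially the same route as the paper: the paper writes each $f_i$ as the convex combination $\alpha_i c + (1-\alpha_i)\cdot 1$ with $\alpha_i = \frac{1-f_i}{1-c}$ and applies concavity of $\log$ to get $f_i \ge c^{\alpha_i}$, which is exactly your chord inequality $\ln f_i \ge \frac{\ln c}{1-c}(1-f_i)$ in multiplicative form. The two arguments differ only in presentation (convex-combination weights versus chord-above-graph), so there is nothing substantive to add.
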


\begin{proof}
  For $i \in [1..n]$, let $\alpha_i \coloneqq \frac{1-f_i}{1-c\phantom{_i}}$. Then $f_i = \alpha_i c + (1-\alpha_i) 1$ is the unique representation of $f_i$ as convex combination of $c$ and $1$. Since the logarithm is concave, we have 
  \[\log f_i = \log(\alpha_i c + (1-\alpha_i) 1) \ge \alpha_i \log c + (1-\alpha_i) \log 1 = \log(c^{\alpha_i}).\] 
  Since the logarithm is monotonically increasing, this inequality implies ${f_i \ge c^{\alpha_i}}$. Consequently, 
  \begin{align*}
  \Pr[x = (1, \dots, 1)] & = \prod_{i=1}^n f_i  \ge \prod_{i=1}^n c^{\alpha_i} = c^{\sum_{i=1}^n \alpha_i} = c^{(n - \|f\|_1)/(1-c)}.
  \end{align*}
\end{proof}

\textbf{Time to sample a search point when close:}
We use the above lower bound on the probability to sample $(1, \dots, 1)$ to prove the following result. It shows that when $\mu$ is large enough and $D_t \coloneqq n - \|f_t\|_1$ is small enough, then regardless of the fitness function we sample the search point $(1, \ldots, 1)$ quickly with high probability. The main argument is that when $\mu$ is sufficiently large, then $D_t$ stays small sufficiently long.

\begin{lemma}\label{lfinish1}
  Let $\mu$ be at least $\sqrt n$, but polynomially bounded in $n$. Consider a run of the cGA with hypothetical population size $\mu$ on an arbitrary fitness function. Assume that at some time $t_0$, we have $D_{t_0} := n - \|f_{t_0}\|_1 \le \frac 1 {10} \ln n$ and $f_{t_0} \in [\frac 13, 1]^n$. Then with probability at least $1 - n^{-\omega(1)}$, the search point $(1, \dots, 1)$ is sampled in the next $\frac{D_{t_0} \mu}{2 \ln(n)^2}$ iterations.
\end{lemma}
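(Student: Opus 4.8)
The plan is to run the cGA for exactly $T \coloneqq \frac{D_{t_0}\mu}{2\ln(n)^2}$ iterations and to combine two observations: during this window (a)~the distance $D_t \coloneqq n-\|f_t\|_1$ stays $O(\log n)$ and all frequencies stay bounded away from the lower boundary, and (b)~conditional on this, a sample equals $(1,\dots,1)$ with only polynomially small probability, while the window is long enough ($T = \Omega(\sqrt n/\log^2 n)$) that the optimum is then hit with probability $1-n^{-\omega(1)}$. This argument has to be, and will be, oblivious to the fitness function: the only place where $\calF$ enters is through the worst-case bound on how fast $D_t$ can grow.

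The easy ingredient is the frequency control and the sampling estimate. Since one iteration changes any single frequency by at most $\frac1\mu$, over the whole window each frequency moves by at most $\frac T\mu = \frac{D_{t_0}}{2\ln(n)^2}\le\frac{1}{20\ln n}=o(1)$; starting from $f_{t_0}\in[\frac13,1]^n$, every frequency therefore stays in $[\frac14,1]$ throughout, deterministically. Suppose now that $D_t\le\theta\coloneqq 2D_{t_0}\le\frac15\ln n$ at some iteration $t$ of the window. Then Lemma~\ref{lopt} with $c=\frac14$ gives that a sample $x\sim\Sample(f_t)$ equals $(1,\dots,1)$ with probability at least $(\frac14)^{(4/3)D_t}\ge(\frac14)^{(4/3)\theta}\ge n^{-2/5}\eqqcolon p$, an exponent strictly below $\frac12$. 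Since $D_{t_0}\ge 1$ (frequencies are at most $1-\frac1n$) and $\mu\ge\sqrt n$, we have $T\ge\frac{\sqrt n}{2\ln(n)^2}$, hence $pT\ge\frac{n^{1/10}}{2\ln(n)^2}\to\infty$; so, as long as $D_t\le\theta$ holds throughout, the probability of never sampling $(1,\dots,1)$ in the $T$ iterations is at most $(1-p)^T\le\exp(-pT)=n^{-\omega(1)}$.

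The technical heart is to establish that $D_t$ indeed stays below $\theta=2D_{t_0}$ during the whole window with probability $1-n^{-\omega(1)}$. Only the bits in which the two samples differ can move a frequency, so $D_{t+1}-D_t\le\frac{m_t}\mu$ with $m_t\coloneqq|\{i:x^1_i\ne x^2_i\}|$ (the upper-boundary capping, cf.\ Lemma~\ref{lboundary}, can only decrease $D_{t+1}$ further, and by the first paragraph the lower boundary is never reached); moreover $E[m_t\mid f_t]=\sum_i 2f_{it}(1-f_{it})\le 2D_t$. Let $\tau$ be the first iteration of the window at which $D_t$ exceeds $\theta$. Before $\tau$ the conditional expected increment is at most $\frac{2\theta}\mu$, so the expected total increase up to time $t_0+T$ is at most $\frac{2\theta T}\mu=\frac{2D_{t_0}^2}{\ln(n)^2}\le\frac1{50}$, whereas reaching $\theta$ requires a total increase of at least $\theta-D_{t_0}=D_{t_0}$, i.e.\ a factor $\Omega(\log n)$ above the expectation.

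The main obstacle is precisely that the drift bound $E[m_t\mid f_t]\le 2D_t$ is self-referential: it only yields $E[m_t\mid f_t]\le 2\theta$ on the very event $\{D_t\le\theta\}$ we are trying to control, so the multiplicative Chernoff bound (Theorem~\ref{tchernoff}) cannot be applied to the dependent sum $\sum_t m_t$ directly. I would handle this by a stopping-time argument. Either one dominates the stopped increments $m_t\,\mathbb 1[\tau>t]$, which conditionally on the past are Poisson-binomial with mean at most $2\theta$, by independent variables (conditional stochastic domination, see~\cite{Doerr19tcs}) and then applies Theorem~\ref{tchernoff}; or, equivalently, one controls the exponential supermartingale $\exp(\mu(D_{t\wedge\tau}-D_{t_0}))$: before $\tau$, using $D_{t+1}-D_t\le m_t/\mu$ and $E[e^{m_t}\mid f_t]=\prod_i(1+2f_{it}(1-f_{it})(e-1))\le\exp((e-1)2\theta)$, the one-step factor is at most $\exp((e-1)2\theta)$, so $E[\exp(\mu(D_{(t_0+T)\wedge\tau}-D_{t_0}))]\le\exp((e-1)2\theta T)$. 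Markov's inequality then gives $\Pr[\tau\le t_0+T]\le\exp((e-1)2\theta T-\mu D_{t_0})$; since $(e-1)2\theta T=(e-1)\frac{2D_{t_0}^2}{\ln(n)^2}\,\mu=o(\mu D_{t_0})$, this is at most $\exp(-\frac12\mu D_{t_0})\le\exp(-\frac12\sqrt n)=n^{-\omega(1)}$. A union bound finishes the proof: writing $A$ for the event that $(1,\dots,1)$ is sampled within the window, $\Pr[\overline A]\le\Pr[\overline A\cap\{\tau>t_0+T\}]+\Pr[\tau\le t_0+T]$, where the first term is $\le(1-p)^T=n^{-\omega(1)}$ (each iteration before $\tau$ produces the optimum with conditional probability at least $p$, by the second paragraph) and the second is $n^{-\omega(1)}$ by the bound just shown.
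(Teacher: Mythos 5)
Your proof is correct, and it shares the paper's overall skeleton: a deterministic argument that the window of $T = D_{t_0}\mu/(2\ln(n)^2) = O(\mu/\log n)$ iterations moves each frequency by only $o(1)$, so all frequencies stay above a constant; a high-probability argument that $D_t \le 2D_{t_0}$ throughout the window; and then Lemma~\ref{lopt} together with the window length $T \ge \sqrt{n}/(2\ln(n)^2)$ and $D_{t_0}\ge 1$ to conclude that the optimum is sampled with probability $1-n^{-\omega(1)}$. Where you genuinely differ is in the middle step. The paper bounds the one-step increment of $D_t$ directly: by Lemma~\ref{lsample}, whenever $D_t \le \ln(n)^2$, both samples have distance less than $2\ln(n)^2$ from the all-ones string with probability $1-n^{-\omega(1)}$, so at most that many frequencies can decrease, giving $D_{t+1} \le D_t + \tfrac{2}{\mu}\ln(n)^2$; a union bound over the $T$ iterations and an induction then keep $D_t \le 2D_{t_0}$. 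You instead run an optional-stopping/exponential-moment argument on $D_t$ itself: the pathwise bound $D_{t+1}-D_t \le m_t/\mu$, the moment-generating-function estimate $E[e^{m_t}\mid f_t] \le \exp(2(e-1)D_t)$, and Markov's inequality applied to the stopped process $\exp(\mu(D_{t\wedge\tau}-D_{t_0}))$ give $\Pr[\tau \le t_0+T] \le \exp(2(e-1)\theta T - \mu D_{t_0}) \le \exp(-\tfrac12\sqrt n)$. Your route is self-contained (it never invokes Lemma~\ref{lsample}), it handles the self-referential nature of the drift bound cleanly via the stopping time, and it yields an exponentially small rather than merely superpolynomially small failure probability for this step; the paper's route is shorter, since each step's increment bound falls out of a Chernoff bound on the sampled distances, at the price of a per-step union bound that can only give $n^{-\omega(1)}$.

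One small slip worth fixing: your parenthetical claim that the upper-boundary capping ``can only decrease $D_{t+1}$ further'' has the direction reversed. Capping at $1-\tfrac 1n$ makes $f_{i,t+1}$ smaller than the uncapped value $f'_{i,t+1}$, hence makes $D_{t+1}$ \emph{larger}, not smaller. This does not harm your proof, because the inequality you actually use holds position-wise regardless of capping: if $x^1_i \ne x^2_i$ then $f_{it}-f_{i,t+1} \le \tfrac 1\mu$ (a decrease is at most $\tfrac1\mu$, possibly less if capped below, and an attempted increase, even if capped above, never yields $f_{i,t+1} < f_{it}$), and if $x^1_i = x^2_i$ then $f_{i,t+1} = f_{it}$; summing gives $D_{t+1}-D_t \le m_t/\mu$ as claimed.
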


\begin{proof}
  We first argue that if at some time $t$ we have $D_t \le \ln(n)^2$, then $\Pr[D_{t+1} \ge D_t + \frac 2\mu \ln(n)^2] \le n^{-\omega(1)}$. By Lemma~\ref{lsample}, we have 
  \[\Pr[d(x^j) \ge 2\ln(n)^2] \le \exp(-\tfrac 13 \ln(n)^2) = n^{-\omega(1)}\]
   for $j = 1,2$. Consequently, with probability $1 - n^{-\omega(1)}$, we have both $\|x^1\|_1 > n - 2\ln(n)^2$ and $\|x^2\|_1 > n - 2\ln(n)^2$. Now regardless of how $x^1$ and $x^2$ are sorted into $(y^1,y^2)$, less than $2 \ln(n)^2$ frequencies are decreased in the frequency update of this iteration. We conclude that $D_{t+1} < D_t + \frac 2\mu \ln(n)^2$.
  
  Let $L = \lfloor \frac{D_{t_0}}{(2/\mu) \ln(n)^2} \rfloor$. By a union bound, with probability 
  \[1 - Ln^{\omega(1)} \ge 1 - n^{\omega(1)},\] we have $D_{t+1} \le D_t + \frac 2\mu \ln(n)^2$ in all iterations $t = t_0, \dots, t_0 + L -1$ that start with $D_t \le 2 D_{t_0}$. Let us condition on this in the following. Then by induction, we have $D_t \le D_{t_0} + (t - t_0) \frac 2\mu \ln(n)^2 \le 2 D_{t_0}$ throughout these $L$ iterations.
  
  Note that $L = O(\frac{\mu}{\log(n)})$, hence throughout this period we also have $f_{it} \ge \tfrac 13 - \tfrac 1 \mu L \ge 0.32$ (assuming $n$ to be sufficiently large) for all ${i \in [1..n]}$. By Lemma~\ref{lopt}, the probability that a fixed search point sampled in this period equals $(1, \dots, 1)$, is at least 
  \begin{align*}
  0.32^{2D_{t_0} / 0.68} &\ge 0.32^{0.2 \ln(n) / 0.68} \\
  &= \exp(0.2 \ln(n) \ln(0.32) / 0.68) \ge n^{0.2 \ln(0.32) / 0.68} \eqqcolon q.
  \end{align*}
  Since $0.2 \ln(0.32) / 0.68 > -0.34$ and $L = \Omega(\frac{\mu}{\log(n)^2})$, the probability that $(1, \dots, 1)$ is not sampled in this period is at most 
  \[(1 - q)^{2L} \le \exp(-2qL) \le \exp(- n^{0.2 \ln(0.32) / 0.68} \cdot \Omega(\tfrac{\mu}{\ln(n)^2})) \le \exp(-\Omega(n^{0.16})).\]
\end{proof}

\textbf{Sampling search points with different $1$-norm:}
To argue that the cGA makes at least some small progress, we shall use the following blunt estimate for the probability that two bit strings $x, y \sim \Sample(f)$ sampled from the same product distribution have a different distance from the all-ones string (and, by symmetry, from any other string, but this is a statement which we do not need here).

\begin{lemma}\label{ldiff}
  Let $n \in \N$, $m \in [\frac n2..n]$, and $f \in [\frac 1n,1-\frac1n]^m$. Let $x^1,x^2 \sim \Sample(f)$ be independent. Then $\Pr[\|x^1\|_1 \neq \|x^2\|_1] \ge \frac 1 {16}$.
\end{lemma}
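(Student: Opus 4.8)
The plan is to work with the difference $D \coloneqq \|x^1\|_1 - \|x^2\|_1 = \sum_{i=1}^m Y_i$, where $Y_i \coloneqq x^1_i - x^2_i$, and to lower-bound $\Pr[D \neq 0]$. Each $Y_i$ is an independent, \emph{symmetric} random variable taking the value $0$ with probability $1 - 2f_i(1-f_i)$ and each of the values $+1, -1$ with probability $f_i(1-f_i)$. The key structural observation is a symmetrization: let $N$ denote the (random) number of indices $i$ with $Y_i \neq 0$, i.e.\ the number of bits in which $x^1$ and $x^2$ differ. Conditioned on $N = \ell$ (indeed on the precise set of differing indices), the nonzero $Y_i$ are independent uniform $\pm 1$ signs, since $\Pr[Y_i = 1 \mid Y_i \neq 0] = \tfrac 12$. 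Hence, given $N = \ell$, the variable $D$ is distributed as a sum of $\ell$ independent Rademacher variables.

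This makes the conditional collision probability easy to control. If $\ell$ is odd, a sum of $\ell$ signs is never $0$, so $\Pr[D = 0 \mid N = \ell] = 0$; if $\ell \ge 2$ is even, then $\Pr[D = 0 \mid N = \ell] = \binom{\ell}{\ell/2} 2^{-\ell} \le \tfrac 12$, the bound being tight at $\ell = 2$. Therefore $\Pr[D = 0 \mid N = \ell] \le \tfrac 12$ for every $\ell \ge 1$, whereas of course $\Pr[D = 0 \mid N = 0] = 1$. Summing over $\ell$ gives
\[
\Pr[D = 0] \le \Pr[N = 0] + \tfrac 12 \Pr[N \ge 1] = \tfrac 12 + \tfrac 12 \Pr[N = 0],
\]
so that $\Pr[\|x^1\|_1 \neq \|x^2\|_1] = \Pr[D \neq 0] \ge \tfrac 12 \bigl(1 - \Pr[N = 0]\bigr)$.

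It then remains to show that the two samples differ somewhere with decent probability, i.e.\ to bound $\Pr[N = 0]$ away from $1$. Here I would use that $N = 0$ means $x^1_i = x^2_i$ for all $i$, so $\Pr[N=0] = \prod_{i=1}^m \bigl(1 - 2f_i(1-f_i)\bigr)$. Since $f_i \in [\tfrac 1n, 1 - \tfrac 1n]$, each factor satisfies $1 - 2 f_i(1-f_i) \le 1 - \tfrac 2n\bigl(1 - \tfrac 1n\bigr)$, and using $m \ge \tfrac n2$ together with $1 + x \le e^x$ yields
\[
\Pr[N = 0] \le \Bigl(1 - \tfrac 2n\bigl(1 - \tfrac 1n\bigr)\Bigr)^{n/2} \le \exp\!\Bigl(-\bigl(1 - \tfrac 1n\bigr)\Bigr) \le e^{-1/2}
\]
for all $n \ge 2$ (and $n \ge 2$ is forced, since otherwise the frequency interval is empty). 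Plugging this in gives $\Pr[\|x^1\|_1 \neq \|x^2\|_1] \ge \tfrac 12(1 - e^{-1/2}) > \tfrac 1{16}$, which even leaves generous slack against the claimed constant.

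The only genuinely substantive step is the symmetrization observation that, conditioned on the differing coordinates, $D$ is a Rademacher sum and hence equals zero with probability at most $\tfrac 12$; everything after that is a routine crude estimate, which is consistent with the lemma being a deliberately blunt bound. A minor point to handle carefully is the degenerate small-$n$ regime and the tightness at $\ell = 2$ in the binomial estimate $\binom{\ell}{\ell/2}2^{-\ell}\le\tfrac12$, but neither causes real difficulty.
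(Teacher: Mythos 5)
Your proof is correct, and it takes a genuinely different route from the paper's. The paper argues via a block decomposition: after reducing by symmetry to the case $f_{[1..m]} \le \frac m2$ and sorting the frequencies, it picks a minimal prefix $[1..\ell]$ whose frequency mass lies in $[\frac 18, \frac{19}{24}]$, lower-bounds $\Pr[\|x^1\|_1 \ne \|x^2\|_1]$ by the probability that the two samples' sums differ on exactly one of prefix/suffix, reduces this to $2q_0q_1$ with $q_j = \Pr[x^1_{[1..\ell]} = j]$, and finishes with Bernoulli's inequality and the unimodality of $z \mapsto z(1-z)^2$, which produces exactly the constant $\frac 1{16}$. You instead exploit the coordinate-wise exchange symmetry of the two i.i.d.\ samples: since $\Pr[x^1_i = 1, x^2_i = 0] = \Pr[x^1_i = 0, x^2_i = 1] = f_i(1-f_i)$ and coordinates are independent, conditioning on the set of differing coordinates turns $\|x^1\|_1 - \|x^2\|_1$ into a Rademacher sum, which vanishes with conditional probability at most $\binom{\ell}{\ell/2}2^{-\ell} \le \frac 12$ once at least one coordinate differs (maximal at $\ell = 2$, decreasing in even $\ell$, zero for odd $\ell$); the hypotheses $f \in [\frac 1n, 1-\frac 1n]^m$ and $m \ge \frac n2$ then enter only through the single estimate $\Pr[x^1 = x^2] \le \exp(-(1-\frac 1n)) \le e^{-1/2}$. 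Your route is shorter and more modular (the lemma's quantitative hypotheses are quarantined into the one collision bound), and it yields the stronger constant $\frac 12 (1 - e^{-1/2}) \approx 0.19$ in place of $\frac 1{16}$; what the paper's computation offers in exchange is only that it works with unconditioned events and direct product estimates throughout, at the price of the sorting/prefix construction and the weaker constant. The individual steps of your argument all check out, including the degenerate case (for $n=1$ the frequency range is empty, so $n \ge 2$ is forced) and the tightness check at $\ell = 2$.
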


\begin{proof}
  For all $v \in \R^m$ and $a, b \in [1..m]$ with $a \le b$ we use the abbreviation $v_{[a..b]} \coloneqq \sum_{i=a}^b v_i$. We first argue that by symmetry, we can assume that $f_{[1..m]} \le \frac m2$. Indeed, let $f_{[1..m]} > \frac m2$ and assume the claim shown for the case that $f_{[1..m]} \le \frac m2$. Let $\bar f = \mathbf{1}_m - f$ and $\bar x^1, \bar x^2 \sim \Sample(\bar f)$ independent. Then
  \begin{align*}
  \Pr[\|x^1\|_1 = \|x^2\|_1] 
  & = \sum_{i = 0}^m \Pr[\|x^1\|_1 = i = \|x^2\|_1] \\
  & = \sum_{i = 0}^m \Pr[\|x^1\|_1 = i] \cdot \Pr[\|x^2\|_1 = i] \\
  & = \sum_{i = 0}^m \Pr[\|\bar x^1\|_1 = m - i] \cdot \Pr[\|\bar x^2\|_1 = m - i] \\
  & = \sum_{i = 0}^m \Pr[\|\bar x^1\|_1 = m - i = \|\bar x^2\|_1] \\
  & = \Pr[\|\bar x^1\|_1 = \|\bar x^2\|_1] \le \tfrac{15}{16},
  \end{align*}
  where the last estimate follows from our assumption and the fact that $\bar f_{[1..m]} \le \frac m2$. This shows the claim for $f$ and justifies that in the remainder, we assume $f_{[1..m]} \le \frac m2$.
  
  Without loss of generality, we may further assume that $f_i \le f_{i+1}$ for all $i \in [1..m-1]$. We have $f_{\lfloor m/4 \rfloor} \le \frac 23$ as otherwise 
  \[f_{[1..m]} \ge f_{[\lfloor m/4 \rfloor+1..m]} > \tfrac 23  (m - \lfloor \tfrac m4 \rfloor) \ge \tfrac 23 \cdot \tfrac 34 m = \tfrac m2,\]
  contradicting our assumption. 
  
  Let $\ell$ be minimal such that $f_{[1..\ell]} \ge \frac 18$. Since $\ell \le \frac n8 \le \frac m4$, we have $f_\ell \le \frac 23$ and thus $f_{[1..\ell]} \le \frac 18 + \frac 23 = \frac{19}{24}$.
  
  For $j \in \{0,1\}$ let $q_j = \Pr[x^1_{[1..\ell]}=j] = \Pr[x^2_{[1..\ell]}=j]$. We compute 
  \begin{align*}
  \Pr[\|x^1\|_1 &\neq \|x^2\|_1] \\
  & \ge \Pr[x^1_{[1..\ell]} = x^2_{[1..\ell]} \wedge x^1_{[\ell+1..n]} \neq x^2_{[\ell+1..n]}]\\
  & \quad + \Pr[x^1_{[1..\ell]} \neq x^2_{[1..\ell]} \wedge x^1_{[\ell+1..n]} = x^2_{[\ell+1..n]}]\\
  & = \Pr[x^1_{[1..\ell]} = x^2_{[1..\ell]}] \Pr[x^1_{[\ell+1..n]} \neq x^2_{[\ell+1..n]}]\\
  & \quad + \Pr[x^1_{[1..\ell]} \neq x^2_{[1..\ell]}] \Pr[x^1_{[\ell+1..n]} = x^2_{[\ell+1..n]}]\\
  & \ge  \min\{\Pr[x^1_{[1..\ell]} = x^2_{[1..\ell]}],\Pr[x^1_{[1..\ell]} \neq x^2_{[1..\ell]}]\} \Pr[x^1_{[\ell+1..n]} \neq x^2_{[\ell+1..n]}]\\
  & \quad +  \min\{\Pr[x^1_{[1..\ell]} = x^2_{[1..\ell]}],\Pr[x^1_{[1..\ell]} \neq x^2_{[1..\ell]}]\} \Pr[x^1_{[\ell+1..n]} = x^2_{[\ell+1..n]}]\\
  & \ge \min\{\Pr[x^1_{[1..\ell]} = x^2_{[1..\ell]}],\Pr[x^1_{[1..\ell]} \neq x^2_{[1..\ell]}]\}\\
  & \ge \min\{q_0^2 + q_1^2, 2q_0 q_1\} = 2 q_0 q_1,
%
  \end{align*}
  the latter by the inequality of the arithmetic and geometric mean. Using Bernoulli's inequality, we estimate coarsely   
  \begin{align*}
  q_0 &= \prod_{i=1}^\ell (1-f_i) \ge 1 - f_{[1..\ell]}, \\
  q_1 &= \sum_{i=1}^\ell f_i \prod_{j \in [1..\ell] \setminus \{i\}} (1-f_i) \ge f_{[1..\ell]} (1 - f_{[1..\ell]}).
  \end{align*}
  Since the function $z \mapsto z(1-z)^2$ is unimodal in $[0,1]$, the minimum in any subinterval of $[0,1]$ is necessarily found at a boundary of the interval. We thus obtain 
  \begin{align*}
  2 q_0 q_1 &\ge 2 \min\{z(1-z)^2 \mid z \in [\tfrac 18, \tfrac{19}{24}]\} \\
  &= 2 \min\{z(1-z)^2 \mid z \in \{\tfrac 18, \tfrac{19}{24}\}\} = 2 \tfrac {19}{24} (\tfrac 5{25})^2 \ge \tfrac 1 {16}.
  \end{align*}
\end{proof}

\section{An Upper Bound for the Runtime of the cGA on Jump Functions}

In this section, we state precisely and prove our $O(\mu \sqrt n)$ upper bound for the runtime of the cGA on jump functions with small jump size~$k$. With the smallest admissable hypothetical populations size $\mu = \Theta(\sqrt n \log n)$, it gives a runtime guarantee of $O(n \log n)$. Our result includes the trivial case $k=1$, that is, the \onemax benchmark function, a result that was known before.

Our results are true not only for jump functions, but for the larger class of all functions that (apart from a uniform additive constant) agree with \onemax on $\{0,1\}^n \setminus G_{nk}$ and that have $(1,\ldots,1)$ as an optimum (recall that $G_{nk}$ was defined to be the gap region of $\jump_{nk}$, see~\eqref{eq:gnk}). This observation is interesting in its own right, e.g., it yields that our result also holds for the plateau functions defined in~\cite{AntipovD18}. It also helps us formulating the proofs in a more concise manner since we can now assume that $k$ is sufficiently large (since a jump function with small jump parameter $k'$ is included in this larger class for all $k \ge k'$). 

To make things precise, for all $n \in \N$ and $k \ge 1$ we define the class of \emph{subjump} functions with jump size $k$ as the class of all functions $\calF : \{0,1\}^n \to \R$ such that there is a $K \in \N$ such that
\begin{itemize}
\item $\calF(x) = \|x\|_1+K$ if $\|x\|_1 \in [0..n-k] \cup \{n\}$,
\item $\calF(x) \le  n+K$ if $\|x\|_1 \in [n-k+1..n-1]$
\end{itemize}
for all $x \in \{0,1\}^n$. Here the prefix \emph{sub} is to be understood in the sense that these functions are seen as at most as hard as the true jump function with jump size $k$ or that, if $\calF$ is a jump function, its jump size is at most $k$. It is not to be understood in the sense that a subjump functions is pointwise less or equal to the corresponding jump function (which is not true). 

As said before, subjump functions have an optimum at $(1,\dots,1)$, but there could be others in the gap region $G_{nk}$. We continue to call $G_{nk}$ the \emph{gap} even though this is not for all subjump functions a fitness valley. We note that the class of subjump functions with jump size $k$ includes all subjump functions with jump size smaller than $k$, in particular, all jump functions with smaller jump size and thus also the \onemax function (to ensure this property, we needed the additional variable $K$ in the definition). 

Without further details, we note that many previously proven upper bounds for runtimes on jump functions are also valid for subjump functions. However, this might not be true for results that heavily exploit the particular structure of the true jump functions, say the symmetry of the set of local optima, such as the results on crossover-based algorithms.

The main result of this section is the following runtime guarantee for subjump functions.

\begin{theorem}\label{thm:upper}
  Let $k \le \frac 1 {20} \ln(n)-1$. For a sufficiently large constant $c_\mu$, let $\mu \ge c_\mu \sqrt n \ln(n)$, but polynomially bounded in $n$. Then the cGA with frequency boundaries (Algorithm~\ref{alg:cga}) with hypothetical population size $\mu$ with probability $1 - o(1)$ finds the optimum of any $n$--dimensional subjump function with jump size $k$ in time $O(\mu \sqrt n)$. This time is $O(n \log n)$ when $\mu = \Theta(\sqrt n \ln(n))$.
\end{theorem}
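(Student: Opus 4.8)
The plan is to follow the potential $D_t := n-\|f_t\|_1$, the $\ell_1$-distance of the current frequency vector from the all-ones optimum, and to split the run into a \emph{drift phase} that pushes $D_t$ down from its initial value $n/2$ to roughly $\frac1{10}\ln n$, and a \emph{finishing phase} supplied directly by Lemma~\ref{lfinish1}. I would first use the class structure: a subjump function with jump size $k$ is also a subjump function with any larger jump size (the extra gap levels still satisfy the $\le n+K$ requirement, and they include all jump functions with smaller jump size and \OM itself), so I may assume $k\ge k_0$ for a sufficiently large absolute constant $k_0$; this is exactly what turns the gap-sampling probability into a small quantity. Once the drift phase reaches a time $t_0$ with $D_{t_0}\le\frac1{10}\ln n$ and $f_{t_0}\in[\tfrac13,1]^n$, Lemma~\ref{lfinish1} samples $(1,\dots,1)$ within $O(\mu/\ln n)$ further iterations with probability $1-n^{-\omega(1)}$, which is negligible against the $O(\mu\sqrt n)$ target.

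The technical heart is a drift bound for the frequency sum. Writing $\Delta=\|x^1\|_1-\|x^2\|_1$ and momentarily ignoring boundary capping, one step changes the frequency sum by $\tfrac1\mu(\|y^1\|_1-\|y^2\|_1)=\tfrac1\mu(|\Delta|-2|\Delta|\mathbf{1}_B)$, where $B$ is the event that the fitter of the two samples has strictly fewer ones; such an inversion forces at least one sample into the gap $G_{nk}$. Hence the expected one-step decrease of $D_t$ equals $\tfrac1\mu(E[|\Delta|]-2E[|\Delta|\mathbf{1}_B])$. The point the paper stresses is that $E[|\Delta|\mathbf{1}_B]$ must \emph{not} be read as $E[|\Delta|]\Pr[B]$, since large jumps and the gap event are positively correlated; instead I would bound it honestly by Cauchy--Schwarz, $E[|\Delta|\mathbf{1}_B]\le\sqrt{E[\Delta^2]\Pr[B]}=\sqrt{2V\Pr[B]}$ with $V=\sum_i f_{it}(1-f_{it})$. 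Combined with $E[|\Delta|]=\Omega(\sqrt V)$ (from $E[\Delta^2]=2V$, $E[\Delta^4]=O(V^2)$ and H\"older, valid as $V\to\infty$ here) and the gap estimate $\Pr[B]\le 2e^{-\Omega(D_t)}$ from Lemma~\ref{lsample} (the sample has fewer than $k\le D_t/2$ zeros), the correction is a vanishing fraction of the main term whenever $D_t\ge\frac1{10}\ln n$. Using that all frequencies stay $\ge\frac13$ (next paragraph) gives $V\ge\frac13 D_t$, so the drift is $\Omega(\sqrt{D_t}/\mu)$; Lemma~\ref{lboundary} shows the upper-boundary capping costs only an expected $O(1/\mu)$, which is negligible. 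I would then pass to $g_t=2\sqrt{D_t}$, whose concavity yields a constant one-step drift $\Omega(1/\mu)$, apply the additive drift theorem (Theorem~\ref{tdrift}, run until $g_t$ drops below $2\sqrt{\tfrac1{10}\ln n}$) for an expected time $O(\mu\sqrt n)$, and upgrade to high probability by a Hoeffding/Azuma bound in the spirit of Theorem~\ref{tchernoffmax} on the centred per-step progresses after truncating the rare steps with $|\Delta|=\omega(\sqrt{V\log n})$.

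The step I expect to be the main obstacle is \textbf{controlling genetic drift}, i.e.\ proving that with probability $1-o(1)$ no frequency ever falls below $\frac13$ during the $O(\mu\sqrt n)$ iterations, which is needed both for $V=\Omega(D_t)$ above and for the precondition of Lemma~\ref{lfinish1}. For a fixed bit $i$ I would decompose $f_{it}$ into its start $\frac12$, a systematic part (the sum of one-step conditional expectations), and a martingale. The martingale has increments $\pm\frac1\mu$, hence over $T=O(\mu\sqrt n)$ steps standard deviation $O(n^{1/4}/\sqrt\mu)=o(1)$, and Azuma with a union bound over the $n$ bits and $\poly(n)$ time steps keeps it below $\frac16$ with probability $1-n^{-\omega(1)}$ precisely because $\mu\ge c_\mu\sqrt n\ln n$ is large. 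The delicate part is the systematic part, which for a general subjump function can be negative (the arbitrary gap values may bias one bit downward); here I would run the bulk of the drift phase only down to a higher threshold $C\ln n$ with $C$ a large constant, so that throughout it $\Pr[B]\le n^{-\Omega(C)}$ and the total systematic downward push on any bit is at most $\tfrac1\mu\cdot O(\mu\sqrt n)\cdot n^{-\Omega(C)}=o(1)$, and then observe that the remaining descent from $C\ln n$ to $\frac1{10}\ln n$ takes only $O(\mu\sqrt{\ln n})$ steps and adds a further push of $O(\sqrt{\ln n}\,n^{-\Omega(1)})=o(1)$. Consequently every frequency stays at $\frac12-o(1)-\frac16\ge\frac13$.

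Putting the pieces together, I would condition on the genetic-drift event (probability $1-o(1)$), run the drift argument to reach $D_t\le\frac1{10}\ln n$ within $O(\mu\sqrt n)$ iterations with probability $1-o(1)$, and finish via Lemma~\ref{lfinish1}; a union bound over these events gives the $O(\mu\sqrt n)$ runtime with probability $1-o(1)$, which is $O(n\log n)$ when $\mu=\Theta(\sqrt n\ln n)$. The one genuinely subtle interplay is that the two drift estimates are mutually dependent—the frequency-sum drift needs the frequencies high, while keeping them high needs the gap to be rarely sampled, which in turn needs $D_t$ not too small—so the analysis must be organized so that both hold simultaneously on a single good event that covers the entire drift phase.
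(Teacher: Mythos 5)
Your proposal is correct and reaches the theorem by the same overall skeleton as the paper (the potential $D_t = n-\|f_t\|_1$, an invariant that all frequencies stay at least $\tfrac13$, and the finishing step via Lemma~\ref{lfinish1}), but its technical execution is genuinely different in three places, and all three alternatives are sound. First, where the paper splits the descent into a \onemax-coupled phase (Lemma~\ref{ljump}, resting on Lemma~\ref{lonemax} with a dyadic phase decomposition and Chernoff bounds on per-phase successes) followed by a short additive-drift-plus-Markov phase (Lemma~\ref{lmiddle}), you run one unified drift argument on $\sqrt{D_t}$ down to $\tfrac1{10}\ln n$ and get the high-probability statement from Azuma after truncating rare large steps; this absorbs gap samples into the drift estimate instead of excluding them by a coupling, which is arguably more uniform. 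Second, your drift estimate replaces both Droste's Lemma~\ref{ldroste} and the conditional-expectation bound of Lemma~\ref{ldrift}: you obtain $E[|\Delta|]=\Omega(\sqrt V)$ from second and fourth moments and control the correlation between $|\Delta|$ and the inversion event $B$ by Cauchy--Schwarz, $E[|\Delta|\mathbf{1}_B]\le\sqrt{2V\Pr[B]}$, where the paper instead bounds $E[|\Delta| \mid \{x^1,x^2\}\cap G^+\neq\emptyset]\le k+D_t$ directly; both are rigorous treatments of exactly the dependence that~\cite{HasenohrlS18} glossed over. Third, and most substantially, for genetic drift the paper proves that the per-bit bias remains \emph{non-negative} for subjump functions (Lemma~\ref{lfreqlb}, via the quantitative \onemax bound of Lemma~\ref{lonemax2}) and then invokes the reduction to a fair random walk (Lemma~\ref{lconc}); you instead tolerate a small negative bias and bound its cumulative effect by $o(1)$, which is why you need the intermediate threshold $C\ln n$. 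This route works, but two steps must be made explicit: your bound on the systematic downward push tacitly assumes that on \onemax itself each non-boundary frequency has non-negative drift---this is precisely the qualitative content of Lemma~\ref{lonemax2} and should be cited or proved by the short symmetry argument---and capping at the upper boundary $1-\tfrac1n$ also creates a negative per-bit bias of order $\tfrac1{n\mu}$ per step, contributing a further, negligible, $O(n^{-1/2})$ over the phase. Two smaller points: your Azuma failure probability is $n^{-\Omega(c_\mu)}$ rather than $n^{-\omega(1)}$ (sufficient here, since $c_\mu$ may be taken large), and the ``single good event'' you mention at the end is exactly what the paper implements by stopping at the first time that either $D_t$ reaches its target or some frequency drops below $\tfrac13$ (as in Lemmas~\ref{lonemax}, \ref{lmiddle}, and~\ref{lfreqlb}); your proof needs the same stopped-process organization to break the circularity you correctly identify.
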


We start by giving a rough overview of the proof in the following subsection and then state the formal proof in the two subsequent subsections.

\subsection{Proof Overview}

We now give a brief overview of our runtime analysis and show how the different partial results work together. We leave it to the reader to read this section now or after the presentation of the partial results (or twice).

In our analysis, we roughly distinguish three phases of the optimization process. The first phase, analyzed in Lemma~\ref{ljump}, lasts until for the first time the frequency distance $D_t \coloneqq n - \|f_t\|_1$ is $O(\log n)$ with a large implicit constant. During this phase, by Lemma~\ref{lsample} and a union bound, with high probability we will never sample a solution in the gap. Consequently, we can pretend that we are optimizing the \onemax function and use our analysis of Lemma~\ref{lonemax}, which reuses arguments of the classic result by Droste~\cite{Droste06} including Lemma~\ref{ldroste}. 

The second phase, analyzed in Lemma~\ref{lmiddle}, then lasts until we have a $D_t$ value of less than $2k$ (or less than some constant in the case of a very small $k$). In this phase, we use the drift computed in Lemma~\ref{ldrift}. We profit from the fact that in this phase we only need to obtain a moderate decrease of $D_t$ and apply the additive drift theorem (Theorem~\ref{tdrift}\ref{it:driftub}) with the smallest drift that can occur in this phase, which is $\Omega(\frac {1}{\mu})$. Since this phase is so short, a simple Markov bound suffices to show that the phase ends with high probability in due time. 

Once we have reached a $D_t$ value of $O(k)$, we have a reasonable chance to sample the optimum as shown in Lemma~\ref{lfinish1}. Since in this third phase samples in the gap occur frequently, we have less control over $D_t$, in particular, we cannot exhibit an expected decrease of~$D_t$. We therefore pessimistically estimate $D_t$ as if $D_t$ would always increase, which gives (apart from the boundary effects described in Lemma~\ref{lboundary}) an increase of $| \|x^1\|_1 - \|x^2\|_1 |$. Since $D_t$ is small, these increases are small as well, as again ensured by Lemma~\ref{lsample}. With this observation, we can argue that we have a $D_t$ value of $O(k)$ for almost $\mu$ iterations, which together with Lemma~\ref{lopt} shows that we sample the optimum with high probability.

All the arguments above need that the frequencies are bounded away from the lower boundary of $\frac 1n$, more precisely, that they are $\Omega(1)$ at all times. In the first two phases, we ensure this via Lemma~\ref{lconc}, our general result for random processes that are not Markov processes. To this aim, we estimate the probabilities of certain frequency changes by adjusting this data from the \onemax process (Lemma~\ref{lonemax2}, taken from  Sudholt and Witt~\cite{SudholtW19}) via a pessimistic estimate of the negative influence of search points sampled in the gap. For the third phase, the fact that this phase only last $o(\mu)$ iterations implies that frequencies change by at most $o(1)$, hence the $\Omega(1)$ lower bound remains intact.

\subsection{Proof Ingredients}

In this section, we prove separately the main arguments needed in our final proof. We also state some known results on how the cGA optimizes \onemax.

The following result is a weaker form of what was shown in the proof of Lemma~5 in~\cite{Droste06}. The result of  Lemma~5 in~\cite{Droste06}, bounding the expected progress instead of showing that a certain progress can be observed with constant probability, is not sufficient for our purposes, see the discussion below. 

\begin{lemma}[\cite{Droste06}]\label{ldroste}
  There is a constant $C > 0$ such that the following holds. Let $n \in \N$ and $D \in \N$. Let $f \in [\frac 13,1]^n$ such that $\|f\|_1 \le n - D$. Let $x^1, x^2 \sim \Sample(f)$ independent. Then 
  \[\Pr\left[\left|\|x^1\|_1 - \|x^2\|_1\right| \ge \tfrac 15 \sqrt{D}\right] \ge C.\]
\end{lemma}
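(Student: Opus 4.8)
The plan is to recast the statement as an anti-concentration bound for the difference of the two \onemax values and to prove it by a second-moment argument. Writing $Z^j \coloneqq n - \|x^j\|_1 = \sum_{i=1}^n (1 - x^j_i)$ for $j \in \{1,2\}$ and $W \coloneqq Z^1 - Z^2 = \sum_{i=1}^n W_i$ with $W_i \coloneqq x^2_i - x^1_i$, I first record that $\big|\,\|x^1\|_1 - \|x^2\|_1\,\big| = |W|$, that the $W_i$ are independent, symmetric (hence $E[W_i]=0$), and take values in $\{-1,0,1\}$. Since $W_i^2 = \mathbf 1[x^1_i \neq x^2_i]$, its variance is $s_i \coloneqq E[W_i^2] = 2 f_i (1 - f_i)$, so $\tau^2 \coloneqq E[W^2] = 2\sum_{i=1}^n f_i(1-f_i)$. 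The hypothesis $f_i \ge \tfrac13$ gives $f_i(1-f_i) \ge \tfrac13(1-f_i)$, and hence the crucial uniform lower bound $\tau^2 \ge \tfrac23 \sum_{i=1}^n (1-f_i) = \tfrac23 (n - \|f\|_1) \ge \tfrac23 D$.

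Next I would control the fourth moment of $W$. Because $W_i \in \{-1,0,1\}$ we have $W_i^4 = W_i^2$, so $E[W_i^4] = s_i$; expanding $W^4$ and using independence together with $E[W_i]=0$ (which kills all terms containing an index of odd multiplicity) yields $E[W^4] = \sum_i s_i + 3\sum_{i\ne j} s_i s_j \le \tau^2 + 3\tau^4$. For $D \ge 1$ the estimate $\tau^2 \ge \tfrac23$ turns this into $E[W^4] \le \tfrac32 \tau^4 + 3\tau^4 = \tfrac92 \tau^4$; that is, $W$ has kurtosis bounded by a universal constant, independently of $f$.

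With these two estimates the result follows from a truncated second-moment (Paley--Zygmund) argument. Set $\lambda \coloneqq \tfrac15\sqrt D$, so that $\lambda^2 = \tfrac1{25}D \le \tfrac{3}{50}\tau^2$ by the variance lower bound. Splitting $\tau^2 = E[W^2 \mathbf 1_{|W|<\lambda}] + E[W^2 \mathbf 1_{|W|\ge\lambda}] \le \lambda^2 + \sqrt{E[W^4]}\,\sqrt{\Pr[|W|\ge\lambda]}$ (Cauchy--Schwarz on the second term) and rearranging gives
\[
\Pr\!\big[|W| \ge \lambda\big] \ge \frac{(\tau^2 - \lambda^2)^2}{E[W^4]} \ge \frac{\big(1 - \tfrac{3}{50}\big)^2 \tau^4}{\tfrac92 \tau^4} = \tfrac{2}{9}\big(\tfrac{47}{50}\big)^2 > 0.19,
\]
which is the claimed constant (the case $D = 0$ being trivial). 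This yields the constant-probability form directly, as opposed to Droste's bound on the expected progress.

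The main obstacle I anticipate is exactly the interplay between $D$ and the true variance scale: a naive Paley--Zygmund bound on $|W|$ normalised by its own standard deviation degrades when $D' \coloneqq n - \|f\|_1$ is much larger than $D$, since then $\tau^2 \approx D' \gg D$ and one would seem to lose a factor $D/D'$. The resolution, and the one genuinely load-bearing inequality, is the uniform estimate $\tau^2 \ge \tfrac23 D$ coming from $f_i \ge \tfrac13$: it forces $\lambda^2/\tau^2 \le \tfrac{3}{50}$ \emph{no matter how large $D'$ is}, so the bounded-kurtosis estimate applies with a single universal constant and no case distinction on the magnitude of $D'$. I expect the fourth-moment computation to be the only mildly technical step, and it is routine given that the $W_i$ are $\{-1,0,1\}$-valued.
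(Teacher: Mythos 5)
Your proof is correct, and it takes a genuinely different route from the paper, for the simple reason that the paper does not prove Lemma~\ref{ldroste} at all: it imports it from the proof of Lemma~5 in Droste's work~\cite{Droste06}, whose stated result bounds only the \emph{expected} progress, the constant-probability form being extracted from inside that proof. Your argument is self-contained and checks out in every step: the symmetrization $W=\sum_i (x^2_i-x^1_i)$ into independent, mean-zero, $\{-1,0,1\}$-valued increments; the variance bound $\tau^2 = 2\sum_i f_i(1-f_i)\ge \tfrac23\sum_i(1-f_i)\ge\tfrac23 D$ from $f_i\ge\tfrac13$; the fourth-moment expansion $E[W^4]=\sum_i s_i+3\sum_{i\neq j}s_is_j\le\tau^2+3\tau^4$ (odd-multiplicity terms vanish by independence and $E[W_i]=0$), which together with $\tau^2\ge\tfrac23$ for $D\ge 1$ gives the kurtosis bound $E[W^4]\le\tfrac92\tau^4$; and the truncated second-moment (Paley--Zygmund) step, whose arithmetic I verified: $\lambda^2\le\tfrac3{50}\tau^2$ yields $\Pr[|W|\ge\lambda]\ge\tfrac29\bigl(\tfrac{47}{50}\bigr)^2>0.19$. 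You also correctly isolated the one load-bearing use of the hypothesis $f\in[\tfrac13,1]^n$: it keeps the threshold $\tfrac15\sqrt D$ a bounded fraction of the true standard deviation even when $n-\|f\|_1\gg D$, so one universal constant suffices with no case distinction. As to what each approach buys: the paper's citation keeps its tools section short but leaves the reader dependent on Droste's longer drift analysis, from which the constant-probability statement must be dug out; your moment argument is shorter than that detour, produces an explicit constant $C=0.19$, and could serve verbatim as a self-contained replacement for the citation.
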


We use this lemma to now conduct a (partial) runtime analysis of the cGA on \onemax. Such an analysis is helpful for our purposes since the optimization process of the cGA on a subjump function is identical to the one on the \onemax function as long as no search point in the gap region is sampled. 

Our analysis on \onemax differs from Droste's analysis of the cGA on \onemax~\cite[Theorem~8]{Droste06} in several respects. First, we aim at a guarantee that holds with high probability. For this reason, we cannot use the approach via additive drift, and this is the reason why we need Lemma~\ref{ldroste} instead of Lemma~5 from~\cite{Droste06}. 

We note that Droste's drift argument is also not perfectly complete. In each of his $\Theta(n)$ relatively short phases, he uses additive drift to estimate from the expected progress the time to reach the phase target, but he ignores the fact that his expected progress also takes into account progress beyond the phase target. This could lead to an overestimation of the progress. This problem does not occur in the additive drift theorem as stated in Theorem~\ref{tdrift}, since there the process lives in the non-negative numbers and the process target is zero. We have no doubt, though, that this technical gap can be fixed with additional arguments.

We regard the cGA with non-trivial boundaries, which requires additional arguments as the capping of the frequencies can change the drift of the frequency sum (albeit by not a lot, as our proof shows). We note that without these extra arguments, our proof also applies to the setting without boundaries. 

We only regard the time needed to reach a frequency vector with constant distance to the all-ones vector. We note that our analysis can be extended to also give a bound for the time to sample an optimal solution, but we do not need such a result (and in fact, such a result is implied by our main result). Also, a simplified version of our proof would apply to the cGA without boundaries.

\begin{lemma}\label{lonemax}
  Let $C$ be the constant from Lemma~\ref{ldroste}. Consider a run of the cGA with $\mu \ge \log_2 n$ on the \onemax benchmark function. Let $D_t \coloneqq n - \|f_t\|_1$ for all $t$. Let $K$ be a sufficiently large constant. Let $T$ be the first time that $D_t \le K$ or that there is an $i \in [1..n]$ with $f_{it} < \frac 13$. Then
  \[\Pr\left[T \ge \frac{10(2+\sqrt 2)}{C}\,\mu \sqrt n \right] = \exp(-\Omega(\mu)).\]
\end{lemma}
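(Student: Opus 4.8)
The plan is to exploit Lemma~\ref{ldroste} to show that, as long as we have not stopped (so all frequencies are at least $\frac13$ and $D_t := n - \|f_t\|_1 > K$), in each iteration a constant fraction of the ``local'' progress is realized with at least constant probability, and then to convert this constant success probability into a high-probability statement via Chernoff bounds rather than via additive drift (which would only control the expectation).

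First I would record the basic progress step. Fix an iteration $t < T$. Then $f_t \in [\frac13,1]^n$ and $\|f_t\|_1 \le n - \lfloor D_t\rfloor$, so Lemma~\ref{ldroste}, applied with $D = \lfloor D_t\rfloor \ge D_t - 1$, gives that with probability at least $C$ the two samples satisfy $|\,\|x^1\|_1 - \|x^2\|_1\,| \ge \frac15\sqrt{\lfloor D_t\rfloor} \ge \frac16\sqrt{D_t}$ (the last step using that $K$, and hence $D_t$, is a large constant); call this the \emph{good event}. On \onemax the fitter of the two search points is the one with more ones, whence $\|y^1\|_1 - \|y^2\|_1 = |\,\|x^1\|_1-\|x^2\|_1\,|$ and therefore $\|f'_{t+1}\|_1 - \|f_t\|_1 = \frac1\mu(\|y^1\|_1-\|y^2\|_1)$; so on the good event the preliminary frequency distance drops by at least $\frac{1}{6\mu}\sqrt{D_t}$. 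Crucially, on \onemax the preliminary distance never increases, and the only way the true distance $D_{t+1}$ can exceed the preliminary one is through capping at the upper boundary, which by Lemma~\ref{lboundary}\ref{it:boundaryU} contributes an increase $b_t$ with $b_t \preceq \frac1\mu\Bin(n,\frac2n)$.

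Next I would set up a dyadic decomposition of the distance. For $j \ge 1$ let $D^{(j)} := n 2^{-j}$ and let $\tau_j$ be the first time $D_t \le D^{(j)}$; since $D_0 = \frac n2$ we have $\tau_1 = 0$, and it suffices to bound $\tau_J$ for the smallest $J$ with $D^{(J)} \le K$, which satisfies $J = O(\log n)$. For a single phase I would show $\tau_{j+1} - \tau_j \le M_j := \frac{C'}{C}\,\mu\sqrt{D^{(j+1)}}$ (for a suitable constant $C'$) with failure probability $\exp(-\Omega(\mu))$. Consider the window $W = [\tau_j, \tau_j + M_j)$. If $T$ occurs inside $W$ we are already done; otherwise every $t \in W$ satisfies $t < T$, so the good event has conditional probability $\ge C$ throughout, and hence the number of good iterations in $W$ stochastically dominates $\Bin(M_j, C)$. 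By Theorem~\ref{tchernoff} this number is at least $\frac12 C M_j =: N_j = \Theta(\mu\sqrt{D^{(j+1)}})$ except with probability $\exp(-\Omega(C M_j)) = \exp(-\Omega(\mu))$. As long as $D_t$ has not yet reached $D^{(j+1)}$, each good iteration decreases the preliminary distance by at least $\frac{1}{6\mu}\sqrt{D^{(j+1)}}$, so $N_j$ good iterations account for a decrease exceeding $D^{(j+1)} \ge D_{\tau_j} - D^{(j+1)}$. Against this stands the accumulated boundary increase $\sum_{t \in W} b_t \preceq \frac1\mu\Bin(n|W|, \frac2n)$, which has expectation $O(\sqrt{D^{(j+1)}}/C)$ and, again by Theorem~\ref{tchernoff}, is at most a small fraction of $D^{(j+1)}$ except with probability $\exp(-\Omega(\mu))$, provided $K$ (hence $\sqrt{D^{(j+1)}} \ge \sqrt K$) is a large enough constant relative to $C$. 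Combining the two bounds, the net decrease over $W$ exceeds $D_{\tau_j} - D^{(j+1)}$, forcing $D_t \le D^{(j+1)}$ at some point of $W$, i.e.\ $\tau_{j+1} \le \tau_j + M_j$.

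Finally I would sum over the phases: $T \le \tau_J \le \sum_{j\ge1} M_j = \frac{C'}{C}\mu\sqrt n\sum_{j\ge1}2^{-(j+1)/2}$, and since $\sum_{j\ge0}2^{-j/2} = 2+\sqrt2$ this is $O\!\big(\tfrac{1}{C}\mu\sqrt n\big)$; tracking the constants carefully yields the stated $\frac{10(2+\sqrt2)}{C}\mu\sqrt n$. A union bound over the $J = O(\log n)$ phases turns the per-phase failure probability $\exp(-\Omega(\mu))$ into $O(\log n)\exp(-\Omega(\mu)) = \exp(-\Omega(\mu))$, where the last step uses $\mu \ge \log_2 n$ (so that $\ln\ln n = o(\mu)$). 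The main obstacle is the treatment of the final phases, where $D_t = \Theta(K)$ is constant: there the per-iteration progress $\Theta(1/\mu)$ and the boundary-induced noise $\Theta(1/\mu)$ are of the same order, so one must choose $K$ large enough that the constant-probability progress provably dominates the boundary drift, while still extracting an $\exp(-\Omega(\mu))$ (rather than merely $\exp(-\Omega(1))$) concentration from a phase that only lasts $\Theta(\mu)$ iterations. Minor technical points are that $D_t$ need not be an integer (handled by invoking Lemma~\ref{ldroste} with $\lfloor D_t\rfloor$) and that the conditional success probabilities need a stochastic-domination argument before they can be summed into a binomial.
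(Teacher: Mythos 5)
Your proposal follows essentially the same route as the paper's proof: the same dyadic phase decomposition $n2^{-j}$, the same per-phase window of length $\Theta(\tfrac 1C \mu\sqrt{d_{i+1}})$, constant-probability progress from Lemma~\ref{ldroste} combined with the \onemax monotonicity $\|y^1\|_1 \ge \|y^2\|_1$, boundary effects controlled via Lemma~\ref{lboundary}\ref{it:boundaryU} and a Chernoff bound on $\Bin(n T_i, \tfrac 2n)$, the choice of $K$ large enough that progress dominates the boundary noise, and a union bound over $O(\log n)$ phases using $\mu \ge \log_2 n$. The one point you phrase informally (``if $T$ occurs inside $W$ we are done; otherwise condition on $t < T$ throughout'') is exactly what the paper makes rigorous by extending the stopped process with an artificial process that retains the success property, so that the Chernoff bound applies unconditionally --- the stochastic-domination caveat you flag at the end is precisely this step.
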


We formulated the result above in the slightly cumbersome manner of giving a time guarantee for the event of reaching a near-optimal frequency vector or reaching a frequency below $\frac 13$. By Lemma~\ref{lconc} we will be able to rule out the latter event via a simple union bound over the failure probabilities. This approach is technically simpler than conditioning on the frequencies to not go below $\tfrac 13$ and then working in the conditional probability space.

\begin{proof}[Proof of Lemma~\ref{lonemax}]
Define $D'_t \coloneqq n - \|f'_t\|_1$ for all $t \ge 1$.
For $i = 1, 2, \dots$ let $d_i = 2^{-i} n$. Without loss of generality, we may assume that $K = 2^{-\ell-1}n$ for some $\ell \in \N$. Note that $\ell \le \log_2 n$. We say that the optimization process enters Phase $i$ (and thus leaves its current phase) when for the first time $D_t \le d_i$. Note that we stay in Phase $i$ even when after entering this phase $D_t$ increases beyond $d_i$. Note further that, by definition, the process starts in Phase~1. We also say that the current phase ends when a frequency reaches a value below $\frac 13$.

We analyze the time spend in each Phase $i \le \ell$ (when assuming that all frequencies are at least $\frac 13$ at the start of the phase) and show that this time, with probability at least $1 - \exp(-\Omega(\mu))$, is at most $T_i = \lceil 20 \frac 1C \mu \sqrt{d_{i+1}} \rceil$. Let $t'$ be the iteration in which the process enters Phase $i$. To ease the argument, we now consider exactly $T_i$ iterations. In case the phase ends earlier, we shall from that point on regard an artificial process, with a slight abuse of notation also denoted by $D_t$ and $D'_t$, that satisfies the conditions 
\begin{align*}
&\Pr[D'_{t+1} = D_t - \tfrac 1{5 \mu} \sqrt{d_{i+1}} \mid D_t] = C,\\
&\Pr[D'_{t+1} = D_t \mid D_t] = 1-C,\\
&\Pr[D_{t+1} = D'_{t+1} \mid D'_{t+1}] = 1.
\end{align*}
Such an artificial extension of a process was, to the best of our knowledge, in the theory of evolutionary algorithms first used in~\cite{DoerrHK11}.

When all frequencies are at least $\frac 13$, by Lemma~\ref{ldroste} we have $\Pr[|\|x^1\|_1 - \|x^2\|_1| \ge \tfrac 15 \sqrt{D_t}] \ge C$. Since we have $\|y^1\|_1 \ge \|y^2\|_1$ when optimizing \onemax, we have that $D'_{t+1}$ with probability at least $C$ satisfies $D'_{t+1} \le D_t - \tfrac 1{5\mu} \sqrt{D_t} \le D_t - \tfrac 1{5\mu} \sqrt{d_{i+1}}$. We call this a \emph{success}. Note that the probability for a success is at least $C$ regardless of what happened before in this phase. Consequently, in $T_i$ iterations, we not only have an expected number of at least $20 \mu \sqrt{d_{i+1}}$ successes, but, using the multiplicative Chernoff bounds (Theorem~\ref{tchernoff}) and the fact that ``sequential independence'' suffices for Chernoff bounds to be admissible (Lemma~11 in~\cite{DoerrJ10} or Section~1.10.2.1 in~\cite{Doerr20bookchapter}), we also have at least $10 \mu \sqrt{d_{i+1}}$ successes with probability at least $1 - \exp(-\tfrac 52 \mu \sqrt{d_{i+1}})$. Note that with probability one we have $D'_{t+1} \le D_t$, again because $\|y^1\|_1 \ge \|y^2\|_1$.

By Lemma~\ref{lboundary}~\ref{it:boundaryU}, we have $D_{t+1} \preceq D'_{t+1} + \frac 1 \mu \Bin(n, \frac 2n)$, again regardless of what happened in earlier iterations. Consequently, the total number of times we increase $D_t$ by $\frac 1\mu$ due to reaching an upper frequency boundary can be estimated by a sum of $T_i n$ independent binary random variables with success probability $\frac 2n$. Hence the expectation of this number is at most $2 T_i \le 40 \frac 1C \mu \sqrt{d_{i+1}} + 2$ and, by Theorem~\ref{tchernoff}, with probability at least $1 - \exp(\frac{2T_i}{3}) \ge 1 - \exp(-\frac{40}{3} \frac 1C \mu \sqrt{d_{i+1}})$ this number is at most $4 T_i = 80 \frac 1C \mu \sqrt{d_{i+1}}+4$.

Taking these two observations together, we see that with probability \[1 - \exp\left(-\frac 52 \mu \sqrt{d_{i+1}}\right)  - \exp\left(-\frac{40}{3} \frac 1C \mu \sqrt{d_{i+1}}\right)  = 1 - \exp\left(-\Omega(\mu)\right),\] we have 
\begin{align*}
D_{t'+T_i} &\le D_{t'} - 10 \mu \sqrt{d_{i+1}} \cdot \tfrac 1{5\mu} \sqrt{d_{i+1}} + \tfrac 1\mu (80 \tfrac 1C \mu \sqrt{d_{i+1}}+4) \\
&= D_{t'} - 2 d_{i+1} + \tfrac{80}{C} \sqrt{d_{i+1}} + \tfrac4\mu.
\end{align*} 
Since $K = 2^{-\ell-1} n \le d_{i+1}$ was chosen sufficiently large, we can assume that $- 2 d_{i+1} + \frac{80}{C} \sqrt{d_{i+1}} + \frac{4}{\mu} \le - d_{i+1}$ and thus $D_{t'+T_i} \le D_{t'} - d_{i+1}$, that is, $D_{t'+T_i}$ belongs to a later phase already. Consequently, we have that with probability at least $1 - \exp(-\Omega(\mu))$, at most $T_i$ rounds are spend in Phase~$i$. 

We finally show our claim first by noting that there are only $O(\log n)$ phases, hence with probability at least $1 - O(\log n) \exp(-\Omega(\mu)) = 1 - \exp(-\Omega(\mu))$ no phase takes longer than the desired $T_i$ iterations, and second by computing 
\begin{align*}
\sum_{i=1}^{\ell} T_i &\le \ell + \sum_{i=1}^{\ell} 20 \frac 1C \mu \sqrt{2^{-(i+1)}n} \le \frac{10}{C}\mu \sqrt n \sum_{i=0}^\infty (2^{-1/2})^i \\
&= \frac{10}{C}\mu \sqrt n \frac{1}{1-2^{-1/2}} = \frac{10(2+\sqrt 2)}{C}\mu \sqrt n.
\end{align*}
\end{proof}

Lemma~\ref{lonemax} can be extended to give a time bound for subjump function as long as the target distance from the optimum is sufficiently large. 

\begin{lemma}\label{ljump}
  Let $C$ be the constant from Lemma~\ref{ldroste} and let $C_\mu$ be any constant. Consider a run of the cGA with $\mu = \omega(\log n)$ and $\mu \le n^{C_\mu}$ on a subjump function $\calF$ with jump size $k \le \tfrac 1 {20} \ln n$. Let $D_t \coloneqq n - \|f_t\|_1$ for all $t$. Let $K = (8C_\mu+12)\ln n$. Then with probability $1 - O(\frac 1n)$, there is a $t \le T \coloneqq \frac{10(2+\sqrt 2)}{C}\,\mu \sqrt n$ such that $D_t \le K$ or $f_{it} < \tfrac 13$ for some $i \in [1..n]$.
\end{lemma}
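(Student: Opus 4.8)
The plan is to reduce the analysis of the cGA on a subjump function to the already completed analysis on \onemax (Lemma~\ref{lonemax}) via a coupling argument. The key observation is that a subjump function $\calF$ agrees with \onemax, up to the additive constant $K$ appearing in its definition, on every search point outside the gap $G_{nk}$. Hence, in any iteration in which \emph{both} sampled points $x^1,x^2$ lie outside $G_{nk}$, the selection step and the resulting frequency update of the cGA on $\calF$ are identical to those it would perform on \onemax. I would therefore couple a run of the cGA on $\calF$ with a run on \onemax that uses the same sampling randomness; the two runs produce identical frequency trajectories up to, and including, the first iteration in which a point is sampled in $G_{nk}$.

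First I would record what Lemma~\ref{lonemax} gives for the coupled \onemax run. Since $\mu = \omega(\log n) \ge \log_2 n$, and since reaching the comparatively large target value $D_t \le K = (8C_\mu+12)\ln n$ happens no later than reaching the constant target in Lemma~\ref{lonemax}, that lemma yields that with probability $1 - \exp(-\Omega(\mu)) = 1 - n^{-\omega(1)}$ the \onemax run has, within $T = \frac{10(2+\sqrt 2)}{C}\mu\sqrt n$ iterations, some iteration with $D_t \le K$ or with some $f_{it} < \tfrac 13$. Let $\tau \le T$ denote the first such iteration; by definition every iteration $t \le \tau$ starts from a frequency vector with $D > K$.

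Second, I would bound the probability that any gap sample occurs in the \onemax run before the target is reached. Whenever the current distance satisfies $D > K \ge 2k$, a sampled point $x$ lies in $G_{nk}$ only if $n - \|x\|_1 < k \le D/2$, so Lemma~\ref{lsample} (lower tail with $\tilde\delta = \tfrac12$) bounds the probability of this event by $\exp(-D/8) \le \exp(-K/8) = n^{-(C_\mu + 3/2)}$. Summing the at most two such per-iteration probabilities over the $T = O(n^{C_\mu+1/2})$ iterations with $D > K$, a union bound shows that with probability $1 - O(1/n)$ no gap sample is produced while $D > K$. On the intersection of this event with the success event above, the coupled subjump run coincides with the \onemax run throughout iterations $1,\dots,\tau$ (all of which start with $D>K$ and hence, under this event, see no gap sample), so the subjump run also reaches $D_t \le K$ or $f_{it} < \tfrac13$ by time $\tau \le T$. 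Adding the two failure probabilities gives the claimed $1 - O(1/n)$.

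The only delicate point is the bookkeeping of the coupling: I must argue that before the target is reached the subjump and \onemax runs have not yet diverged, so that the gap-probability bound computed along the well-understood \onemax trajectory legitimately controls the subjump run. This is precisely why the target threshold is taken to be the growing quantity $K = \Theta(\log n)$ rather than a constant: it makes the per-iteration gap probability $\exp(-K/8)$ small enough that, even after multiplying by the polynomial number $T = O(\mu\sqrt n) = O(n^{C_\mu+1/2})$ of iterations, the total gap probability stays $O(1/n)$. Everything else is a routine union bound together with the direct invocation of Lemmas~\ref{lonemax} and~\ref{lsample}.
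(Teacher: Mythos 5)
Your proposal is correct and follows essentially the same route as the paper's proof: a coupling of the subjump run with a \onemax run via shared sampling randomness, the invocation of Lemma~\ref{lonemax} for the \onemax trajectory, and a bound of $\exp(-K/8) = n^{-C_\mu - 3/2}$ per sample via Lemma~\ref{lsample} combined with a union bound over the $2T$ samples. The only (immaterial) difference is bookkeeping: the paper introduces a modified process that switches the fitness to \onemax once $D_t \le K$, whereas you track the first divergence time of the two coupled runs directly; both devices resolve the same subtlety of where the gap-sampling probability is computed.
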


\begin{proof}
  We regard the modified optimization process where we start with a run of the cGA on $\calF$, but change the fitness function to \onemax when for the first time $D_t \le K$. Clearly, this modified process satisfies our claim if and only if the original process on $\calF$ does. 
  
  We now couple the modified process to the optimization process of the cGA with same $\mu$ value on the \onemax function. We construct this coupling as follows. For each $t = 1, 2, \ldots$ and each $j \in [1..2]$ we let $r^{tj} \in [0,1]^n$ be a vector chosen uniformly at random. If $f_t$ is the frequency vector of the modified or the \onemax process, then the $j$-th sample $x^{tj} \in \{0,1\}^n$ in iteration $t$ of this process is defined by $x^{tj}_i = 1$ if and only if $r^{tj}_i \le f_{it}$. Clearly, the two (marginal) processes defined this way are identically distributed to the two processes we wanted to couple. More interestingly, the two processes in the coupling are identical up to the point where the modified subjump process samples a search point in the gap region and thus before it changed the fitness to \onemax. If we denote the probability of this event happening within the first $T$ iterations by $p$, then by Lemma~\ref{lonemax} and a union bound over the two failure probabilities, we have that with probability at least $1 - (\exp(-\Omega(\mu)) + p)$, within $T$ iterations the modified process has reached a $D_t$ value of at most $K$ or has reached a frequency below $\frac 13$. 
  
  Hence it remains to show that $p$ is sufficiently small. For this we note that by Lemma~\ref{lsample}, the probability that in the modified subjump process before the switch to the \onemax fitness a particular search point $x^{tj}$ lies in the gap, is at most 
  \[\Pr[d(x^{tj}) \le k] \le \Pr[d(x^{tj}) \le \tfrac 12 D_t] \le \exp(-\tfrac 18 D_t) \le \exp(-\tfrac 18 K) \le n^{-C_\mu} n^{-1.5},\] where we wrote $d(x^{tj}) := n - \|x^{tj}\|$ as earlier in this work. By a union bound, $p \le 2T n^{-C_\mu} n^{-1.5} = O(\frac{1}{n})$.
\end{proof}

We now analyze the drift in $D_t$ when we are that close to the gap that we cannot assume anymore that we never sample a search point in the gap. We recall the definition of the gap by 
\[G \coloneqq G_{nk} \coloneqq \{x \in \{0,1\}^n \mid n-k < \|x\|_1 < n\}\] 
and we further define $G^+ \coloneqq G \cup \{(1,\dots,1)\}$.

A difficulty here, which was not treated fully rigorously in~\cite[Lemma~3.1]{HasenohrlS18}, is that the event $G_t$ that $x^1$ or $x^2$ lie in the gap and the random variable $|\|x^1\|_1 - \|x^2\|_1|$ are not independent. Consequently, the estimate $E[D_t - D_{t+1} \mid D_t] =  \frac 1\mu |\|x^1\|_1 - \|x^2\|_1| (1 - 2\Pr[G_t])$ is not correct. In fact, the correlation is indeed not in our favor. When $|\|x^1\|_1 - \|x^2\|_1|$ is large, the probability that a search point in the gap was sampled (and thus the frequency update is done in the unwanted direction) is higher. We solve this difficulty by computing an estimate for $|\|x^1\|_1 - \|x^2\|_1|$ conditional on that at least one of the search points lies in the gap. 

\begin{lemma}\label{ldrift}
  Let $\mu$ be arbitrary (but, as always in this work, satisfying the well-behaved frequency assumption). Let $k \in [1..\frac 12 n - 1]$. Consider an iteration $t$ of the cGA optimizing a subjump function with jump size $k$ started with a frequency vector $f_t$ such that $D_t \coloneqq n - \|f_{t}\|_1 \ge 2k$ and $f_t \in [\frac{1}{3}, 1 - \frac{1}{n}]^n$. Then 
  \[E[\mu D_{t} - \mu D_{t+1}]  \ge \tfrac{1}{5} C \sqrt{D_t} - 6 D_t \exp(-\tfrac 18 D_t) -2,\]
  where $C$ is the constant from Lemma~\ref{ldroste}.
\end{lemma}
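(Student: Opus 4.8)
The plan is to pass through the uncapped update and split the progress into a favorable main term and a gap-induced error term. Write $\Delta \coloneqq |\|x^1\|_1 - \|x^2\|_1|$ and $D'_{t+1} \coloneqq n - \|f'_{t+1}\|_1$, so that $\mu(D_t - D'_{t+1}) = \|y^1\|_1 - \|y^2\|_1$. Since $\{y^1,y^2\}=\{x^1,x^2\}$, the signed quantity $\|y^1\|_1-\|y^2\|_1$ equals $+\Delta$ or $-\Delta$, the latter exactly on the \emph{wrong} event $W \coloneqq \{\|y^1\|_1 < \|y^2\|_1\}$ that the fitter point has strictly fewer ones. Hence
\[
\|y^1\|_1 - \|y^2\|_1 = \Delta - 2\Delta\,\mathbf{1}_W .
\]
The key structural observation is that $W$ forces at least one sample into the gap: if both $x^1,x^2$ lie outside $G$, the subjump fitness agrees with \onemax up to the constant $K$, so the fitter point has at least as many ones. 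Thus $W \subseteq G_t$, where $G_t \coloneqq \{x^1\in G\}\cup\{x^2\in G\}$.

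Next I would bound the two terms after taking expectations. For the main term, Lemma~\ref{ldroste} applies since $f_t\in[\tfrac13,1]^n$ and $\|f_t\|_1 = n-D_t$, giving $\Pr[\Delta\ge\tfrac15\sqrt{D_t}]\ge C$ and hence $E[\Delta]\ge\tfrac15 C\sqrt{D_t}$ (applying the lemma to $\lfloor D_t\rfloor$ and absorbing the negligible integrality loss). For the error term I use $W\subseteq G_t$ together with $\mathbf{1}_{G_t}\le\mathbf{1}_{x^1\in G}+\mathbf{1}_{x^2\in G}$, so by symmetry it suffices to control $E[\Delta\,\mathbf{1}_{x^1\in G}]$. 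On $\{x^1\in G\}$ we have $d(x^1)\coloneqq n-\|x^1\|_1 < k$, so $\Delta = |d(x^1)-d(x^2)|\le k + d(x^2)$; since $x^1,x^2$ are independent and $E[d(x^2)]=D_t$, this yields
\[
E[\Delta\,\mathbf{1}_{x^1\in G}] \le (k+E[d(x^2)])\,\Pr[x^1\in G] = (k+D_t)\,\Pr[x^1\in G].
\]
Lemma~\ref{lsample} with $\tilde\delta=\tfrac12$ gives $\Pr[x^1\in G]\le\Pr[d(x^1)\le\tfrac12 D_t]\le\exp(-\tfrac18 D_t)$, using $k\le\tfrac12 D_t$ (from $D_t\ge 2k$). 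Combining, $2E[\Delta\,\mathbf{1}_W]\le 2E[\Delta\,\mathbf{1}_{G_t}]\le 4(k+D_t)\exp(-\tfrac18 D_t)\le 6D_t\exp(-\tfrac18 D_t)$, the last step using $k+D_t\le\tfrac32 D_t$. Hence $E[\mu(D_t-D'_{t+1})]\ge \tfrac15 C\sqrt{D_t} - 6D_t\exp(-\tfrac18 D_t)$.

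Finally I would pass from $D'_{t+1}$ to $D_{t+1}$ via the boundary analysis. Because $f_t\in[\tfrac13,1-\tfrac1n]^n$, no frequency sits at the lower boundary, so capping occurs only at the upper boundary and can only increase $D$; by Lemma~\ref{lboundary}\ref{it:boundaryU} we have $D_{t+1}-D'_{t+1}\preceq\tfrac1\mu\Bin(n,\tfrac2n)$, whence $E[\mu(D_{t+1}-D'_{t+1})]\le 2$. Subtracting this from the previous estimate gives exactly $E[\mu D_t - \mu D_{t+1}]\ge \tfrac15 C\sqrt{D_t} - 6D_t\exp(-\tfrac18 D_t) - 2$.

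The main obstacle, flagged in the paper just before the statement, is the positive correlation between $\mathbf{1}_{G_t}$ and $\Delta$: a large discrepancy makes a gap sample more likely, so the tempting identity $E[D_t-D_{t+1}]=\tfrac1\mu\Delta(1-2\Pr[G_t])$ is simply false. The decomposition above sidesteps this by never factorizing $E[\Delta\,\mathbf{1}_{G_t}]$ as $E[\Delta]\Pr[G_t]$; it keeps the indicator attached and exploits only the genuine independence of $x^1$ and $x^2$ plus the crude but correct bound $|d(x^1)-d(x^2)|\le k+d(x^2)$ on $\{x^1\in G\}$. I expect the only further fuss to be the harmless integer rounding in Lemma~\ref{ldroste} and verifying that tie-breaking in the selection step contributes nothing to $W$, which it cannot since equal \onemax values give $\Delta=0$.
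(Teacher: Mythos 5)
Your proposal is correct and takes essentially the same route as the paper's own proof: the same decomposition of $E[\mu D_t - \mu D'_{t+1}]$ into the main term $E[\Delta] \ge \tfrac 15 C\sqrt{D_t}$ minus twice the gap-event contribution, the same bound $\Delta \le k + d(x^2)$ on the event $\{x^1 \in G\}$ combined with independence and $\Pr[x^1\in G]\le \exp(-\tfrac 18 D_t)$ from Lemma~\ref{lsample}, and the same boundary correction of $2$ via Lemma~\ref{lboundary}\ref{it:boundaryU}. The only cosmetic differences are that the paper works with $G^+ = G \cup \{(1,\dots,1)\}$ and conditional expectations instead of your exact identity $\|y^1\|_1 - \|y^2\|_1 = \Delta - 2\Delta\,\mathbf{1}_W$ with $W \subseteq G_t$, which is an equally valid (arguably cleaner) way to organize the same estimates.
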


\begin{proof}
From the definition of the cGA, we note that when $x^1$ and $x^2$ are both not in $G^+$, then $D'_{t+1} \coloneqq n - \|f'_{t+1}\|_1$ satisfies $D'_{t+1} = D_t - \frac 1\mu |\|x^1\|_1 - \|x^2\|_1|$ as if we were optimizing \onemax. In all other cases, we have $D'_{t+1} \le D_t + \frac 1\mu |\|x^1\|_1 - \|x^2\|_1|$. Consequently, 
\begin{align*}
&E[\mu D_{t} - \mu D'_{t+1}]\\ 
&\ge \Pr[x^1, x^2 \notin G^+] \, E[|\|x^1\|_1 - \|x^2\|_1| \mid x^1, x^2 \notin G^+] \\
& \quad - \Pr[\{x^1, x^2\} \cap G^+ \neq \emptyset] \, E[|\|x^1\|_1 - \|x^2\|_1| \mid \{x^1, x^2\} \cap G^+ \neq \emptyset]\\
&= E[|\|x^1\|_1 - \|x^2\|_1|] \\
& \quad - 2 \Pr[\{x^1, x^2\} \cap G^+ \neq \emptyset] \, E[|\|x^1\|_1 - \|x^2\|_1| \mid \{x^1, x^2\} \cap G^+ \neq \emptyset].
\end{align*}
When the frequencies are all at least $\frac 13$, we conclude from Lemma~\ref{ldroste} that $E[|\|x^1\|_1 - \|x^2\|_1|] \ge \frac{1}{5} C \sqrt{D_t}$. 

For the contribution when search points are in~$G^+$, we first note that the second bound of Lemma~\ref{lsample} (with $\delta = \frac 12$ and $D^- = D_t$) and $D_t \ge 2k$ yield 
\begin{equation*}
\Pr[x^1 \in G^+] \le \Pr[d(x^1) \le \tfrac 12 D_t] \le \exp(-\tfrac 18 D_t).
\end{equation*}
Then, exploiting the symmetry between $x^1$ and $x^2$, counting the case $x^1, x^2 \in G^+$ twice, and using again $\frac 12 D_t \ge k$, we compute
\begin{align*}
\Pr[&\{x^1, x^2\} \cap G^+ \neq \emptyset] \, E[|\|x^1\|_1 - \|x^2\|_1| \mid \{x^1, x^2\} \cap G^+ \neq \emptyset] \\
& \le 2 \Pr[x^1 \in G^+] \, E[|\|x^1\|_1 - \|x^2\|_1| \mid x^1 \in G^+] \\
& \le 2 \Pr[x^1 \in G^+] \, \left(E[|\|x^1\|_1 - n| \mid x^1 \in G^+] + E[|n - \|x^2\|_1|]\right) \\
& \le 2 \Pr[x^1 \in G^+] \, \left(k + D_t\right) \\
& \le 2 \exp(-\tfrac 18 D_t) (\tfrac 12 D_t + D_t) = 3 \exp(-\tfrac 18 D_t) D_t.
\end{align*}

In summary, we have
\[E[\mu D_{t} - \mu D'_{t+1}] \ge \tfrac{1}{5} C \sqrt{D_t} - 6 D_t \exp(-\tfrac 18 D_t).\]

By Lemma~\ref{lboundary}, we further have $E[\mu D_{t+1} - \mu D'_{t+1}] \le 2$. Consequently, recalling that the linearity of expectation holds also for dependent random variables, we have 
\begin{align*}
E[\mu D_{t} - \mu D_{t+1}] &= E[\mu D_{t} - \mu D'_{t+1}] - E[\mu D_{t+1} - \mu D'_{t+1}] \\
&\ge \tfrac{1}{5} C \sqrt{D_t} - 6 D_t \exp(-\tfrac 18 D_t) -2.
\end{align*}
\end{proof}

From Lemma~\ref{ldrift}, we obtain the following coarse estimate for the time to reach a frequency distance $D_t$ below $2k$ (or at least below some constant). 

\begin{lemma}\label{lmiddle}
  Let $k \in [1..\frac n2 -1]$. Consider a run of the cGA with arbitrary hypothetical population size $\mu$ (satisfying the well-behaved frequency assumption) and started with a fixed frequency vector $f_0$ instead of the usual initialization $f_0 = (\frac 12, \ldots, \frac 12)$. For all $t \ge 0$, let $D_t := n - \|f_t\|_1$. Let $D'' \ge 2k$ and at least some sufficiently large constant (depending on the constant $C$ from Lemma~\ref{ldroste}). Let $T$ be the first time $t$ that this run reaches a frequency vector $f_t$ with $D_t < D''$ or that there is a frequency $f_{it}$ that is less than $\frac 13$. Then 
  \[E[T] \le \mu D_0.\]
\end{lemma}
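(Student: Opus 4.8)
The plan is to apply the additive drift theorem (Theorem~\ref{tdrift}, part~\ref{it:driftub}) to the scaled frequency distance $\mu D_t$, feeding it the one-step drift estimate of Lemma~\ref{ldrift}. First I would dispose of the trivial case: if $D_0 < D''$ or some initial frequency is already below $\frac13$, then $T = 0$ and the bound $E[T] \le \mu D_0$ holds vacuously. So assume $D_0 \ge D''$ and $f_0 \in [\frac13, 1-\frac1n]^n$, and define the potential $X_t \coloneqq \mu D_t$ for $t < T$ and $X_t \coloneqq 0$ for $t \ge T$. By the well-behaved frequency assumption the frequencies live in the finite set $F_\mu$, so $D_t = n - \|f_t\|_1 \in [0,n]$ takes only finitely many values and $X_t$ takes values in a finite set $S \subseteq \R_{\ge 0}$ with $0 \in S$. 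Since $X_t \ge \mu D'' > 0$ for every $t < T$, we indeed have $T = \inf\{t \ge 0 \mid X_t = 0\}$, matching the hypotheses of the drift theorem.

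The heart of the argument is a pointwise drift comparison valid on the event $t < T$, namely
\[
X_t - X_{t+1} \ge \mu(D_t - D_{t+1}).
\]
If $t+1 < T$ the two sides coincide; if $T = t+1$ then $X_{t+1} = 0 \le \mu D_{t+1}$, so replacing $\mu D_{t+1}$ by $0$ can only enlarge the drop. Now fix a non-zero state $X_t = s$. This conditioning forces $t < T$, hence $D_t = s/\mu \ge D'' \ge 2k$ and $f_{it} \ge \frac13$ for all $i$ (while $f_{it} \le 1 - \frac1n$ always holds by the capping in the cGA), so every admissible configuration $f_t$ satisfies the hypotheses of Lemma~\ref{ldrift}. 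Taking conditional expectations and applying Lemma~\ref{ldrift} gives
\[
E[X_t - X_{t+1} \mid X_t = s] \ge E[\mu(D_t - D_{t+1}) \mid X_t = s] \ge \tfrac15 C \sqrt{D_t} - 6 D_t e^{-D_t/8} - 2.
\]

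It then remains to choose $D''$ as a sufficiently large constant (depending on $C$) so that this lower bound is at least $1$ for all $D_t \ge D''$. Since $D\, e^{-D/8} \to 0$ as $D \to \infty$, I can pick $D''$ large enough that $6 D e^{-D/8} \le 1$ and $\tfrac15 C \sqrt D \ge 4$ simultaneously for all $D \ge D''$, whence the drift is at least $4 - 1 - 2 = 1 =: \delta$. Theorem~\ref{tdrift}, part~\ref{it:driftub}, then yields $E[T] \le E[X_0]/\delta = \mu D_0$, as $X_0 = \mu D_0$ is deterministic. The step I expect to require the most care is the bookkeeping around the \emph{disjunctive} stopping condition: the run terminates either because $D_t$ drops below $D''$ or because a frequency falls below $\frac13$, and in the latter case $D_{t+1}$ may be large or even increasing. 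Collapsing both stopping reasons into the single absorbing value $X_t = 0$, and verifying that this redefinition only ever helps the drift, is the delicate point; once it is in place, the remainder follows directly from Lemma~\ref{ldrift} together with the elementary fact that $\tfrac15 C \sqrt D - 6 D e^{-D/8} - 2$ eventually exceeds $1$.
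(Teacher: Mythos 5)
Your proposal is correct and follows essentially the same route as the paper's proof: both replace $(D_t)$ by a process absorbed at $0$ once either stopping condition triggers, verify via Lemma~\ref{ldrift} that the absorbed process has pointwise drift at least $\frac{1}{\mu}$ (in your scaling, drift at least $1$ for $\mu D_t$) when $D''$ is a sufficiently large constant, and conclude with the additive drift theorem (Theorem~\ref{tdrift}\ref{it:driftub}). Your explicit verification that absorption at $0$ only increases the one-step drop, and that the finite state space requirement holds under the well-behaved frequency assumption, just makes rigorous what the paper leaves implicit.
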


\begin{proof}
  Based on the run of the cGA, we define a random process $\tilde D_t$ as follows. If for some $s \in [0..t]$ we have $D_s < D''$ or there is an $i \in [1..n]$ with $f_{is} < \frac 13$, then $\tilde D_{t} = 0$. Otherwise, we let $\tilde D_{t} = D_t$. In other words, the process $(\tilde D_t)$ agrees with $(D_t)$ while $(D_t)$ is at least $D''$ and there is no frequency below $\frac 13$, and then is constant zero. 

  By Lemma~\ref{ldrift} and using our assumption that $D''$ is a large absolute constant, we have $E[\tilde D_t - \tilde D_{t+1}] \ge \frac{1}{\mu}$ whenever $D_t \ge D''$ and $f_t \in [\frac 13,1]^n$, that is, we have $E[\tilde D_t - \tilde D_{t+1} \mid \tilde D_t > 0] \ge \frac{1}{\mu}$ for all $t \ge 0$.

  Since $T = \inf\{t \ge 0 \mid \tilde D_t = 0\}$, the additive drift theorem (Theorem~\ref{tdrift}\ref{it:driftub}) yields $E[T] \le \frac{D_0}{1/\mu}$. 
\end{proof}

We end this section of preliminary results with an argument showing that the frequencies stay away from the lower boundary for a decent amount of time. On the formal level, this argument will be used to argue that at the times $T$ estimated on Lemmas~\ref{ljump} and~\ref{lmiddle}, we have the desired small $D_t$ value and not the case that a frequency went below $\frac 13$. On the intuitive level, this argument is necessary for two reasons. On the one hand, as can be seen from the proof or via simple counter-examples, a lower bound for the probability of sampling the optimum such as Lemma~\ref{lopt} is not anymore true if arbitrarily small frequencies are allowed. On the other hand, small frequencies support that the two offspring sampled in one iteration agree in the corresponding bit. In this case, no change of the frequency is possible, which slows down the progress and rules out progress guarantees such as Lemma~\ref{ldrift}. 

A guarantee that all frequencies stay away from the lower boundary in a run of the cGA on jump functions was also given in~\cite[Lemma~2.4]{HasenohrlS18}. Unfortunately, the proof appears not complete to us. It seems to us that the main technical prerequisite of this result, Lemma~2.2~in~\cite{HasenohrlS18} with a proof of a little over one page in the condensed proceedings style, is not correct for two reasons. Since the proof of Lemma~2.2 never refers to the frequency boundaries, it is not clear if it is applicable for the cGA with these boundaries. Rather, a frequency vector having one entry $f_{it}=\frac 1n$ and another one $f_{jt}=1 - \frac 1n$ seems to be a counter-example (note that the frequency vector is called $p_t$ instead of $f_t$ in~\cite{HasenohrlS18}). However, also for the case without boundaries counter-examples seem to exist for all values of $\mu$, e.g., the frequency vector $f_t = (\frac 1 {100}, \frac 12)$. 

We did not see how to repair the otherwise elegant argument via the Azuma-Hoeffding inequality. For this reason, using a sequence of elementary reductions, we argue that the true random process of a frequency, which is not a Markov process when regarding one frequency in isolation, can be pessimistically replaced by a fair random walk on an unbounded frequency domain. For the analysis of the latter, classic Chernoff bounds can be used. This general approach was also taken in~\cite{Droste06}, however in the easier situation that there are no frequency boundaries (apart from the trivial boundaries, which are absorbing). For this reason, some additional arguments are necessary in our situation.

\begin{lemma}\label{lconc}
  Let $\mu$ be arbitrary (except that it satisfies the well-behaved frequency assumption). Let $\eps > 0 $. Let $Z_0, Z_1, \dots$ be any random process on $F_\mu$ (defined in~\eqref{eq:fmu}) such that 
  \begin{enumerate}
  \item $Z_0 = \frac 12$, 
  \item for all $t = 0, 1, \dots$ such that $Z_t \ge \frac 12 - \eps$ there are numbers $p_t, q_t, r_t \in [0,1]$, depending on $Z_0, Z_1, \dots, Z_t$, such that $p_t + q_t + r_t = 1$ and
  \begin{align*}
  \Pr[Z_{t+1} = Z_t \mid Z_0, \dots, Z_t] &=p_t,\\
  \Pr[Z_{t+1} = Z_t + \tfrac 1 \mu \mid Z_0, \dots, Z_t] &= q_t,\\
  \Pr[Z_{t+1} = Z_t - \tfrac 1 \mu \mid Z_0, \dots, Z_t] &= r_t.
  \end{align*}
  We further assume that 
  $q_t \ge r_t$ when $Z_t \neq 1 - \frac 1n$. 
  \end{enumerate} 
  Then for all $T \in \N$, 
  \[\Pr[\exists t \in [0..T] : Z_t < \tfrac 12 - \eps] \le 2 \exp\left(-\frac{\mu^2 \eps^2}{2T}\right).\]
\end{lemma}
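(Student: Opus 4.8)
The plan is to translate the statement that the frequency $Z_t$ stays above $\frac12-\eps$ into a statement about the one–dimensional random walk formed by its increments, and then to dominate that walk by an unbiased, unbounded walk with \emph{independent} steps so that the maximal Chernoff bound (Theorem~\ref{tchernoffmax}) applies. First I would set $X_s := \mu(Z_s - Z_{s+1}) \in\{-1,0,+1\}$, recording whether the frequency actually moved down, stayed, or moved up in step $s$ (reading the actual, possibly capped, positions). Telescoping then gives the exact identity $Z_t = \frac12 - \frac1\mu\sum_{s=0}^{t-1}X_s$, which stays valid even when the upper cap turns a would-be up-move into a stay (it simply records $X_s=0$). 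Writing $\Sigma_t := \sum_{s<t}X_s$, the event to be bounded becomes $\{\exists t\le T : \Sigma_t > \eps\mu\}$. At every position other than the upper boundary, hypothesis~(ii) gives $\Pr[X_s=+1\mid\text{history}] = r_s \le q_s$, and since $p_s+q_s+r_s=1$ this forces $r_s\le\frac12$; conditioned on the step being a genuine move, the move is downward with probability $\frac{r_s}{q_s+r_s}\le\frac12$.

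Next I would remove the ``stay'' steps and the upward bias simultaneously. Enumerate the genuine moves that take place away from the upper boundary as $\tau_1<\tau_2<\dots$, set $Y_j=X_{\tau_j}\in\{-1,+1\}$, and note that $\Pr[Y_j=+1\mid\text{everything before }\tau_j]\le\frac12$ uniformly. By the standard conditional stochastic-domination argument (see~\cite{Doerr19tcs}) there is a coupling with an i.i.d.\ fair walk $\xi_1,\xi_2,\dots$ (each $\pm1$ with probability $\frac12$) such that $Y_j\le\xi_j$ holds pathwise for all $j$, so every partial sum of the $Y_j$ is dominated by $S_m:=\sum_{j\le m}\xi_j$. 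Since at most one move occurs per iteration, at most $T$ such moves happen up to time $T$. Applying Theorem~\ref{tchernoffmax} to $S_m$ with $a_j=-1$, $b_j=1$ and $\lambda=\eps\mu$ yields $\Pr[\exists m\le T: S_m\ge\eps\mu]\le\exp(-\eps^2\mu^2/(2T))$, which is exactly the target bound apart from the factor $2$.

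The main obstacle, and the source of that factor $2$, is the upper boundary, where hypothesis~(ii) is not assumed and $X_s$ can have positive conditional mean. I would handle it by splitting the bad event according to whether a move \emph{at} the upper boundary occurs before $\Sigma$ first exceeds $\eps\mu$. If none does, then all relevant increments are good moves, $\Sigma_t\le S_{(\#\text{good moves before }t)}$, and the estimate above applies verbatim. If an upper-boundary move does occur, then at that instant $Z=1-\frac1n$, i.e.\ $\Sigma=-(\frac12-\frac1n)\mu$; reaching $\Sigma>\eps\mu$ afterwards forces the good-move walk to rise by more than $(\frac12-\frac1n+\eps)\mu$ over a sub-interval. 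This is vacuous once $\eps>\frac12-\frac1n$ (because $Z_t\ge\frac1n$ makes $\{Z_t<\frac12-\eps\}$ impossible), and otherwise a range/up-crossing bound for $S$ gives a probability that is no larger than $\exp(-\eps^2\mu^2/(2T))$, since the required rise exceeds $2\eps\mu$. Adding the two contributions yields the claimed $2\exp(-\mu^2\eps^2/(2T))$.

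I expect the delicate points to be exactly the ones in the last paragraph: making the telescoping identity rigorous under capping, securing the \emph{uniform-over-the-full-history} domination needed to invoke the i.i.d.\ fair-walk comparison (rather than a mere step-by-step mean condition), and verifying quantitatively that the upper-boundary contribution is genuinely dominated by the main term. The first two phases, by contrast, I expect to be routine once the increment bookkeeping is fixed, which is why I would present the reduction to $\Sigma_t$ and the move-time i.i.d.\ comparison first and isolate the boundary case at the end.
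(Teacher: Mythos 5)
Your reduction to the walk $\Sigma_t$, the passage to the subsequence of genuine moves, and the conditional pathwise coupling of those moves with an i.i.d.\ fair walk are all sound, and Case~A then follows from Theorem~\ref{tchernoffmax} exactly as you say. This part is in fact a leaner substitute for what the paper does: the paper replaces the biased transition probabilities by fair ones via a backward induction over time (proving monotonicity of the failure probability with a separate coupling), arrives at a fair walk with reflecting boundaries, and only then applies the maximal Hoeffding bound. So up to the boundary issue your route is a genuine, and arguably cleaner, alternative.

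The genuine gap is Case~B. There you need, for the fair walk $S$, a bound on the probability that some sub-interval increment $S_b-S_a$ with $a<b\le T$ exceeds roughly $2\eps\mu$, and you assert that ``a range/up-crossing bound'' yields at most $\exp(-\eps^2\mu^2/(2T))$. Theorem~\ref{tchernoffmax} controls only prefix sums, not increments over arbitrary sub-intervals, and the obvious reductions to it fail quantitatively: a union bound over the left endpoint $a$ costs a factor $T$, which in the regime $T=\Theta(\eps^2\mu^2)$ (where the lemma's bound is barely non-trivial but $\ln T$ may be arbitrarily large) destroys the estimate; and bounding the maximal increment by $\max_b S_b+\max_a(-S_a)$ produces a third error term of order $\exp(-(\eps\mu-\tfrac12)^2/(2T))$, so the total exceeds the claimed $2\exp(-\mu^2\eps^2/(2T))$. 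What would actually be needed is a L\'evy-type reflection argument showing that the drawup $\max_{m\le T}\bigl(S_m-\min_{j\le m}S_j\bigr)$ exceeds $\lambda$ with probability at most $2\exp(-\lambda^2/(2T))$, combined with the slack from $\lambda\approx 2\eps\mu$ and a separate treatment of the parameter ranges where the target bound is trivial; none of this is in your sketch, and your intermediate claim for Case~B is not what these arguments deliver. Note that the paper sidesteps the boundary case entirely by bounding the \emph{two-sided} exit event $\{\exists t\colon Z_t\notin[\frac12-\eps,\frac12+\eps]\}$: since the upper boundary $1-\frac1n$ lies outside (or at the edge of) this interval, the process cannot be affected by it before the first two-sided exit, so the free-walk maximal bound applies directly and the factor $2$ is simply the union of the two one-sided exits. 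Adopting that enlargement of the event would let you delete Case~B altogether and close the gap.
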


\begin{proof}
  For the ease of the argument, we can without loss of generality assume that condition~(ii) also holds when $Z_t < \frac 12 - \eps$. We conduct a sequence of reductions to a fair unbiased random walk on an infinite line. We first observe that we can assume $p_t = 0$ for all $t$. The event $Z_{t+1} = Z_t$ that the process does not move only slows down the process in the sense that it visits fewer states, and thus is less likely to approach the lower boundary. 
  
  We now argue that w.l.o.g. we can assume that $q_t = r_t= \tfrac 12$ for all $t \in [0..T-1]$ except in the cases $Z_t \in \{\frac 1n, 1-\frac 1n\}$. To make this argument formal, we inductively modify $q_t$ and $r_t$ in the time interval $t \in [0..T-1]$. The modified process will be denoted by $(\tilde Z_t)_{t = 0, \ldots, T}$ and described via $\tilde q_t$ and $\tilde r_t$, $t \in [0..T-1]$, which again are functions of $\tilde Z_0, \ldots, \tilde Z_t$. We start with $(\tilde Z_t)$ being a copy of $(Z_t)$. We denote by
  \begin{align*}
  P_{it} &\coloneqq \Pr[\exists s \in [t..T] : Z_s < \tfrac 12 - \eps \mid Z_t = i],\\ 
  \tilde P_{it} &\coloneqq \Pr[\exists s \in [t..T] : \tilde Z_s < \tfrac 12 - \eps \mid \tilde Z_t = i]\\
  \end{align*}
  the ``failure probabilities'' of both processes given a particular starting point and time. 
  
  Assume that $(\tilde Z_t)$ is such that for some $t_0 \in [1..T]$ we have that for all $s \in [t_0..T-1]$ 
  \begin{enumerate}
  \item $\tilde q_s = \tilde r_s = \frac 12$ regardless of $\tilde Z_0, \ldots, \tilde Z_s$ (except in the boundary cases $\tilde Z_s \in \{\frac 1n, 1 - \frac 1n\}$);
  \item $P_{is} \le \tilde P_{is}$ for all $i \in F_\mu$.
  \end{enumerate}
  Note that our initial copy $(\tilde Z_t)$ satisfies these conditions for $t_0 = T$. We now modify $(\tilde Z_t)$ so that the new process satisfies these conditions already for $t_0-1$. To this end, let $(Z'_t, q'_t, r'_t)$ be a copy of $(\tilde Z_t, \tilde q_t, \tilde r_t)$ expect that we define $q'_{t_0-1} = r'_{t_0-1} = \frac 12$ (except in the boundary cases  $Z'_{t_0-1} \in \{\frac 1n, 1 - \frac 1n\}$). Since from time $t_0$ on $(Z'_t)$ equals $(\tilde Z_t)$, we have $P'_{is} = \tilde P_{is} \ge P_{is}$ for all $i \in F_\mu$. Since further from time $t_0$ on $(Z'_t)$ and $(\tilde Z_t)$ are a fair random walks with reflecting boundaries, a simple coupling argument shows $P'_{i-1/\mu,t_0} \ge P'_{i+1/\mu,t_0}$ for all $i \in F_{\mu} \setminus \{\frac 1n, 1 - \frac 1n\}$. From this, $\tilde r_{t_0-1} \le \tilde q_{t_0-1}$, and $\tilde r_{t_0-1} + \tilde q_{t_0-1} = 1$, we obtain for all $i \in F_\mu \setminus \{\frac 1n, 1 - \frac 1n\}$ that 
  \begin{align*}
  P'_{i,t_0-1} &= \tfrac 12 P'_{i-1/\mu, t_0} + \tfrac 12 P'_{i+1/\mu, t_0} \\
  &\ge \tilde r_{t_0-1} P'_{i-1/\mu, t_0} + \tilde q_{t_0-1} P'_{i+1/\mu, t_0} = \tilde P_{i,t_0-1} \ge P_{i,t_0}.
  \end{align*}
  In the boundary cases, we trivially have $P'_{1/n,t_0-1} = 1 =  P_{1/n,t_0-1}$ and $P'_{1 - 1/n,t_0-1} = P'_{1 - 1/n - 1/\mu, t_0} = \tilde P_{1 - 1/n - 1/\mu, t_0} \ge P_{1 - 1/n - 1/\mu, t_0} = P_{1 - 1/n, t_0-1}$. This proves our claim. An elementary induction gives a process $(\tilde Z_t)$ that satisfies (i), (ii) above from $t_0 = 0$ on. This process, then, is a simple unbiased random walk with reflecting boundaries. From (ii) we see that such an unbiased random walk is not better than the original process $(Z_t)$ in terms of avoiding to go below $\frac 12 - \eps$.
  
  Hence we can now assume that $(Z_t)$ is an unbiased random walk on $F_\mu$ with reflecting boundaries. We shall show that 
\[\Pr[\exists t \in [0..T] : Z_t \notin [\tfrac 12-\eps,\tfrac 12 +\eps]] \le 2 \exp\left(-\frac{\mu^2 \eps^2}{2T}\right).\]

 Being interested in the event that the process reaches a state outside ${[\tfrac 12-\eps,\tfrac 12 +\eps]}$ at least once, we can also drop the boundary conditions and assume that we have $Z_{t+1} \in \{Z_t - \frac 1 {\mu}, Z_t + \frac 1 \mu\}$ uniformly at random at all times~$t$. We can now rewrite the $Z_t$ as follows. Let $X_1, \dots, X_T$ be independent random variables uniformly distributed on $\{-\frac 1 \mu, \frac 1 \mu\}$. Define $Z''_t \coloneqq \frac 12 + \sum_{i=1}^t X_t$ for all $t \in [0..T]$. Then $(Z_0, \ldots, Z_T)$ and $(Z''_0, \ldots, Z''_T)$ are identically distributed. Consequently, we can apply to $(Z_t)$ and $(Z''_t)$ the additive Chernoff bound in the sharper version working also for partial sums, Theorem~\ref{tchernoffmax}, and obtain 
\begin{align*}
\Pr[\exists &t \in [0..T] : Z_t \notin [\tfrac 12-\eps,\tfrac 12 +\eps]] \\
& = \Pr[\exists t \in [0..T] : |Z_t - E[Z_t]| > \eps] \\
&\le 2 \exp\left(-\frac{2 \eps^2}{T (\frac{2}{\mu})^2}\right) = 2 \exp\left(-\frac{\mu^2 \eps^2}{2T}\right).
\end{align*}  
\end{proof}

To apply Lemma~\ref{lconc}, we need a deeper understanding of the random process describing a single frequency. For this, we build on the following estimate of the expected change of a frequency that is not affected by the boundaries in the \onemax process. This result was proven in~\cite[Lemma~3]{SudholtW19}. 

\begin{lemma}\label{lonemax2}
  Let $\mu$ be arbitrary (but satisfying the well-behaved frequency assumption). Consider a run of the cGA optimizing \onemax. Consider an iteration starting with a frequency vector $f_t$. Let $i \in [1..n]$ be such that $\frac 1n + \frac 1\mu \le f_{it} \le (1 - \frac 1n) - \frac 1\mu$. Then
  \[E[f_{i,t+1} - f_{it}] \ge \frac{2}{11} \frac{f_{it} (1-f_{it})}{\mu} \left(\sum_{j \neq i} f_{jt} (1-f_{jt})\right)^{-1/2}.\]
\end{lemma}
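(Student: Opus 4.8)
The plan is to compute the one-step drift of the isolated frequency $f_{it}$ exactly and then to reduce the claim to an anti-concentration estimate for a sum of independent $\{-1,0,1\}$-valued variables. First I would use that the hypothesis $\tfrac 1n + \tfrac 1\mu \le f_{it} \le (1-\tfrac 1n)-\tfrac 1\mu$ ensures the $i$-th frequency is never capped at a boundary in this step, so that $f_{i,t+1}-f_{it} = \tfrac 1\mu(y^1_i - y^2_i)$ and hence $E[f_{i,t+1}-f_{it}] = \tfrac 1\mu E[y^1_i - y^2_i]$. Writing $a \coloneqq x^1_i$, $b \coloneqq x^2_i$, $A \coloneqq \sum_{j\neq i} x^1_j$, $B \coloneqq \sum_{j\neq i} x^2_j$ and $R \coloneqq A-B$, I observe that bit $i$ can only move when $a \neq b$, and that the two events $(a,b)=(1,0)$ and $(a,b)=(0,1)$ each have probability $f_{it}(1-f_{it})$ and are independent of $R$.

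Next I would run the case analysis using that on \onemax the winner is the offspring with more ones (ties broken towards $x^1$). If $(a,b)=(1,0)$ then $x^1$ wins precisely when $R \ge -1$, contributing $+1$ to $y^1_i-y^2_i$, and contributes $-1$ when $R \le -2$; if $(a,b)=(0,1)$ then $x^1$ wins precisely when $R \ge 1$, now contributing $-1$, while $x^2$ wins and contributes $+1$ when $R \le 0$. By complementary counting, $\Pr[R\ge -1]-\Pr[R\le -2] = 1 - 2\Pr[R\le -2]$ and $\Pr[R\le 0]-\Pr[R\ge 1] = 2\Pr[R\le 0]-1$, and their sum telescopes to $2(\Pr[R\le 0]-\Pr[R\le -2]) = 2\Pr[R\in\{-1,0\}]$. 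This yields the exact identity
\[E[y^1_i - y^2_i] = 2\, f_{it}(1-f_{it})\,\Pr[R \in \{-1,0\}],\]
so the whole statement reduces to showing $\Pr[R\in\{-1,0\}] \ge \tfrac{1}{11}V_i^{-1/2}$ with $V_i \coloneqq \sum_{j\neq i} f_{jt}(1-f_{jt}) = \Var(A)$.

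The step I expect to be the main obstacle is this anti-concentration bound. Since $\Pr[R\in\{-1,0\}] \ge \Pr[R=0] = \Pr[A=B] = \sum_k \Pr[A=k]^2$, it suffices to lower-bound the collision probability of the i.i.d.\ copies $A,B$. I would restrict the sum to the integers $k$ with $|k-E[A]| < \sqrt 5\,\sqrt{V_i}$: by Chebyshev's inequality these carry probability at least $1-\tfrac 15 = \tfrac 45$, and there are at most $2\sqrt 5\,\sqrt{V_i}+1$ of them, so Cauchy--Schwarz gives $\sum_k \Pr[A=k]^2 \ge (4/5)^2/(2\sqrt 5\,\sqrt{V_i}+1)$. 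The crucial point is that, because every frequency of the cGA lies in $[\tfrac 1n,1-\tfrac 1n]$ and $z\mapsto z(1-z)$ is minimized at the endpoints, one has $V_i \ge (n-1)\tfrac 1n(1-\tfrac 1n) = \tfrac{(n-1)^2}{n^2} \ge \tfrac 14$; this lower bound on $V_i$ is exactly what absorbs the additive $+1$ in the denominator and brings the constant down to $\tfrac{1}{11}$. Putting the pieces together gives $E[f_{i,t+1}-f_{it}] = \tfrac 1\mu E[y^1_i - y^2_i] \ge \tfrac{2}{11}\,\tfrac{f_{it}(1-f_{it})}{\mu}\,V_i^{-1/2}$, as required. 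A local central limit theorem would deliver the order $\Theta(V_i^{-1/2})$ with less bookkeeping, but the elementary Chebyshev/Cauchy--Schwarz route is preferable here since it produces an explicit constant and makes transparent where the cGA-specific bound $V_i \ge \tfrac 14$ enters.
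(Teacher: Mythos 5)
Your proof is correct, but note first that the paper itself contains no proof of Lemma~\ref{lonemax2}: the sentence immediately preceding it states that the result ``was proven in~\cite[Lemma~3]{SudholtW19}'', so the lemma is imported from Sudholt and Witt and there is no in-paper argument to compare against. What you have written is a self-contained replacement for that citation, and it checks out. The exact identity $E[y^1_i-y^2_i] = 2 f_{it}(1-f_{it})\Pr[R\in\{-1,0\}]$ is right: the hypothesis $\frac 1n + \frac 1\mu \le f_{it} \le (1-\frac 1n)-\frac 1\mu$ indeed rules out capping at bit $i$; the events $(a,b)=(1,0)$ and $(a,b)=(0,1)$ each have probability $f_{it}(1-f_{it})$ and are independent of $R$; and with the tie-breaking of Algorithm~\ref{alg:cga} (ties won by $x^1$) the two conditional contributions are $1-2\Pr[R\le -2]$ and $2\Pr[R\le 0]-1$, which sum to $2(\Pr[R\le 0]-\Pr[R\le -2]) = 2\Pr[R\in\{-1,0\}]$ as you claim. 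The anti-concentration step also goes through with your constants: since $A$ and $B$ are i.i.d., $\Pr[R=0]=\sum_k \Pr[A=k]^2$; Chebyshev with radius $\sqrt 5\,\sqrt{V_i}$ gives a window of probability at least $\frac 45$ containing at most $2\sqrt 5\,\sqrt{V_i}+1$ integers; Cauchy--Schwarz then yields $\Pr[R=0]\ge \frac{16/25}{2\sqrt 5\sqrt{V_i}+1}$; and the inequality $\bigl(\frac{176}{25}-2\sqrt 5\bigr)\sqrt{V_i}\ge 1$ needed to conclude $\Pr[R\in\{-1,0\}]\ge \frac 1{11}V_i^{-1/2}$ holds for all $V_i\ge \frac 14$ (its left side is increasing in $V_i$ and equals about $1.28$ at $V_i=\frac 14$), while $V_i \ge (n-1)\cdot\frac 1n(1-\frac 1n) = \frac{(n-1)^2}{n^2}\ge \frac 14$ follows from the frequency borders exactly as you say. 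This is the same kind of argument as in the cited source---an exact conditional drift computation reduced to anti-concentration of the difference of two i.i.d.\ Poisson--binomial variables---so your proposal loses nothing relative to the external proof; its main value here is that it makes the lemma self-contained, including an explicit derivation of the constant $\frac{2}{11}$.
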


From Lemmas~\ref{lconc} and~\ref{lonemax2}, we now obtain the following lower bound guarantee for the frequencies in the optimization process on subjump functions. Regarding the restriction $k \ge 17$, we recall that a subjump function with jump size smaller than $17$ also is a subjump function with jump size $17$. The lemma thus applies also to these (in a suitable manner). We could have alternatively formulated the lemma for all $k$ and defined $D'' = \max\{2k+1,35\}$.

\begin{lemma}\label{lfreqlb}
  Let $k \in [1..n]$ be arbitrary. Consider the run of the cGA with hypothetical population size $\mu$ on a subjump function with jump size $k \ge 17$. Let $D_t = n - \|f_t\|_1$ for all $t$. Let $D'' = 2k+1$ and $ T'' = \inf\{t \ge 0 \mid D_t \le D''\}$. Then for all $T \in \N$, with $T''' \coloneqq \min\{ T'',T\}$, we have 
  \[\Pr[\exists i \in [1..n] \exists t \in [0..T'''] : f_{it} < \tfrac 13] \le 2 n \exp\left(- \frac{\mu^2}{72 T}\right).\] 
\end{lemma}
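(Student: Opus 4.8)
The plan is to apply the concentration result of Lemma~\ref{lconc} separately to each single frequency $f_i$, with $\eps = \frac 16$ so that $\frac 12 - \eps = \frac 13$, and then to take a union bound over the $n$ coordinates. Since $\eps^2/2 = 1/72$, this immediately produces the claimed bound $2n \exp(-\mu^2/(72T))$, provided each frequency can be cast as a process $(Z_t)$ of the type required by Lemma~\ref{lconc}. The obstacle is that the drift condition ``up at least as likely as down'' will only be available while $D_t > D''$, i.e.\ before time $T''$. I therefore first replace the true run by a modified run that behaves like the cGA on the subjump function up to time $T''$ and then switches to optimizing \onemax (exactly the coupling device used in Lemma~\ref{ljump}). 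This modified run agrees with the original on $[0..T''] \supseteq [0..T''']$, so the failure event is unchanged on the relevant time window, while the \onemax dynamics after $T''$ supply a clean drift statement for all later times.

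For a single coordinate $i$ of the modified run I let $Z_t = f_{it}$ and take $q_t, r_t$ to be the probabilities, conditional on the history of $Z$ alone, that $f_i$ moves up resp.\ down by $\frac 1\mu$ in iteration $t+1$. Because $f_{i,t+1} - f_{it} = \frac 1\mu (y^1_i - y^2_i)$ in the interior, these satisfy $q_t - r_t = \mu\, E[f_{i,t+1} - f_{it} \mid \text{history}]$, so by taking conditional expectations it suffices to prove the pointwise inequality $E[y^1_i - y^2_i \mid f] \ge 0$ for every reachable frequency vector $f$ with $f_i \ge \frac 13$ and $f_i \neq 1 - \frac 1n$. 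After $T''$ the run optimizes \onemax, and here Lemma~\ref{lonemax2} (whose hypotheses constrain only $f_i$, not the other coordinates) gives $E[y^1_i - y^2_i \mid f] = \mu\, E[f_{i,t+1}-f_{it}] \ge \frac{2}{11} f_i(1-f_i)(\sum_{j \neq i} f_j(1-f_j))^{-1/2} \ge \frac{2}{11} f_i(1-f_i) D_t^{-1/2} > 0$, using $\sum_{j\neq i} f_j(1-f_j) \le \sum_j (1-f_j) = D_t$.

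The heart of the argument is the same inequality in the subjump regime, where $D_t > D''$. Here I compare the subjump update with the \onemax update on the same pair $(x^1,x^2)$: the two sortings into $(y^1,y^2)$ can differ only when a sampled point lies in the gap $G$, and the $i$-th update vanishes unless $x^1_i \neq x^2_i$. Hence $E[y^1_i - y^2_i \mid f]$ differs from its \onemax value by at most $2\Pr[x^1_i \neq x^2_i,\ \{x^1,x^2\}\cap G \neq \emptyset] \le 4\Pr[x^1_i \neq x^2_i,\ x^1 \in G]$. Splitting this last probability according to the value of $x^1_i$ and conditioning, each term carries a factor $f_i$ or $1-f_i$ from $x^1_i$ together with a factor $1-f_i$ or $f_i$ from the independent bit $x^2_i$, while the residual conditional gap probability is $\exp(-\Omega(D_t))$ by the lower-tail bound of Lemma~\ref{lsample} (the conditional mean number of zeros among the other coordinates is still $D_t - O(1) \ge 2k$). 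Thus the correction is at most a constant times $f_i(1-f_i)\exp(-\Omega(D_t))$, which is dominated by the \onemax drift $\frac{2}{11} f_i(1-f_i) D_t^{-1/2}$ once $D_t$ exceeds a sufficiently large constant; since $D'' = 2k+1 \ge 35$ by $k \ge 17$, this holds throughout the subjump regime, giving $E[y^1_i - y^2_i \mid f] \ge 0$ there as well.

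Having verified the hypotheses of Lemma~\ref{lconc} for $(Z_t) = (f_{it})$ with $\eps = \frac 16$, I obtain $\Pr[\exists t \in [0..T] : f_{it} < \frac 13] \le 2\exp(-\mu^2/(72T))$ for the modified run, hence the same bound for the original run restricted to $[0..T''']$; a union bound over $i \in [1..n]$ then yields the factor $2n$ and finishes the proof. I expect the only genuinely delicate point to be the drift comparison in the subjump regime: one must extract \emph{both} factors $f_i$ and $1-f_i$ from the gap contribution (rather than only one, as a crude bound gives), so that it can be measured against the \onemax drift, which is itself proportional to $f_i(1-f_i)$ and otherwise only polynomially small in $D_t^{-1/2}$.
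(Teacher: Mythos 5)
Your proposal is correct and takes essentially the same route as the paper's proof: a per-coordinate application of Lemma~\ref{lconc} with $\eps = \frac 16$ followed by a union bound over the $n$ bits, where the drift condition $q_t \ge r_t$ is verified by comparing the subjump update with the \onemax update, lower-bounding the latter via Lemma~\ref{lonemax2} and bounding the gap correction by a constant times $f_{it}(1-f_{it})\exp(-\Omega(D_t))$ using $\Pr[x^1_i \neq x^2_i] = 2f_{it}(1-f_{it})$ together with the conditional lower-tail estimate of Lemma~\ref{lsample} --- exactly the paper's computation, including the point you single out of extracting both factors $f_{it}$ and $1-f_{it}$. The only (cosmetic) difference is how the process is continued after $T''$ so that Lemma~\ref{lconc} can be applied with horizon $T$: you switch the objective to \onemax (reusing the coupling device of Lemma~\ref{ljump}) and invoke Lemma~\ref{lonemax2} in that phase, whereas the paper simply freezes the frequencies at their values at time $T''$, which is slightly simpler since a constant process satisfies the hypotheses of Lemma~\ref{lconc} trivially.
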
  

\begin{proof}
Consider some time $t$ such that $f_t \in [\frac 13,1]^n$ and $D_t \ge D''$. Consider a fixed bit $i\in [1..n]$ such that $f_{it} \neq 1-\frac 1n$. If we were optimizing the \onemax function, then by Lemma~\ref{lonemax2},
  \begin{align*}
  \Pr[&f_{i,t+1} = f_{it} + \tfrac 1\mu] - \Pr[f_{i,t+1} = f_{it} - \tfrac 1\mu]\\
  & = \mu E[f_{i,t+1} - f_{it}]\\
  & \ge \frac{2}{11} f_{it}(1-f_{it}) \left(\sum_{j\neq i} f_{jt}(1-f_{jt})\right)^{-1/2}\\
  & \ge \frac{2}{11} f_{it}(1-f_{it}) \left(D_t\right)^{-1/2}.
  \end{align*}
  
  Regardless of whether we optimize \onemax or a subjump function, the events $f_{i,t+1} = f_{it} + \tfrac 1\mu$ and $f_{i,t+1} = f_{it} - \tfrac 1\mu$ can only occur when the two search points sampled in this iteration satisfy $x^1_i \neq x^2_i$. The definition of $f_{i,t+1}$ in the subjump case differs from the \onemax case at most when at least one of $x^1$ and $x^2$ lie in the gap $G_{nk}$. Hence the following coarse correction of the above estimate is valid for the optimization of subjump functions of jump size $k$.
  \begin{align*}
  \Pr[&f_{i,t+1} = f_{it} + \tfrac 1\mu] - \Pr[f_{i,t+1} = f_{it} - \tfrac 1\mu]\\
  & \ge \tfrac{2}{11} f_{it}(1-f_{it}) \left(D_t\right)^{-1/2} - \Pr[(x^1_i \neq x^2_i) \wedge (\{x^1,x^2\} \cap G_{nk} \neq \emptyset)].
  \end{align*}
  We now estimate this correction term. We note that 
  \begin{align*}
  \Pr[&(x^1_i \neq x^2_i) \wedge (\{x^1,x^2\} \cap G_{nk} \neq \emptyset)] \\
  &= \Pr[x^1_i \neq x^2_i] \cdot \Pr[\{x^1,x^2\} \cap G_{nk} \neq \emptyset \mid x^1_i \neq x^2_i].
  \end{align*}
  By symmetry and the union bound, we have 
  \[\Pr[\{x^1,x^2\} \cap G_{nk} \neq \emptyset \mid x^1_i \neq x^2_i] \le 2 \Pr[x^1 \in G_{nk} \mid x^1_i \neq x^2_i].\]
  Conditional on $x^1_i \neq x^2_i$, the bit string $x^1$ is sampled from $\Sample(f_t)$, however, conditional on the $i$-th bit being zero or one. In either case, to have $x^1 \in G_{nk}$, we need that $\tilde D = \sum_{j \neq i} (1-x^1_j)$ is at most $k \le \frac 12 (D_t-1)$, where we recall that $D_t \ge D'' = 2k+1$. Since $E[\tilde D] = D_t - (1-f_{it}) \ge D_t-1$, by Lemma~\ref{lsample} with $\delta = \frac 12$ this event happens with probability at most $\exp(-\tfrac 18 (D_t-1))$. Together with $\Pr[x^1_i \neq x^2_i] = 2 f_{it}(1-f_{it})$, we obtain
  \begin{align*}
  \Pr[&f_{i,t+1} = f_{it} + \tfrac 1\mu] - \Pr[f_{i,t+1} = f_{it} - \tfrac 1\mu]\\
  & \ge \tfrac{2}{11} f_{it}(1-f_{it}) \left(D_t\right)^{-1/2} - 2 f_{it}(1-f_{it}) \exp(-\tfrac 18 (D_t-1)),
  \end{align*}
  which is non-negative since $D_t \ge D'' = 2k+1 \ge 35$.
   
  Consequently, the process $(f_{it})_t$ satisfies the assumptions of Lemma~\ref{lconc} up to time $ T''$. If $ T'' < T$, we artificially extend the process (for the following argument only) by setting $f_{it} = f_{i  T''}$ for all $t \in [ T''+1..T]$. We apply Lemma~\ref{lconc} to this extended process and obtain that up to time $T$, the \mbox{$i$-th} frequency is always at least $\frac 13$ with probability $1 - 2\exp(-\frac{\mu^2}{72T})$. With a union bound over the $n$ frequencies, we have $f_t \in [\frac 13,1]^n$ up to time $T$ with probability at least $1 - 2n\exp(-\frac{\mu^2}{72T})$ in the extended process, and up to time $T'''$ in the true process.
\end{proof}

\subsection{Proof of Theorem~\ref{thm:upper}}

We are now ready to formulate the full proof of our main upper bound result.

\begin{proof}[Proof of Theorem~\ref{thm:upper}]
  To allow the reader to easily check that all implicit constants can be chosen in a way that they give the claimed result, we make these constants explicit in the following proof, but note that for most of them it just suffices to choose them sufficiently large. 

  We consider the optimization of a subjump function $\calF : \{0,1\}^n \to \R$ with jump size $k \le \frac 1 {20} \ln(n)-1$. Without loss of generality, we can assume that $k \ge 17$.\footnote{In fact, we could just assume that $k = \lfloor \frac 1 {20} \ln(n) \rfloor -1$, but we find it more insightful to present the proof in a way that the arguments are adjusted to the true value of $k$ (assuming it to be at least $17$).}
  
  Let $\mu \ge c_{\mu} \sqrt n \ln(n)$ for a constant $c_\mu$ to be defined in a moment. Assume further that for some constant $C_\mu$ we have $\mu \le n^{C_\mu}$. Without loss of generality, we assume that $C_\mu \ge 1$. 
  
  Consider a run of the cGA with hypothetical population size $\mu$ on $\calF$. Let $D_t \coloneqq n - \|f_t\|_1$ for all $t \ge 0$. 
  
  Let $D' \coloneqq C_{D'} \ln n$, where $C_{D'} \ge 8 C_\mu + 12$ is a constant. Let $T'$ be the first time that  $D_t \le D'$ or that there is a frequency $f_{it}$ that is less than $\frac 13$. By Lemma~\ref{ljump}, with probability at least $1 - O(\frac 1n)$ we have $T' \le \frac{10(2 + \sqrt 2)}{C} \mu \sqrt n$, where $C$ is the constant from Lemma~\ref{ldroste}.  
  
  Let $D'' \coloneqq \max\{2k+1, C_{D''}\}$, where $C_{D''}$ is a sufficiently large constant (that depends only on the constant $C$ from Lemma~\ref{ldroste}). Let $T''$ be the first time that $D_t < D''$ or that there is a frequency $f_{it}$ that is less than $\frac 13$. By Lemma~\ref{lmiddle}, we have $E[T'' - T'] = O(\mu \log n)$. Hence a simple Markov bound gives $T'' \le T' + \mu n^{0.4} \ln n$ with probability $1 - O(n^{-0.4})$. 
   
  Finally, let $c_T \coloneqq \frac{10(2 + \sqrt 2)}{C} +1$ and assume that $c_\mu \ge 144 c_T$. Using our assumption that $k \ge 17$, we first invoke Lemma~\ref{lfreqlb} with $T = c_T \mu \sqrt n$ and obtain that up to time $T''' = \min\{T'',T\}$, all frequencies are at least $\frac 13$ with probability $1 - 2n\exp(-\frac{\mu^2}{72T}) \ge 1 - 2n\exp(-\frac{\mu}{72 c_T \sqrt n}) \ge 1 - 2n\exp(-\frac{c_{\mu}}{72 c_T} \ln n) = 1 - O(\frac 1n)$ by choice of $c_\mu$.  
  
  Putting these three arguments together, we see that with probability $1 - O(\frac 1n) - O(n^{-0.4}) - O(\frac 1n) = 1 - O(n^{-0.4})$, there is a time $t = O(\mu \sqrt n)$ such that $D_t \le D'' \le \frac 1 {10} \ln(n)$ and $f_t \in [\frac 13,1]^n$. 
  By Lemma~\ref{lfinish1}, we now find the optimum in $O(\frac{\mu}{\log n})$ iterations with probability $1 - n^{-\omega(1)}$. This shows that the total runtime is $O(\mu \sqrt n)$ with probability $1 - O(n^{-0.4}) - n^{-\omega(1)} = 1 - O(n^{-0.4})$.
\end{proof}

Let us remark that we did not try to optimize the implicit constants, nor did we try to find the largest constant $C_k$ such that the $O(n \log n)$ runtime guarantee holds for all $k \le C_k \ln(n) - 1$. We further note that all  but one argument in the above proof, by choosing the constants right, would give a success probability of $1 - n^{-c}$, where $c$ can be any constant. This is not true for the Markov bound argument in the analysis of the time to reach a $D_t$ value of at most $D''$. Without further details, we note that also for this phase an arbitrary inverse-polynomial failure probability could be obtained with stronger methods. 

Finally, we note that by taking $k=1$, our result also applies to the \onemax function. 

\subsection{General Insights From This Proof}

Our result that the cGA can cross small fitness valleys at no extra cost, whereas many EAs pay an $\Omega(n^k)$ price for this, raises the question why these algorithms differ that significantly. From our proof, we obtain the following insight. 

To ease the presentation, we take as point of comparison the simple \oea, but as discussed earlier, similar behaviors are observed for many other mutation-based EAs. Again, when talking about the cGA, we measure the progress via the frequency distance $D_t = n - \|f_t\|$, which is the expected fitness distance of a sample. For the \oea, naturally, we regard the Hamming distance $d(x_t) = n - \onemax(x_t)$ of the current solution $x_t$ from the optimum. 

We observe that both algorithms easily reach a distance of $O(k)$. For the cGA this is ``only'' $O(k)$ and for the \oea this is exactly $k$, but this difference is not important. The important difference is that from such a state, the \oea samples the optimum only with probability $O(n^{-k})$, whereas the cGA does so with probability $\exp(-\Omega(k))$, at least when $f_t \in [\frac 13,1]^n$.

A first observation is that the cGA samples solutions with higher variance. This is easiest visible from Lemma~\ref{ldroste}, which implies that with constant probability the distance $d(y)$ of a sample $y$ is $\Omega(\sqrt{D_t})$ away from the expected distance $E[d(y)] = D_t$.

For the \oea, the sampling variance is much smaller. Since the number of bits that are flipped in a mution follows a binomial distribution with parameters $n$ and $\frac 1n$, which is asymptotically a Poisson distribution with parameter $\lambda = 1$, we see that larger fitness changes can only occur with relatively small probability (e.g., a super-constant fitness change happens only with probability $o(1)$, a fitness change of $\delta$ happens with probability at most $\delta^{-\Omega(\delta)}$). 

The reason for this low sampling variance of the \oea, obviously, is the small mutation rate of $\frac 1n$ usually employed. However, raising the mutation rate does not solve the problem and, in fact, creates new problems. When using a larger mutation rate, then the expected \onemax fitness of the offspring gets worse. If $x$ is a search point with distance $d(x) = k = O(\log n)$ and $y$ is obtained from $x$ via standard-bit mutation with mutation rate $p$, then the expected distance of $y$ from the optimum is $E[d(y)] = d(x) + pn(1 - 2d(x)/n)$. 

Clearly, worsening the expected quality of the offspring can only make sense if there is a clear gain from this. Unfortunately, there is no such gain. Indeed, when using a larger mutation rate $p$, then the expected distance $d(y)$ has a larger variance. However, this variance mostly works into the wrong direction. When not only looking at the first or second moment, but at the precise distribution, then we see that the distance gain or loss is distributed as $d(x) - d(y) \sim -X_{n-k,p} + X_{k,p}$, where $X_{n-k,p}$ and $X_{k,p}$ are independent random variables following binomial laws with parameters $(n-k,p)$ and $(k,p)$, respectively. Consequently, a positive gain can only stem from the $X_{k,p}$ part, which (unless $p$ is ridiculously large) again has a small variance since $k$ is small. 

In summary, we see that regardless of how we set the mutation rate, the \oea only with relatively small probability reduces the distance by a larger amount. This is caused by a generally small sampling variance when $p$ is small, say $p = \frac 1n$, or by the fact that the distribution of the distance change is highly asymmetric in the way that true distance reductions are unlikely (when $p$ is larger). 

For the cGA, things are different. Assuming for simplicity a frequency vector $f_t = (1 - \frac{2k}{n}) \textbf{1}_n$, then the fitness gain of a sample $y$ over the expectation is distributed like $D_t - d(y) \sim X_{n,\frac{2k}{n}}$, where again $X_{n,\frac{2k}{n}}$ denotes a random variable following a binomial law with parameters $n$ and $\frac{2k}{n}$. While this distribution is not perfectly symmetric, it is not too strongly concentrated in both directions and thus allows larger improvements with reasonable probability, in particular, sampling the optimum with probability $\exp(-O(k))$. This substantially different way how solutions are sampled seems to be the key to the significantly better performance of the cGA on jump functions.

\section{An Exponential Lower Bound}\label{sec:expLB}

We now prove that the cGA, regardless of the value of the parameter~$\mu$, optimizes jump functions in a time that is at least exponential in the jump size~$k$. 

As for our upper bound result, also this lower bound is valid for a broader class of functions. We say that a function $\calF : \{0,1\}^n \to \R$ is a \emph{superjump function} with jump size $k$ if it has a unique global maximum $x^*$ and for all $r \in [1..k-1]$ and $x, y \in \{0,1\}^n$ with $H(x,x^*) = r$ and $H(y,x^*) = r+1$ we have $\calF(x) < \calF(y)$; here we recall the definition 
\[H(x,y) \coloneqq \{i \in [1..n] \mid x_i \neq y_i\}\]
of the Hamming distance between the bit strings $x$ and $y$ of length $n$. In other words, $\calF$ has a unique global maximum and is fully deceptive in a ball of radius $k$ around this optimum: search points closer to the optimum have a lower fitness. Clearly, all jump functions with jump size $k$ or larger are superjump functions with jump size~$k$. Also, by arbitrarily modifying a jump function outside the gap region and in a way that the global optimum remains the unique global optimum, we obtain superjump functions.

We now show the following result.

\begin{theorem}\label{thm:main}
  There are constants $\alpha_1, \alpha_2 > 0$ such that for any $n$ sufficiently large and any $k \in [1..n]$, regardless of the hypothetical population size $\mu$, the runtime of the cGA on any superjump function with jump size $k$ with probability $1 - \exp(-\alpha_1 k)$ is at least $\exp(\alpha_2 k)$. In particular, the expected runtime is exponential in $k$.
\end{theorem}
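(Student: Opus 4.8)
The plan is to prove both parts of the statement at once, with no case distinction on $\mu$. By the symmetry of the cGA under simultaneously complementing a bit position in all sampled strings and replacing the frequency $f_i$ by $1-f_i$ (which leaves the boundaries $[\tfrac1n,1-\tfrac1n]$ invariant), I may assume the unique optimum is $x^*=(1,\dots,1)$ and write $D_t := n-\|f_t\|_1$, so that $D_t$ is exactly the quantity $D$ of Lemma~\ref{loptUB}. Fix $T=\exp(\alpha_2 k)$. I reduce the theorem to two events, each of probability $\exp(-\Omega(k))$: (A) the optimum is sampled in some iteration $t\le T$ at which $D_t\ge\tfrac14 k$; and (B) $D_t$ drops below $\tfrac14 k$ for some $t\le T$. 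Part (A) is immediate from Lemma~\ref{loptUB}: while $D_t\ge\tfrac14 k$ each of the two samples equals $x^*$ with probability at most $\exp(-D_t)\le\exp(-\tfrac14 k)$, so a union bound over the $2T$ samples gives probability at most $2T\exp(-\tfrac14 k)=\exp(-\Omega(k))$ as soon as $\alpha_2<\tfrac14$.

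The substance is Part (B), and here the deceptive structure does the work. The key identity is $D'_{t+1}-D_t=-\tfrac1\mu(\|y^1\|_1-\|y^2\|_1)$, so (up to boundary effects) $D$ decreases in an iteration precisely when the \emph{fitter} of the two samples is the one \emph{closer} to $x^*$. I claim this is rare once $D_t\le\tfrac12 k$. Escaping the Hamming ball of radius $k$ around $x^*$ means sampling a point at distance $>k\ge 2D_t$; by Lemma~\ref{lsample} (applied with upper bound $D^+=\tfrac12 k$ and $\delta=1$) this has probability $\exp(-\Omega(k))$, so with probability $1-\exp(-\Omega(k))$ both samples lie in the ball. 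By the superjump property fitness strictly increases with distance inside the ball, so the fitter sample is the farther one, whence $\|y^1\|_1\le\|y^2\|_1$ and the preliminary sum $D'_{t+1}:=n-\|f'_{t+1}\|_1$ satisfies $D'_{t+1}\ge D_t$ (with equality when the two samples have the same number of ones). Consequently a genuine decrease of $D$ forces either a sample to escape the ball or a sample to equal $x^*$, each of probability $\exp(-\Omega(k))$, and its magnitude $\tfrac1\mu|\|x^1\|_1-\|x^2\|_1|$ has an exponential tail, again by Lemma~\ref{lsample}.

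From this I would bound the \emph{total downward movement} of $D$. Since $D_0=n/2$, if $D$ first falls below $\tfrac14 k$ at some $t^*\le T$, then looking back to the last time $D\ge\tfrac12 k$ shows that the accumulated decrease $\sum_s\max\{0,D_s-D_{s+1}\}$ over the intervening steps (all with $D_s\le\tfrac12 k$) exceeds $\tfrac14 k$, bar one transitional step whose size is negligible by the single-step exponential tail. Each such step contributes a decrease that is nonzero only on an escape-or-optimum event, of probability $\exp(-\Omega(k))$ and expected magnitude $\exp(-\Omega(k))\cdot\poly(k)/\mu$; summing over the $T$ steps and applying Markov's inequality yields $\Pr[\text{(B)}]=\exp(-\Omega(k))$ once $\alpha_2$ is below the hidden constant. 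The argument is uniform in $\mu$ because the factor $\tfrac1\mu$ carried by every step cancels against the $\mu$-independent gap $\tfrac14 k$: accumulating $\tfrac14 k$ of downward movement needs $\Omega(\mu)$ rare events and hence $\exp(\Omega(k))$ time, whatever $\mu$ is. Combining (A) and (B) by a union bound and fixing $\alpha_1,\alpha_2$ small enough proves the theorem; the expected-runtime claim follows since a runtime that is at least $\exp(\alpha_2 k)$ with probability $1-\exp(-\alpha_1 k)$ has expectation $\exp(\Omega(k))$.

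I expect the genuine difficulties to be two. First, because $D$ has a strong \emph{upward} drift it will leave and re-enter the window $[\tfrac14 k,\tfrac12 k]$ many times, so (B) must be phrased as a statement about accumulated down-movement across arbitrarily many excursions rather than a single passage; this is exactly why bounding the summed decrease (rather than invoking a one-shot negative-drift theorem, whose $\sqrt k$-scale fluctuations would yield only a $\mu$-dependent exponent) is the right device, and one must separately rule out a single iteration bridging the whole gap, which the single-step tail provides. Second, the cumulative effect of lower-boundary capping, quantified per step by Lemma~\ref{lboundary} as $\tfrac1\mu\Bin(\ell,P)$ with $\ell\le k$ frequencies at the boundary, contributes an expected $O(kT/(\mu n))$ to the total downward movement; this is harmless for $k=O(\log n)$ but must be controlled by a more careful argument for larger $k$, for instance by passing to a potential insensitive to boundary frequencies or by exploiting that each boundary frequency multiplies the optimum-sampling probability by $\tfrac1n$ and hence can only hinder the algorithm.
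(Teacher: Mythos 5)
Your skeleton matches the paper's proof closely: reduce by unbiasedness to the optimum $(1,\dots,1)$, track $D_t = n - \|f_t\|_1$, split the failure event into (A) sampling the optimum while $D_t \ge \frac14 k$ (handled via Lemma~\ref{loptUB} and a union bound, exactly as in the paper) and (B) $D_t$ dropping below $\frac14 k$ within $T = \exp(\alpha_2 k)$ steps, and use the deceptive superjump structure to show that when both samples lie in the radius-$k$ ball, selection is inverted and the preliminary frequency sum cannot move toward the optimum. The gap is in how you bound $\Pr[(\mathrm{B})]$. Your device is a first-moment bound on the \emph{accumulated downward movement} of $D$ plus Markov, and, as you yourself observe, the lower-boundary capping contributes an expected downward movement of order $\ell/(n\mu)$ per step with up to $\ell \le k$ boundary frequencies, hence order $kT/(n\mu)$ in total, so Markov only gives $\exp(-\Omega(k))$ when $T \le n\mu\exp(-\Omega(k))$, i.e., when $k = O(\log (n\mu))$. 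This is not a removable weakness of the Markov inequality: over $\exp(\Omega(k))$ iterations the capping genuinely can push $D$ down by far more than $\frac14 k$ in total; the process stays above $\frac14 k$ only because this is compensated by an even larger \emph{upward} movement from the inverted selection. A bound on the downward movement alone therefore cannot prove (B) for large $k$ — one must compare the two opposing movements, i.e., run a net-drift argument on a suitable potential. Your two suggested repairs (a capping-insensitive potential; charging boundary frequencies against the sampling probability) are exactly where the real work lies, and they are not carried out; since the theorem claims all $k \in [1..n]$, this is a genuine gap.

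For comparison, the paper closes precisely this hole as follows. It first reduces to $\omega(1) = k \le \frac{n}{320}$ (tiny $k$ is handled by an ``idle bits'' argument giving a runtime of at least $\frac 12 \log_2 n$ with probability $1 - \exp(-\Omega(\sqrt n))$; larger $k$ follows by constant-factor invariance of the claim). It then rescales exponentially, $Y_t = \exp(c \min\{\tfrac 12 k - D_t, \tfrac 14 k\})$, and proves the pointwise drift bound $E[Y_{t+1} - Y_t \mid Y_t = y] \le 2$ for $y < \Ymax$. In the critical window $\frac 14 k < D_t < \frac 34 k$, the capping cost enters through the moment generating function of $Z \sim \Bin(\ell, P)$, $P \approx \frac 2n$, namely $E[\exp(cZ/\mu)] \le 1 + \frac{c}{40\mu}$, and this is dominated by a guaranteed decrease term: by Lemma~\ref{ldiff}, with probability at least $\frac 1{16}(1 - \frac{2k}{n}) - \exp(-\Omega(k)) \ge \frac 1{20}$ the two samples differ in norm on the off-boundary positions, which under inverted selection yields a factor $\exp(-c/\mu)$, i.e., an expected multiplicative decrease of order $\frac c\mu$. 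Both effects scale as $\frac 1\mu$, so the comparison is uniform in $\mu$, and it is decided by $\frac{2k}{n} \le \frac 1{160}$ against the constant $\frac 1{16}$ — this use of Lemma~\ref{ldiff} to generate a compensating positive drift is exactly the ingredient your proposal never invokes. The conclusion then follows from the stopped process $\tilde Y$ with $E[\tilde Y_t] \le 2t + 1$ and Markov, matching the endgame of your outline.
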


We note that we intentionally prove a runtime bound that holds with high probability. The reason is that, as discussed in Section~\ref{sec:whp}, EDAs may with small probability reach states from which they find it very hard to reach the optimum. Such a situation could lead to a very high expected runtime even when the EDA with high probability is very efficient. For that reason, lower bounds that hold with high probability are particularly desirable for EDAs. Needless to say, a lower bound that holds with high probability immediately implies an asymptotically identical lower bound on the expected runtime.

We also note that the cGA is treating all bit-positions and the two bit values zero and one in a symmetric fashion (this property was called \emph{unbiased} in~\cite{LehreW12}). Consequently, in any runtime analysis of the cGA on a pseudo-Boolean function we can assume that $(1, \ldots, 1)$ is an optimum. Since further the actions of the cGA do not depend on absolute fitness values, but only on relative ones (this property was called \emph{ranking-based} in~\cite{DoerrW14ranking}), its performance is invariant under monotonic rescalings of the fitness function. For this reason, it suffices to regard superjump functions that agree with the jump function of jump size $k$ on all search points $x$ with $\|x\|_1 \ge n-k$ (and thus also have $(1, \dots, 1)$ as unique global optimum). To ease the presentation, we shall take this assumption in the remainder without further notice. This also allows us to continue to use the definition
\[G_{nk} \coloneqq \{x \in \{0,1\}^n \mid n-k < \|x\|_1 < n\}\] 
of the gap. 

Before stating the formal proof, we brief{}ly describe the main proof arguments on a more intuitive level. As in the previous section, we will regard the stochastic process $D_t \coloneqq n - \|f_t\|_1$, that is, the distance between the sum of the frequencies and its ideal value $n$. Our general argument is that this process with probability $1 - \exp(-\Omega(k))$ stays above $\frac 14 k$ for $\exp(\Omega(k))$ iterations. In each iteration with $D_t \ge \frac 14 k$, the probability that the optimum is sampled is only $\exp(-\Omega(k))$, see Lemma~\ref{loptUB}. Hence there is a $T = \exp(\Omega(k))$ such that with probability $1-\exp(-\Omega(k))$, the optimum is not sampled in the first $T$ iterations.

The heart of the proof is an analysis of the process $(D_t)$. It is intuitively clear that once the process is below $k$, then often the two search points sampled in one iteration both lie in the gap region, which gives $D_t$ a positive drift (that is, a decrease of the average frequency). To turn this drift away from the target (a small $D_t$ value) into an exponential lower bound on the runtime, we consider the process 
\[Y_t = \exp(c \min\{\tfrac 12 k - D_t, \tfrac 14 k\}),\] that is, an exponential rescaling of $D_t$. Such a rescaling has recently also been used in~\cite{AntipovDY19}. We note that the usual way to prove exponential lower bounds is the negative drift theorem of Oliveto and Witt~\cite{OlivetoW12}. We did not immediately see how to use it for our purposes, though, since in our process we do not have very strong bounds on the one-step differences. E.g., when $D_t = \frac 12 k$, then the underlying frequency vector may be such that $D_{t+1} \ge D_t + \sqrt k$ happens with constant probability. We also note that after the submission of this work, a negative multiplicative drift theorem was proposed~\cite{Doerr20ppsnLB}, which would be applicable to our setting as well. It would, however, not greatly simplify the proof as the main work, estimating the drift of the process $(Y_t)$, would still be needed. 

We shall show that the process $Y_t$ has at most a constant point-wise drift, more precisely, that 
\begin{equation}
  E[Y_{t+1} - Y_t \mid Y_t = y] \le 2 \label{eq:driftzwei}
\end{equation} 
holds for all $y < Y_{\max} \coloneqq \exp(\frac c4 k)$. From this statement, the lower bound version of the additive drift theorem (Theorem~\ref{tdrift}\ref{it:driftlb}) would immediately show that the expected time to reach a $D_t$ value of $\frac k4$ or less is at least exponential in $k$. However, since we aim at a runtime bound that holds with high probability, we take a different (and, in fact, more elementary) route. We regard the process $(\tilde Y_t)$ which is identical to $Y$ until $Y$ first reaches $Y_{\max}$ and then stays constant at $Y_{\max}$. This process satisfies $E[\tilde Y_{t+1} - \tilde Y_t] \le 2$ for all times $t$. From this and $\tilde Y_0 = Y_0 < 1$ we obtain $E[\tilde Y_t] \le 1+2t$. Hence for $T = \exp(\Omega(k))$ sufficiently small, we have 
\[\frac{E[\tilde Y_T]}{Y_{\max}} = \exp(-\Omega(k))\] 
and a simple Markov bound argument is enough to show that ${\Pr[\tilde Y_T = Y_{\max}]} = \exp(-\Omega(k))$. Note that $\tilde Y_T < Y_{\max}$ is equivalent to $Y_t < Y_{\max}$ for all $t \in [0..T]$.

  The main work in the following proof is showing~\eqref{eq:driftzwei}. The difficulty here is hidden in a small detail. When $D_t \in [\frac 14 k, \frac 34 k]$, and this is the most interesting case (case~2 in the formal proof), then we have $\|f'_{t+1}\|_1 \le \|f_t\|$ whenever the two search points sampled lie in the gap region, and hence with probability $1 - \exp(-\Omega(k))$; from Lemma~\ref{ldiff} we obtain, in addition, a true decrease, that is, $\|f'_{t+1}\|_1 \le \|f_t\| - \frac 1 \mu$, with constant probability. This progress of $f'_{t+1}$ over $f_t$ would be perfectly fine for our purposes. Hence the true difficulty arises from the capping of the frequencies into the interval $[\frac 1n,1-\frac 1n]$, that is, from the fact that the new frequency vector is $f_{t+1} \coloneqq \minmax(\frac 1n \textbf{1}_n, f'_{t+1}, (1- \frac 1n) \textbf{1}_n)$. This appears to be a minor problem, among others, because only a capping at the lower bound $\frac 1n$ can have an adverse effect on our process, and there are at most $O(k)$ frequencies sufficiently close to the lower boundary. Things become difficult due to the exponential scaling, which can let rare events still have a significant influence on the expected change of the process.   

We now make these arguments precise and prove Theorem~\ref{thm:main}. We recall that, while the theorem refers to arbitrary superjump functions with jump size $k$, we can always assume that we regard a fitness function that agrees with the classic jump function with parameter $k$ on all search points $x$ with $\|x\|_1 \ge n-k$, including the unique global optimum $(1, \dots, 1)$.

\begin{proof}
  Since we are aiming at an asymptotic statement, we can assume in the following that $n$ is sufficiently large.
  
  To ease the presentation of the main part of the proof, let us first give a basic argument for the case of small $k$ and then assume that $k \ge w(n)$ for some function $w : \N \to \N$ with $\lim_{n \to \infty} w(n) = \infty$. 
  
  We first note that with probability $f_{it}^2 + (1 - f_{it})^2 \ge \frac 12$, the two search points $x^1$ and $x^2$ generated in the $t$-th iteration agree in the $i$-th bit, which in particular implies that $f_{i,t+1} = f_{it}$. Hence with probability at least $2^{-T}$, this happens for the first $T$ iterations, and thus $f_{it} = \frac 12$ for all $t \in [0..T]$. Let us call such a bit position $i$ \emph{idle}. 
  
  Note that the events of being idle are independent for all $i \in [1..n]$. Hence, taking $T = \lfloor \frac 12 \log_2 n \rfloor$, we see that the number $X$ of idle positions has an expectation of $E[X] \ge n 2^{-T} \ge \sqrt n$, and by a simple Chernoff bound (Theorem~\ref{tchernoff}), we have $\Pr[X \ge \frac 12 \sqrt n] \ge 1 - \exp(-\Omega(\sqrt n))$. 
  
  Conditional on having at least $\tfrac 12 \sqrt n$ idle bit positions, the probability that a particular search point sampled in the first $T$ iterations is the optimum is at most $2^{-\frac 12 \sqrt n}$. By a simple union bound argument, the probability that at least one of the search points generated in the first $T$ iterations is the optimum is at most $2 T 2^{-\frac 12 \sqrt n} = \exp(-\Omega(\sqrt n))$. In summary, we have that with probability at least $1 - \exp(-\Omega(\sqrt n))$, the runtime of the cGA on any function with unique optimum (and in particular any superjump function) is greater than $T = \frac 12 \log_2 n$. This implies the claim of this theorem for any $k \le C \log \log n$, where $C$ is a sufficiently small constant, and, as discussed above, $n$ is sufficiently large.
  
  With this, we can now safely assume that $k = \omega(1)$. Since $k \le n$ and our result is invariant under constant-factor changes of $k$, we can assume that $k \le \frac{n}{320}$.
  
  Let $D_t \coloneqq n - \|f_t\|_1 = n - \sum_{i=1}^n f_{it}$ be the distance of the sum of the frequencies from the ideal value $n$. 
  
  Our intuition (which will be made precise) is that the process $(D_t)$ finds it hard to go significantly below $k$ because there we will typically sample individuals in the gap, which leads to a decrease of the sum of frequencies (when the two individuals have different distances from the optimum). To obtain an exponential lower bound on the runtime, we suitably rescale the process by defining, for a sufficiently small constant $c$, 
  \[Y_t = \min\{\exp(c(\tfrac 12 k - D_t)),\exp(\tfrac 14 c k)\} = \exp(c \min\{\tfrac 12 k - D_t, \tfrac 14 k\}).\] 
  Observe that $Y_t$ attains its maximal value $\Ymax = \exp(\frac 14 c k)$ precisely when $D_t \le \frac 14k$. Also, $Y_t \le 1$ for $D_t \ge \frac 12 k$.
  
  To argue that we have $D_t > \tfrac 14 k$ for a long time, we now show that for all $y < \Ymax$ the drift $E[Y_{t+1} - Y_t \mid Y_t = y]$ is at most constant. To this aim, we condition on a fixed value of $f_{t}$, which also determines~$D_t$. We treat separately the two cases that $D_t \ge \frac 34 k$ and that $\frac 34 k > D_t > \frac 14 k$.
   
  \textbf{Case 1:} Assume first that $D_t \ge \frac 34 k$. By Lemma~\ref{lsample}, with probability $1 - \exp(-\Omega(D_t)) \ge 1 - \exp(-\Omega(k))$, the two search points $x^1,x^2$ sampled in iteration $t+1$ both satisfy 
  \begin{equation}
    \big|\|x^i\|_1 - \|f_t\|_1\big| = |d(x^i) - D_t| < \tfrac 1 {18} D_t \le \tfrac 16 (D_t - \tfrac 12 k). \label{eq:A}
  \end{equation}
  Here and in the following, when writing $\Omega(k)$ we mean that there is a positive constant $C$, independent of $n$, $k$, and $c$, such that the expression is at least~$Ck$. Let us call $A$ the event described in~\eqref{eq:A}. In this case, we argue as follows. We recall the notation $\sum[v] \coloneqq \sum_{i=1}^n v_i$ to denote the sum of the elements of an $n$-dimensional vector~$v$ and we recall further that, with a slight abuse of notation, we defined $\|f'\|_1 \coloneqq \sum[f']$ for intermediate frequency vectors $f'$. Let $\{y^1,y^2\} = \{x^1,x^2\}$ such that $\calF(y^1) \ge \calF(y^2)$. Then 
  \begin{align*}
  \|f'_{t+1}\|_1 &= \sum\Big[f_t + \tfrac 1\mu (y^1-y^2)\Big] \\
  &= \sum\Big[f_t + \tfrac 1\mu (y^1-f_t) - \tfrac 1\mu(y^2-f_t)\Big] \\
  &\le \sum\Big[f_t\Big] + \tfrac 1\mu \left|\sum\Big[y^1-f_t\Big]\right| + \tfrac 1\mu \left|\sum\Big[y^2-f_t\Big]\right| \\
  &= \|f_t\|_1 + \tfrac 1\mu \big|\|x^1\|_1 - \|f_t\|_1\big| + \tfrac 1\mu \big|\|x^2\| - \|f_t\|_1\big| \\ 
  &\le n-D_t + 2 \tfrac 1\mu \tfrac 16 (D_t - \tfrac 12 k) \\
  &\le n-D_t + 2 \tfrac 16 (D_t - \tfrac 12 k)  \\
  &= n - \tfrac 23 D_t - \tfrac 16k \le n - \tfrac 23 \cdot \tfrac 34 k - \tfrac 16k \le n - \tfrac 23 k.
  \end{align*} 
  
  We still need to consider the possibility that $f_{i,t+1} > f'_{i,t+1}$ for some $i \in [1..n]$. By Lemma~\ref{lboundary}, not conditioning on $A$, we have that $\|f_{t+1}\|_1 - \|f'_{t+1}\|_1 \preceq \frac 1\mu \Bin(\ell,P) \preceq \Bin(\ell,P)$ for some $\ell \in [1..n]$ and $P = 2 \frac 1n (1-\frac 1n)$.
  
   Let us call $B$ the event that $\|f_{t+1}\|_1 - \|f'_{t+1}\|_1 < \frac 16 k$. Note that $A \cap B$ implies $\|f_{t+1}\|_1 < n- \frac 12 k$ and thus $Y_{t+1} \le 1$. By Lemma~\ref{lprobbino} and the estimate $\binom{a}{b} \le (\frac{ea}{b})^b$, we have 
   \[\Pr[\neg B] \le \binom{\ell}{\frac 16 k} P^{k/6} \le \left(\frac{12e\ell}{kn}\right)^{k/6} \le k^{-\Omega(k)}.\] 
   
  We conclude that the event $A \cap B$ holds with probability $1 - \exp(-\Omega(k))$; in this case $Y_t \le 1$ and $Y_{t+1} \le 1$. In all other cases,  we bluntly estimate $Y_{t+1} - Y_t \le \Ymax$. This gives 
  \[E[Y_{t+1} - Y_t] \le (1 - \exp(-\Omega(k))) \cdot 1 + \exp(-\Omega(k)) \Ymax.\] 
  By choosing the constant $c$ in the definition of $(Y_t)$ sufficiently small and taking $n$ sufficiently large, we have $E[Y_{t+1} - Y_t] \le 2$.

  \textbf{Case 2:} Assume now that $\frac 34 k > D_t > \frac 14k$. Let $x^1,x^2$ be the two search points sampled in iteration $t+1$ and let $y^1, y^2$ be such that $\{y^1,y^2\} = \{x^1,x^2\}$ and $\calF(y^1) \ge \calF(y^2)$. By Lemma~\ref{lsample} again, we have ${k > n - \|x^i\|_1 > 0}$ with probability $1 - \exp(-\Omega(k))$ for both $i \in \{1,2\}$. Let us call this event $A$. Note that if $A$ holds, then both offspring lie in the gap region. Consequently, $\|y^1\|_1 \le \|y^2\|_1$ and thus $\|f'_{t+1}\|_1 \le \|f_t\|_1$.
    
    Let $L = \{i \in [1..n] \mid f_{it} = \frac 1n\}$, $\ell = |L|$, and $M = \{i \in L \mid x^1_i \neq x^2_i\}$ as in Lemma~\ref{lboundary}. Note that by definition, $D_t \ge (1-\frac 1n) \ell$, hence from $D_t < \frac 34 k$ and $n \ge 4$ we obtain $\ell < k$. 
   
   Let $B_0$ be the event that $|M|=0$, that is, $x^1_{|L} =  x^2_{|L}$. Note that in this case, we have $f_{t+1} \le f'_{t+1}$ (component-wise) and thus 
   \[\|f_{t+1}\|_1 \le \|f'_{t+1}\|_1 = \|f_t + \tfrac 1 \mu (y^1 - y^2)\|_1 = \|f_t\|_1 + \tfrac 1 \mu (\|y^1\|_1 - \|y^2\|_1).\] By Lemma~\ref{lboundary}, Bernoulli's inequality, and $\ell \le k$, we have 
   \[\Pr[B_0] = (1 - 2\tfrac 1n (1-\tfrac 1n))^{\ell} \ge 1 - \tfrac{2\ell}{n} \ge 1 - \tfrac{2k}n.\] 
   Since $\ell < k \le \frac{n}{320} < \frac n2$, by Lemma~\ref{ldiff}, we have $\|x^1_{|[n]\setminus L}\|_1 \neq \|x^2_{|[n]\setminus L}\|_1$ with probability at least $\frac{1}{16}$. This event, called $C$ in the following, is independent of $B_0$. We have 
   \[\Pr[A \cap B_0 \cap C] \ge \Pr[B_0 \cap C] - \Pr[\overline A] \ge (1-\tfrac{2k}{n})\tfrac{1}{16} - \exp(-\Omega(k)).\] 
   If $A \cap B_0 \cap C$ holds, then $\|f_{t+1}\|_1 \le \|f'_{t+1}\|_1 \le \|f_t\|_1 - \frac 1 \mu$. If $A \cap B_0 \cap \overline C$ holds, then we still have $\|f_{t+1}\|_1 \le \|f'_{t+1}\|_1 \le \|f_t\|_1$. 
   
  Let us now, for $j \in [1..\ell]$, denote by $B_j$ the event that $|M|=j$, that is, that $x^1_{|L}$ and $x^2_{|L}$ differ in exactly $j$ bits. By Lemma~\ref{lboundary} again, we have $\Pr[B_j] = \Pr[\Bin(\ell,P) = j]$.
  
  The event $A \cap B_j$ implies $\|f_{t+1}\|_1 \le \|f'_{t+1}\|_1 + \frac j \mu  \le \|f_t\|_1 + \frac j \mu$ and occurs with probability $\Pr[A \cap B_j] \le \Pr[B_j] = \Pr[\Bin(\ell,P) = j]$.
  
  Taking these observations together, we compute
  \begin{align}
  E[Y_{t+1}] 
  &= \Pr[\overline A] \, E[Y_{t+1} \mid \overline A] \nonumber\\
  &\quad + \sum_{j=1}^\ell \Pr[A \cap B_j] \, E[Y_{t+1} \mid A \cap B_j] \nonumber\\
  &\quad + \Pr[A \cap B_0 \cap \overline C] \, E[Y_{t+1} \mid A \cap B_0 \cap \overline C] \nonumber\\
  &\quad + \Pr[A \cap B_0 \cap C] \, E[Y_{t+1} \mid A \cap B_0 \cap C] \nonumber\\
  &\le \exp(-\Omega(k)) \Ymax \label{eq:bilanz}\\
  &\quad + \sum_{j=1}^\ell \Pr[\Bin(\ell,P) = j] \, Y_t \exp(\tfrac{cj}{\mu}) \nonumber\\
  &\quad + \Pr[\Bin(\ell,P) = 0] \, Y_t\nonumber\\
  &\quad - (\tfrac 1 {16} (1 - \tfrac{2k}{n}) - \exp(-\Omega(k))) Y_t (1 - \exp(-\tfrac{c}{\mu})). \nonumber
  \end{align} 
We note that the second and third term amount to $Y_t E[\exp(\frac{cZ}{\mu})]$, where $Z \sim \Bin(\ell,P)$. Writing $Z = \sum_{i=1}^\ell Z_i$ as a sum of $\ell$ independent binary random variables with $\Pr[Z_i = 1] = P$, we obtain 
\[E[\exp(\tfrac{cZ}{\mu})] = \prod_{i=1}^\ell E[\exp(\tfrac{cZ_i}{\mu})] = (1 - P + P\exp(\tfrac{c}{\mu}))^\ell.\]
By assuming $c \le 1$ and using the elementary estimate $e^x \le 1 + 2x$ valid for $x \in [0,1]$, see, e.g., Lemma~1.4.2(b) in~\cite{Doerr20bookchapter}, we have 
\[1 - P + P\exp(\tfrac{c}{\mu}) \le 1 + 2P(\tfrac{c}{\mu}).\] 
Hence with $P \le \frac 2n$, $\mu \ge 1$, and $\ell \le \frac n{320}$, we obtain 
\[E[\exp(\tfrac{cZ}{\mu})] \le (1 + 2P(\tfrac{c}{\mu}))^\ell \le \exp(2P(\tfrac{c}{\mu})\ell) \le \exp(\tfrac{4c}{320\mu}) \le 1 + \tfrac{c}{40\mu},\]
again by using $e^x \le 1 + 2x$. The second and third term of~\eqref{eq:bilanz} thus add up to at most $(1 + \frac{c}{40}) Y_t$.

In the first term of~\eqref{eq:bilanz}, we again assume that $c$ is sufficiently small to ensure that $\exp(-\Omega(k)) \Ymax = \exp(-\Omega(k)) \exp(\frac 14 ck) \le 1$. Recalling that $k \le \frac{n}{320}$ and assuming $k$ sufficiently large (since $k = \omega(1)$ and $n$ is large), we finally estimate in the last term $\tfrac 1 {16} (1 - \tfrac{2k}{n}) - \exp(-\Omega(k)) \ge \tfrac 1{20}$ and, more interestingly, $1 - \exp(-\tfrac{c}{\mu}) \ge \tfrac{c}{\mu}(1-\frac 1e)$ using the estimate $e^{-x} \le 1-x(1-\frac 1e)$ valid for all $x \in [0,1]$, which stems simply from the convexity of the exponential function.

With these estimates we obtain 
\[E[Y_{t+1}] \le 1 + (1 + \tfrac{c}{40\mu}) Y_t - \tfrac 1 {20} (1-\tfrac 1e) \tfrac{c}{\mu} Y_t \le 1 + Y_t\] 
and thus $E[Y_{t+1} - Y_t] \le 1$.

In summary, we have now shown that for all $y < \Ymax$ and at all times $t$ the process $(Y_t)$ satisfies $E[Y_{t+1} - Y_t \mid Y_t = y] \le 2$. We note that $Y_0 \le 1$ with probability one. For the sake of the argument, let us artificially modify the process from the point on when it has reached a state of at least $Y_{\max}$. So we define $(\tilde Y_t)$ by setting $\tilde Y_t = Y_t$, if $Y_t < Y_{\max}$ or if $Y_t \ge Y_{\max}$ and $Y_{t-1} < Y_{\max}$, and $\tilde Y_t = \tilde Y_{t-1}$ otherwise. In other words, $(\tilde Y_t)$ is a copy of $(Y_t)$ until it reaches a state of at least $Y_{\max}$ and then does not move anymore. With this trick, we have $E[\tilde Y_{t+1} - \tilde Y_t] \le 2$ for all $t$. 

A simple induction and the initial condition $\tilde Y_0 \le 1$ shows that ${E[\tilde Y_t] \le 2t+1}$ for all $t$. In particular, for $T = \frac 12 \exp(\frac 18 c k) - 1$, we have $E[Y_T] \le \exp(\frac 18 c k)$ and, by Markov's inequality, 
\[\Pr[\tilde Y_T \ge Y_{\max}] \le \frac{\exp(\frac 18 c k)}{Y_{\max}} = \exp(-\tfrac 18 c k).\]
  
Hence with probability $1 - \exp(-\tfrac 18 c k)$, we have $\tilde Y_T < Y_{\max}$. We now condition on this event. By construction of $(\tilde Y_t)$, we have $Y_t < Y_{\max}$, equivalently $D_t > \frac 14 k$, for all $t \in [0..T]$. If $D_t > \frac 14 k$, then by Lemma~\ref{loptUB} the probability that a sample generated in this iteration is the optimum, is at most $\exp(-\tfrac 14 k)$. Assuming $c \le 1$ again, we see that the probability that the optimum is generated in one of the first $T$ iterations, is at most $2 T \exp(-\frac 14 k) \le \exp(\frac 18 c k) \exp(-\frac 14 k) = \exp(-\frac 18 k)$. This shows the claim. 
\end{proof}

\section{An $\Omega(n \log n)$ Lower Bound}\label{sec:nlogn}

With the exponential lower bound proven in the previous section, the runtime of the cGA on jump functions is well understood, except that the innocent looking lower bound $\Omega(n \log n)$, matching the corresponding upper bound for $k \le \frac 1 {20} \ln n -1$ and optimal choice of $\mu$, is still missing. Since Sudholt and Witt~\cite{SudholtW19} have proven an $\Omega(n \log n)$ lower bound for the simple unimodal function $\onemax$, which for many EAs is known to be one of the easiest functions with unique global optimum~\cite{DoerrJW12algo,Sudholt13,Witt13,Doerr19tcs}, it would be very surprising if this lower bound would not hold for jump functions as well. 

In this section, we first argue why, unlike for many other algorithms, it is hard to show that a lower bound on the runtime of the cGA on \onemax extends to a lower bound for any other function with unique optimum. We then analyze in detail the proof of the $\Omega(n \log n)$ lower bound for \onemax~\cite{SudholtW19} and argue that the same arguments can be applied in the case of jump functions (but not superjump functions).

\subsection{Domination Arguments Fail}

The true reason why \onemax is the easiest optimization problem for many evolutionary algorithms $\calA$, implicit in all such proofs and explicit in~\cite{Doerr19tcs}, is that when comparing a run of $\calA$ on \onemax and on some other function $\calF$ with unique global optimum, then at all times the Hamming distance between the current-best solution and the optimum in the \onemax process is stochastically dominated by the same quantity in the other process. This follows by induction and a coupling argument from the following key insight (here formulated for the \oea only).

\begin{lemma}\label{lem:dom}
  Let $\calF : \{0,1\}^n \to \R$ be some function with unique global optimum $x^*$ and let \onemax be the $n$-dimensional \onemax function with unique global optimum $y^* = (1, \dots, 1)$. Let $x,y \in \{0,1\}^n$ such that $H(x,x^*) \ge H(y,y^*)$, where $H(\cdot, \cdot)$ denotes the Hamming distance. Consider one iteration of the \oea optimizing $\calF$, started with $x$ as parent individual, and denote by $x'$ the parent in the next iteration. Define $y'$ analogously for \onemax and $y$. Then $H(x',x^*) \succeq H(y',y^*)$.
\end{lemma}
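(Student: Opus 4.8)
The plan is to establish this single-iteration domination by a direct coupling, from which the full ``\onemax is easiest'' statement follows by the announced induction. First I would use that the \oea is unbiased: relabelling bit positions and complementing individual coordinates does not change its behaviour, so I may assume without loss of generality that the optimum of $\calF$ is also $x^* = (1,\dots,1) = y^*$. Then both Hamming distances become numbers of zero-bits; write $d \coloneqq H(x,x^*) = n - \|x\|_1$ and $e \coloneqq H(y,y^*) = n - \|y\|_1$, so that the hypothesis reads $d \ge e$.

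The key structural observation concerns the offspring distance alone. If standard-bit mutation with rate $\frac1n$ is applied to a parent at distance $\delta$ from $(1,\dots,1)$, then the offspring distance equals $\Bin(\delta, 1 - \frac1n) + \Bin(n-\delta, \frac1n)$ (the wrong bits that stay wrong plus the correct bits that are flipped), with the two binomials independent. I would show that this offspring distance is stochastically non-decreasing in $\delta$: the one position distinguishing a distance-$\delta$ parent from a distance-$(\delta+1)$ parent is correct in the former, where it contributes an independent $\Bin(1,\frac1n)$ term to the offspring distance, and wrong in the latter, where it contributes an independent $\Bin(1,1-\frac1n)$ term; since $1-\frac1n \ge \frac1n$ and all other positions contribute identically, the offspring distance can only grow. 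Hence, since $d \ge e$, I can couple the two runs so that the offspring distances satisfy $D_x' \ge D_y'$ pointwise.

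It remains to feed this through the selection step, and here the asymmetry between the two functions is exactly what we need. For \onemax the selection is distance-greedy: the offspring is accepted iff its number of ones does not decrease, i.e.\ iff $D_y' \le e$, so the next parent satisfies $H(y',y^*) = \min\{e, D_y'\}$ exactly. For the arbitrary function $\calF$ we can say nothing about which of parent and offspring is kept, but the next parent is one of the two, hence $H(x',x^*) \in \{d, D_x'\}$ and in particular $H(x',x^*) \ge \min\{d, D_x'\}$. Combining these with $d \ge e$, $D_x' \ge D_y'$, and the monotonicity of $\min$ in both arguments gives, pointwise under the coupling, $H(x',x^*) \ge \min\{d,D_x'\} \ge \min\{e,D_y'\} = H(y',y^*)$, which is the claimed stochastic domination.

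The conceptual heart of the argument -- and the step I would flag as the main obstacle -- is the realisation that \onemax selection realises the \emph{smallest} distance consistent with the sampled offspring (the value $\min\{e,D_y'\}$), whereas a general $\calF$ can only make the new distance larger, never smaller; the $d \ge e$ hypothesis and the offspring-distance monotonicity then let the two $\min$ expressions be compared term by term. The one genuine lemma to verify carefully is the stochastic monotonicity of the offspring distance in the parent distance; everything else is bookkeeping. I would stress that the naive bit-wise coupling of the two runs does \emph{not} preserve the ordering -- it can let the farther process overtake the nearer one -- which is precisely why routing the comparison through the offspring-distance distributions, rather than through the raw bit strings, is essential.
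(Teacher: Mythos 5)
Your proposal is correct, and there is a peculiarity to note at the outset: the paper itself gives \emph{no} proof of Lemma~\ref{lem:dom}. It is stated as a known ``key insight'' whose proof idea (coupling plus induction) is attributed to the stochastic-domination framework of~\cite{Doerr19tcs}, so there is no in-paper argument to compare yours against; your write-up is precisely the argument that this literature has in mind, carried out in full. All three steps are sound. The reduction to $x^* = (1,\dots,1)$ via unbiasedness is legitimate (replace $\calF$ by $\calF \circ \sigma^{-1}$ for a hypercube automorphism $\sigma$ with $\sigma(x^*) = (1,\dots,1)$; Hamming distances to the optimum are preserved). The stochastic monotonicity of the offspring distance $\Bin(\delta, 1-\tfrac 1n) + \Bin(n-\delta, \tfrac 1n)$ in $\delta$ is handled correctly: the essential point, which you get right, is that at the single differing position one couples the $\Bin(1,\tfrac 1n)$ contribution of the closer parent below the $\Bin(1,1-\tfrac 1n)$ contribution of the farther parent (valid since $\tfrac 1n \le 1 - \tfrac 1n$), rather than coupling the physical bit-flips identically---the identity coupling produces anti-correlated contributions and fails, exactly as you flag in your last sentence. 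Finally, the selection step is the decisive observation: for \onemax the new distance is \emph{exactly} $\min\{e, D_y'\}$, while for arbitrary $\calF$ the new parent is one of the two available points, so $H(x',x^*) \in \{d, D_x'\}$ and hence $H(x',x^*) \ge \min\{d, D_x'\}$ no matter what selection does; the chain $\min\{d,D_x'\} \ge \min\{e,D_y'\}$ under the coupling then gives the domination. If you want the write-up airtight, add a half-sentence that the coupling of the two offspring \emph{distances} lifts to a coupling of the full offspring \emph{strings} (sample each offspring from its conditional law given its distance), so that the $\calF$-selection, which depends on the actual bit string and not only on its distance, is defined on the coupled space; since your comparison only ever invokes the sure inequalities above, this is purely cosmetic.
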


As a side remark, note that the lemma applied in the special case $\calF = \onemax$ shows that the intuitive rule ``the closer a search point is to the optimum, the shorter is the optimization time when starting from this search point'' holds for optimizing \onemax via the \oea.

We now show that a statement like Lemma~\ref{lem:dom} is not true for the cGA. Since the states of a run of the cGA are the frequency vectors $f$, the natural extension of the Hamming distance quality measure above is the $\ell_1$-distance $d(f,x^*) = \|f-x^*\|_1 = \sum_{i=1}^n |f_i - x^*_i|$. Note that for $x^* = (1, \ldots, 1)$, we have $d(f,x^*) = n - \|f\|_1$, the distance measure regarded in many of the other proofs in this work. 

\begin{lemma}
  Let $n$ be even. We consider running the cGA with hypothetical population size $\mu = n$. Then there are a fitness function $\calF : \{0,1\}^n \to \R$ with unique global optimum $x^* = (1, \ldots, 1)$ and frequency vectors $f, g \in (F_\mu)^n$ such that the following holds. Let $\tilde f$ be the frequency vector obtained after one iteration of optimizing $\calF$ via the cGA started with frequency vector $f$. Let $\tilde g$ be the frequency vector obtained after one iteration running the cGA on \onemax (with unique global optimum $y^* \coloneqq x^*$) started with~$g$. Then $d(f,x^*) \ge d(g,y^*)$, but $d(\tilde f,x^*) \not\succeq d(\tilde g,y^*)$.  
\end{lemma}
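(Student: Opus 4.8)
The plan is to refute the stochastic-domination claim in its simplest form, namely by exhibiting a single threshold $\lambda$ at which $\Pr[d(\tilde f,x^*) \ge \lambda] < \Pr[d(\tilde g,y^*) \ge \lambda]$. In fact I would take $\calF$ to be \onemax itself, so that the example shows domination to fail even between two \onemax runs started from different frequency vectors, which is the strongest possible refutation of the analogue of Lemma~\ref{lem:dom}. I would choose $f$ and $g$ to agree on all but the first two coordinates and to have the same distance to $x^*$: set $g_1 = 1-\frac1n$, $g_2 = \frac12$ and $f_1 = 1-\frac2n$, $f_2 = \frac12+\frac1n$, with $f_i = g_i = \frac12$ for $i \ge 3$. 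All these values lie in $F_\mu$ precisely because $n$ is even (this is where the parity hypothesis is used), and a one-line computation gives $d(f,x^*) = \frac1n + \frac12 + \frac{n-2}{2} = d(g,y^*)$, so the hypothesis $d(f,x^*) \ge d(g,y^*)$ holds with equality.

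First I would control the $f$-side and show $d(\tilde f,x^*) \le d(f,x^*)$ with probability one. Since every coordinate satisfies $f_i \le 1-\frac2n$, an up-step can never be clipped at the upper boundary, because $f'_i \le f_i + \frac1\mu = f_i + \frac1n \le 1-\frac1n$. As $\calF=\onemax$ forces the winner $y^1$ to obey $\|y^1\|_1 \ge \|y^2\|_1$, we get $\|f'\|_1 = \|f\|_1 + \frac1n(\|y^1\|_1-\|y^2\|_1) \ge \|f\|_1$, and the only clipping that can occur (at the lower boundary) merely raises the frequency sum. Hence $\|\tilde f\|_1 \ge \|f\|_1$, that is $d(\tilde f,x^*)\le d(f,x^*)$ always; in particular $\Pr[d(\tilde f,x^*) \ge d(f,x^*)+\tfrac1n] = 0$.

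Next I would show that the $g$-side can strictly increase the distance. Consider the positive-probability sample outcome in which $x^1$ and $x^2$ agree on all coordinates $i\ge 3$ while $x^1_1=1,\,x^2_1=0$ and $x^1_2=0,\,x^2_2=1$. Then $\|x^1\|_1 = \|x^2\|_1$, so the comparison is a tie and the cGA convention sets $y^1=x^1$. The gaining move therefore falls on coordinate $1$, whose frequency $g_1=1-\frac1n$ sits at the upper boundary, so $f'_1 = 1$ is clipped back to $1-\frac1n$ and contributes nothing; the compensating loss falls on the interior coordinate $2$, yielding $\tilde g_2 = \frac12-\frac1n$; coordinates $i\ge3$ do not move. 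Thus $\|\tilde g\|_1 = \|g\|_1 - \frac1n$ and $d(\tilde g,y^*) = d(g,y^*)+\frac1n$, an event of probability $(1-\frac1n)\frac1n\cdot\frac14\cdot 2^{-(n-2)} > 0$. Taking $\lambda = d(g,y^*)+\frac1n = d(f,x^*)+\frac1n$ then gives $\Pr[d(\tilde f,x^*)\ge\lambda]=0 < \Pr[d(\tilde g,y^*)\ge\lambda]$, so $d(\tilde f,x^*)\not\succeq d(\tilde g,y^*)$, as required.

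The conceptual heart, and the step I expect to be the main obstacle to get right, is engineering a genuine distance increase under \onemax, whose winner always carries at least as many ones as the loser and which therefore can never decrease the frequency sum before clipping. Such an increase is possible only through the interaction of two effects that the coupling underlying Lemma~\ref{lem:dom} silently ignores: the upper-boundary clipping neutralises the gaining coordinate, while the tie-breaking convention $y^1=x^1$ routes the compensating loss onto a free interior coordinate. Arranging this while simultaneously meeting the constraints $f,g\in(F_\mu)^n$ and $d(f,x^*)=d(g,y^*)$ is exactly what forces the explicit choices above (and the evenness of $n$); the remaining verifications are elementary.
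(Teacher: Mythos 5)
Your proof is correct, and it takes a genuinely different route from the paper's. The paper chooses $\calF$ to be a subjump function, $f = \frac 12 \mathbf{1}_n$, and $g$ with half its entries at $\frac 2n$ and half at $1 - \frac 2n$, and refutes domination via \emph{expectations}: with probability $1-\exp(-\Omega(n))$ the subjump iteration started at $f$ coincides with a \onemax iteration, so Droste's progress lemma (essentially Lemma~\ref{ldroste}) gives $E[d(\tilde f,x^*)] \le \frac n2 - \Omega(\frac{\sqrt n}{\mu})$, while the $g$-process, with all frequencies glued to the boundaries, moves in expectation by only $O(\frac 1\mu)$ in $\ell_1$-norm; since $d(\tilde f,x^*) \succeq d(\tilde g,y^*)$ would force $E[d(\tilde f,x^*)] \ge E[d(\tilde g,y^*)]$, domination must fail. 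You instead take $\calF = \onemax$ itself and refute domination at a single threshold by an exact, non-asymptotic event analysis: your $f$-process can never increase the distance (the \onemax winner rule plus the absence of reachable upper boundaries), while your $g$-process increases it with positive probability through the interplay of upper-boundary clipping and the tie-breaking convention $(y^1,y^2)=(x^1,x^2)$ of Algorithm~\ref{alg:cga}. What each approach buys: yours is more elementary (no Lemma~\ref{ldroste}, no Chernoff bounds), works for every even $n \ge 6$ rather than only asymptotically, and yields a strictly stronger conclusion --- the analogue of Lemma~\ref{lem:dom} fails even when \emph{both} processes optimize \onemax, so the failure is intrinsic to the cGA's frequency capping and ranking mechanics rather than to the gap of jump functions. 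The paper's construction, in exchange, exhibits a failure visible in expectation with a polynomially large margin rather than only on an exponentially unlikely event (so it also rules out any ``domination up to small error''), and it does so with $\calF$ a jump-type function, which is the case relevant to the paper's motivation of transferring \onemax lower bounds to jump functions. Two small points you should make explicit: your no-upper-clipping claim on the $f$-side uses $f_2 = \frac 12 + \frac 1n \le 1 - \frac 2n$, i.e., $n \ge 6$ (for $n=4$ it fails since $f_2 = \frac 34 = 1-\frac 1n$; the lemma is in any case implicitly asymptotic --- for $n=2$ it is even false, as $F_\mu = \{\frac 12\}$ makes both processes identical); and your gap event relies on ties being resolved in favor of $x^1$, which matches Algorithm~\ref{alg:cga} (under random tie-breaking the same event still works, with half the probability).
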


\begin{proof}
  Let $\calF$ be any subjump function with jump size $k \le \frac n4$. Let $f = \frac 12 \textbf{1}_n$. Let $g \in [0,1]^n$ be such that half the entries of $g$ are equal to $\frac 1n + \frac 1\mu = \frac 2n$ and the other half are equal to $1 - \frac 1n - \frac 1\mu = 1 - \frac 2n$. 

We obviously have $d(f,x^*) \ge d(g,y^*)$, since both numbers are equal to~$\frac n2$. Since with probability $1 - \exp(-\Omega(n))$, both search points sampled in the jump process have between $\frac n4$ and $\frac 3{4n}$ ones, their jump fitnesses equal their \onemax fitnesses. Consequently, we may apply Lemma~5 from~\cite{Droste06} (or, with one more argument, Lemma~\ref{ldroste}) and see that $E[d(\tilde f,x^*)] \le \frac n2 - \Omega(\frac 1 \mu \sqrt n)$. For the \onemax process started in $g$, however, denoting the two search points generated in this iteration by $x^1$ and $x^2$, we have 
\[E[\|g - \tilde g\|_1] = \sum_{i=1}^n \tfrac 1 \mu \Pr[x_i^1 \neq x_i^2] = n \cdot \tfrac 1 \mu \cdot 2 \cdot \tfrac 2n (1 - \tfrac 2n) \le \tfrac 4n.\]
From this and $d(\tilde g,y^*) = \|\tilde g-y^*\|_1 = \|\tilde g - g + g - y^*\|_1 \ge \|g - y^*\|_1 - \|\tilde g - g\|_1$, we obtain 
 \[E[d(\tilde g,y^*)] \ge d(g,y^*) - \tfrac 4 {\mu} = \tfrac n2 - O(\tfrac 1 {\mu}).\] Since thus $E[d(\tilde f,x^*)] \le E[d(\tilde g,y^*)]$, we cannot have $d(\tilde f,x^*) \succeq d(\tilde g,y^*)$.
\end{proof}

We note that a second imaginable domination result is also not true, namely that, roughly speaking, the frequency vector arising from one iteration started with a better initial frequency vector dominates the result of starting with a worse initial frequency vector. More precisely, we have the following. 

\begin{lemma}
  Let $\mu$ be an arbitrary hypothetical population size for all cGAs considered here. There are frequency vectors $f, g \in (F_\mu)^n$ with $f \le g$ (componentwise) such that the following holds. Let $\calF$ be any subjump function with jump size at most $\frac n2$ (including the \onemax function). Let $\tilde f$ be the frequency vector resulting from optimizing $\calF$ for one iteration with the cGA started with frequency vector $f$. Let $\tilde g$ be the frequency vector resulting from optimizing $\onemax$ for one iteration with the cGA started with frequency vector $g$. Then we do not have $\tilde f_i \preceq \tilde g_i$ for all $i \in [1..n]$.
\end{lemma}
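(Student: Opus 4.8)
The plan is to exhibit explicit $f \le g$ for which one coordinate of the frequency vector is pushed \emph{up} more strongly from the smaller vector $f$ than from the larger vector $g$. The mechanism is that the drift of a single frequency in the cGA depends on the fluctuation of \emph{all other} coordinates: as quantified in Lemma~\ref{lonemax2}, coordinate $i$ gains the more, the smaller $\sum_{j \ne i} f_{jt}(1-f_{jt})$ is. I would therefore make the ``background'' coordinates nearly deterministic in $f$ but maximally noisy in $g$, namely
\[
f = (\tfrac 12, \tfrac 1n, \dots, \tfrac 1n), \qquad g = (\tfrac 12, \tfrac 12, \dots, \tfrac 12).
\]
Then $f \le g$ componentwise, and both lie in $(F_\mu)^n$: here $\frac 1n$ is the lower boundary, and $\frac 12 = \frac 1n + \frac{n_\mu}{2}\cdot\frac 1\mu \in F_\mu$ since $n_\mu \in 2\N$ by the well-behaved frequency assumption. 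I would show that domination already fails at coordinate~$1$, by taking the threshold $\lambda = \frac 12 + \frac 1\mu$ (a valid frequency strictly below $1-\frac1n$ for, say, $\mu \ge 4$ and $n \ge 4$) and proving $\Pr[\tilde f_1 \ge \lambda] > \Pr[\tilde g_1 \ge \lambda]$, which directly contradicts $\tilde f_1 \preceq \tilde g_1$.

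Both sides equal the probability that the first frequency strictly increases in the respective iteration, i.e.\ that the two samples differ in bit~$1$ (probability $\frac 12$, since $f_1=g_1=\frac 12$) and that the sample carrying the $1$ in bit~$1$ is selected. Writing $S^1, S^2$ for the numbers of ones in the remaining $n-1$ coordinates of the two samples, the favorable sample wins iff $1 + S^1 \ge S^2$, so each increase probability equals $\frac 12 \Pr[S^2 - S^1 \le 1]$. In the $g$-process, $S^1,S^2 \sim \Bin(n-1,\frac 12)$ are i.i.d.\ with $\Theta(\sqrt n)$ spread, so $S^2-S^1$ is symmetric about $0$ and $\Pr[S^2-S^1 \le 1] = \frac 12 + O(n^{-1/2})$, giving an increase probability of $\frac 14 + O(n^{-1/2})$. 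In the $f$-process, $S^1,S^2 \sim \Bin(n-1,\frac 1n)$ are tightly concentrated near their mean below $1$; since $\{S^2 \ge S^1+2\} \subseteq \{S^2 \ge 2\}$ and $\Pr[S^2 \ge 2] \to 1 - 2e^{-1} < 0.27$, I obtain $\Pr[S^2 - S^1 \le 1] \ge 2e^{-1} - o(1)$ and thus an increase probability of at least $e^{-1} - o(1) > 0.36$. Hence for $n$ large the two probabilities are separated by a positive constant, with $f$ on the larger side.

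The only point where the choice of a general subjump function $\calF$ (rather than plain \onemax) could influence $\tilde f_1$ is when a sample of the $f$-process lands in the gap $G_{nk}$, since there the selection may disagree with \onemax. But a sample from $f$ has $\|x\|_1$ concentrated around $\frac 32$, so by Lemma~\ref{lsample} it lies in $G_{nk}$ (which requires $\|x\|_1 > n-k \ge \frac n2$) only with probability $\exp(-\Omega(n))$. Consequently the distribution of $\tilde f_1$ under any such $\calF$ differs from the pure-\onemax case by at most $\exp(-\Omega(n))$, which is negligible against the constant gap established above. Thus the single pair $f,g$ works simultaneously for every subjump function with jump size at most $\frac n2$, including \onemax itself.

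The main obstacle is the binomial comparison of the two increase probabilities: one must certify a positive constant separation uniformly in $\mu$ and for all sufficiently large $n$, not merely in the limit. This is exactly where the asymmetry between a $\Theta(\sqrt n)$-spread background (in $g$, which drowns out bit~$1$'s unit advantage and makes its win a near coin flip) and an $O(1)$-spread background (in $f$, where bit~$1$ is the decisive differing coordinate) is essential, so the remaining work is to replace the limiting values of $\Pr[S^2 \ge 2]$ and of $\Pr[S^2 - S^1 \le 1]$ by explicit finite-$n$ estimates that keep the gap strictly positive.
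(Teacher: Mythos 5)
Your proposal is correct and follows essentially the same route as the paper's proof: the identical pair $f = (\tfrac 12, \tfrac 1n, \dots, \tfrac 1n)$, $g = \tfrac 12 \mathbf{1}_n$, the same focus on coordinate $1$ with threshold $\tfrac 12 + \tfrac 1\mu$, the same dichotomy between an $O(1)$-spread and a $\Theta(\sqrt n)$-spread background sum, and the same $\exp(-\Omega(n))$ argument to dismiss samples in the gap. One small correction: ties are resolved in favor of the first sample $x^1$, not of the sample carrying the one in bit~$1$, so the increase probability is $\tfrac 14\bigl(\Pr[S^2 - S^1 \le 1] + \Pr[S^2 - S^1 \le 0]\bigr)$ rather than $\tfrac 12 \Pr[S^2 - S^1 \le 1]$; for the $f$-process this is still at least $\tfrac{3}{4e} - o(1) > \tfrac 14$, while the $g$-process stays at $\tfrac 14 + O(n^{-1/2})$, so your constant separation and the conclusion stand.
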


\begin{proof}  
  Let  $f = (\frac 12, \frac 1n, \dots, \frac 1n)$ and $g = \frac 12 \textbf{1}_n$. Clearly, $f \le g$. 
  
  When performing one iteration of the cGA on $\calF$ started with $f$, and denoting the two samples by $x^1$ and $x^2$ and their quality difference in all but the first bit by $\Delta = \|x^1_{|[2..n]}\|_1 - \|x^2_{|[2..n]}\|_1$, then the argument that with probability $1 - \exp(-\Omega(n))$ this iteration equals an iteration with \onemax as objective function shows that the resulting frequency vector $\tilde f$ satisfies 
\begin{align}
  \Pr[&\tilde f_1 = \tfrac 12 + \tfrac 1 \mu] \nonumber\\
  &\ge \Pr[x^1_1 \neq x^2_1] (\tfrac 12 \Pr[\Delta \notin \{-1,0\}] + \Pr[\Delta \in \{-1,0\}]) - \exp(-\Omega(n)) \nonumber\\ 
&= \Pr[x^1_1 \neq x^2_1] (\tfrac 12 + \tfrac 12 \Pr[\Delta \in \{-1,0\}]) - \exp(-\Omega(n)).\label{eq:counterex}
\end{align}
Since $\Pr[\Delta \in \{-1,0\}] \ge \Pr[\|x^1_{|[2..n]}\|_1 = \|x^2_{|[2..n]}\|_1 = 0] = (1 - \frac 1n)^{2(n-1)} \ge \frac 1{e^2}$, we have $\Pr[\tilde f_1 = \tfrac 12 + \tfrac 1 \mu] \ge \frac 14 + \frac 1 {4e^2} - \exp(-\Omega(n))$.

When starting the iteration with $g$, the resulting frequency vector $\tilde g$ satisfies an equation analogous to~\eqref{eq:counterex}, but now $\Delta$ is the difference of two binomial distributions with parameters $n-1$ and $\frac 12$. Hence, we have $\Pr[\Delta \in \{-1,0\}] = O(n^{-1/2})$, see, e.g., \cite[Lemma~1.4.13]{Doerr20bookchapter} for this elementary estimate, and thus $\Pr[\tilde g_1 = \tfrac 12 + \tfrac 1 \mu] = \frac 14 + o(1)$, disproving that $\tilde f_1 \preceq \tilde g_1$.
\end{proof}

In summary, the richer mechanism of building a probabilistic model of the search space in the cGA (as opposed to using a population in EAs) makes is hard to argue that \onemax is the easiest function for the cGA. This, in particular, has the consequence that lower bounds for the runtime of the cGA on \onemax cannot be easily extended to other functions with a unique global optimum. 

\subsection{Imitating the OneMax Proof}

Above, we have seen that a simple, general argument why a lower bound for the runtime of the cGA on \onemax should extend to jump functions appears hard to find. For this reason, we now analyze the proof of the lower bound given in~\cite{SudholtW19} and observe, fortunately, that its main arguments apply equally well to jump functions. Since the full proof in~\cite{SudholtW19} is relatively long, namely more than twelve pages, we apologize to the reader that we cannot give a self-contained version of the proof, but that instead we only argue why the arguments given in~\cite{SudholtW19} remain valid in our case. 

We show the following result, which is independent from the jump size~$k$. This result, in particular, shows that our upper bound of Theorem~\ref{thm:upper} is asymptotically tight. We note that this result is proven only for jump functions,  but not also for superjump functions. This is due to the fact that the lower bound in~\cite{SudholtW19} is only proven for \onemax and not for all functions with unique global optimum.

\begin{theorem}
  Let $c > 0$ be an arbitrary constant. Let $C$ be a constant that is sufficiently large compared to $c$. Let $\mu \ge C \log n$ and $\mu \le n^c$. Then with probability $1 - o(1)$, the runtime of the cGA with hypothetical population size $\mu$ on any $n$-dimensional jump function is at least $\Omega(\mu \sqrt n + n \log n)$.
\end{theorem}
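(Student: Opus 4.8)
The plan is to reduce to the case of small jump size and then treat the two summands $\Omega(\mu\sqrt n)$ and $\Omega(n\log n)$ separately, in both cases exploiting the single structural fact that a jump function agrees with \onemax, up to an additive constant, on all search points $x$ with $\|x\|_1 \le n-k$ and at the optimum, and differs from it only on the gap $G_{nk}$. Since the cGA is ranking-based, a run on $\jump_{nk}$ and a run on \onemax driven by the same random bits are literally identical in every iteration in which neither of the two sampled points lies in $G_{nk}$; only gap samples can make the two runs diverge. First I would dispose of large $k$: for $k \ge \eps n$ with any fixed $\eps > 0$, Theorem~\ref{thm:main} already gives a runtime of $\exp(\Omega(k)) = \exp(\Omega(n))$, which dominates $n\log n$, so in the remainder I may assume $k = o(n)$, in particular $k \le n/8$.

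For the $\Omega(\mu\sqrt n)$ term I would run a Droste-style drift argument on $D_t \coloneqq n - \|f_t\|_1$ that is in fact oblivious to the objective function. The one-step change $D_t - D_{t+1}$ is bounded by $\tfrac 1\mu\bigl|\,\|x^1\|_1 - \|x^2\|_1\bigr|$ plus the boundary correction of Lemma~\ref{lboundary}; since $x^1,x^2 \sim \Sample(f_t)$ have the same distribution no matter which function we optimize, $E[D_t - D_{t+1}] = O(\sqrt n/\mu)$ whenever $D_t \in [n/4, n]$. Applying the lower-bound additive drift theorem (Theorem~\ref{tdrift}\ref{it:driftlb}) to a process that is frozen once $D_t < n/4$ shows that reaching $D_t < n/4$ takes expected time $\Omega(\mu\sqrt n)$, and a concentration argument on the bounded increments upgrades this to a with-high-probability statement. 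Since from any $f_t$ with $D_t \ge n/4$ the probability of sampling the optimum in one step is at most $\exp(-n/4)$ by Lemma~\ref{loptUB}, a union bound shows the optimum is with high probability not sampled before time $\Omega(\mu\sqrt n)$.

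The heart of the proof is the $\Omega(n\log n)$ term, which I would obtain by following the twelve-page \onemax lower-bound proof of~\cite{SudholtW19} step by step and checking that every estimate survives the passage to $\jump_{nk}$. The only place where the objective enters that proof is through the comparison $\calF(x^1)$ versus $\calF(x^2)$ that decides the sign of the frequency update. For jump this comparison differs from \onemax only when a gap point is sampled, and there the deception works in our favour: any gap point has fitness at most $k-1$ and therefore loses to every point with $\|x\|_1 \le n-k$, while between two gap points the one with \emph{fewer} ones wins. Consequently, whenever a gap point is involved, the update $f'_{t+1} = f_t + \tfrac 1\mu(y^1-y^2)$ never increases $\|f\|_1$ beyond what \onemax would produce; gap samples can only push the frequency sum away from $n$. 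Hence every inequality in~\cite{SudholtW19} that \emph{upper} bounds the drift of the frequencies towards the all-ones string — exactly the direction in which a lower bound needs control — remains valid for jump. To prevent gap samples from interfering while $D_t$ is still large I would invoke Lemma~\ref{lsample} (giving $\Pr[x \in G_{nk}] \le \exp(-\Omega(D_t))$ once $D_t \ge 2k$) to couple the two runs exactly, and only in the final regime $D_t = O(k) = O(\log n)$, where gap samples are frequent, rely on the sign argument above.

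The main obstacle is precisely this last transfer. Section~\ref{sec:nlogn} has just shown (under ``Domination Arguments Fail'') that no clean global stochastic-domination statement holds between the jump and \onemax frequency processes, so I cannot simply assert that jump is ``harder'' than \onemax and import the bound wholesale. Instead I must verify the favourable sign of the gap contribution \emph{locally}, inside each of the specific potential-function drift computations used by~\cite{SudholtW19}, and I expect to have to re-examine every step of that proof that uses \onemax-specific concentration of the sampled offspring, confirming that it in fact only needs the common sampling law $\Sample(f_t)$ and the ranking of the pair. Finally I would stress that this scheme genuinely requires a jump function rather than an arbitrary superjump function: below the gap a superjump function may disagree with \onemax, so the descent and endgame analyses of~\cite{SudholtW19} would no longer apply — which is exactly why the result is stated for jump functions only.
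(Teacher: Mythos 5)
Your opening moves match the paper's: dispose of $k = \Omega(n)$ via Theorem~\ref{thm:main}, then exploit that off the gap the cGA's comparisons (including ties) under $\jump_{nk}$ and \onemax coincide, so the two runs can be coupled exactly. Your separate drift argument for the $\Omega(\mu\sqrt n)$ term is a legitimate alternative route (the paper instead gets both terms at once from the transfer of~\cite{SudholtW19}), modulo the fact that the increments are not bounded by $O(\sqrt n/\mu)$ but only by $n/\mu$, so the ``concentration on bounded increments'' step needs a Bernstein/Azuma-type argument rather than plain Hoeffding. The genuine gap is in the heart of your $\Omega(n\log n)$ plan: the sign argument controls only the frequency \emph{sum}. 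When a gap sample reverses the ranking, \emph{every} bitwise update flips sign, so individual frequencies move up where \onemax would move them down. But the proof in~\cite{SudholtW19} is not a frequency-sum potential argument; it rests on per-frequency statements -- the random-walk structure of single frequencies, frequencies sticking at the lower boundary (their Lemma~14), and the \emph{two-sided} containment of $\Omega(n)$ frequencies in $[\tfrac 16,\tfrac 56]$ (their Lemma~15). None of these follows from an inequality on $\|f_t\|_1$; this is precisely the failure mode that the subsection ``Domination Arguments Fail'' warns about. Your proposal concedes that the transfer must be verified ``locally, inside each drift computation,'' but that verification \emph{is} the entire difficulty, and there is no indication it goes through. (A further unaddressed point: for the union bound over a window of length $\poly(n)$, exact coupling needs $D_t \ge C\log n$, not merely $D_t \ge 2k$, so your ``final regime'' in fact begins at $D_t = \Theta(\log n)$ whatever $k$ is.)

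The paper's resolution is that no such final regime ever has to be analyzed: Lemma~15 of~\cite{SudholtW19} already guarantees that, during the whole window of length $\Theta(\mu\sqrt n + n\log n)$ witnessing the bound, $\Omega(n)$ frequencies stay in $[\tfrac 16,\tfrac 56]$, hence $D_t = \Omega(n)$ there; after reducing to $k \le \kappa n$ for a small constant $\kappa$, a gap point is then sampled only with probability $\exp(-\Omega(n))$ per iteration, which is absorbed into that lemma's own failure probability. Thus with high probability the jump and \onemax runs coincide throughout the window, all of the lemmas of~\cite{SudholtW19} apply verbatim, and the only jump-specific input is that two search points tie under $\jump_{nk}$ if and only if they tie under \onemax (needed for their Lemma~12) -- which is also exactly where ``jump, not superjump'' enters. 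Note finally that your case analysis as written covers only $k \ge \eps n$ and (implicitly) $k = O(\log n)$: your phrase ``$D_t = O(k) = O(\log n)$'' is unjustified for intermediate sizes such as $k = \sqrt n$. The paper's argument is uniform over all $k \le \kappa n$; to patch your version you would additionally have to invoke Theorem~\ref{thm:main} for all $k \ge C''\log n$ with $C''$ large relative to $c$, so that $\exp(\alpha_2 k)$ dominates $\mu\sqrt n + n\log n \le n^{c+1}$.
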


\begin{proof} 
  When $k$ is $\Omega(n)$, then Theorem~\ref{thm:main} gives a lower bound of $\exp(\Omega(n))$ with high probability. For this reason, we can now conveniently assume that $k \le \kappa n$ for an arbitrarily small constant $\kappa > 0$.
  
  As announced, we argue that the main arguments of the proof of the corresponding result in~\cite{SudholtW19}, Theorem~8, remain valid. The proof of this Theorem~8 mostly consists of Lemma~10 to~15 (in~\cite{SudholtW19}). There is nothing to show for Lemma~10 as it refers only to iterations in which a fixed bit is performing a random-walk step (in which the fitness function is irrelevant). Lemma~11 is a statement on sums of independent random variables and does not refer to the cGA at all. In Lemma~12, a lower bound on the probability of a non-random-walk step is given. Informally speaking, a non-random-walk step for a particular bit means that in this iteration, the particular bit has an influence on how the two offspring are sorted before the frequency update. Since two search points have the same $\onemax$ value if and only if they have the same objective value w.r.t.\ some jump function, this probability for a non-random-walk step is the same for \onemax and the jump function. Lemma~13, while formulated in the language of the cGA, is a statement on independent parallel unbiased random walks. The basic argument in the proof of Lemma~14 is that when $n^\eps$ frequencies have reached the lower boundary, then with high probability at least one of them will not move for $\Omega(n \log n)$ iterations, simply because the two offspring generated in each iteration always agree in this bit. The claim of Lemma~15 includes that $\Omega(n)$ frequencies stay in the interval $[\frac 16, \frac 56]$ for a given time frame $T$. To sample a search point in the gap, since $k$ is sufficiently small, at least a constant fraction of these bits have to be sampled as one. By a simple Chernoff bound (Theorem~\ref{tchernoff}), this happens only with probability $\exp(-\Omega(n))$ in one iteration. Since Lemma~15 gives a statement with probability $1 - \poly(n) 2^{-\Omega(\min\{\mu,n\})}$ only, the probabilities of sampling a search point in the gap do not affect the failure probability of $\poly(n) 2^{-\Omega(\min\{\mu,n\})}$. The main proof of Theorem~8 consists mostly of applications of these intermediate results. Only the last two paragraphs discuss what happens after the time frame~$T$, which was analyzed in Lemma~15. These two paragraphs, however, again only use general properties of the cGA that are independent of the particular fitness function.\footnote{To be very precise, the argument that a frequency at the lower boundary leaves this boundary only with probability $O(n^{-3/2})$ in one iteration is not correct, but the authors of~\cite{SudholtW19} convinced us that also with the correct estimate of $O(\frac 1n)$ and setting the implicit constants right, at least $\sqrt n$ frequencies remain at the lower boundary at the end of the first $T$ iterations. This is enough to apply Lemma~14.} In summary, all arguments given in the proof of Theorem~8 in~\cite{SudholtW19} are equally valid for the optimization of a jump function with $k \le \kappa n$ instead of the \onemax function. This proves our claim.
\end{proof}

We note that the proof above (and thus our result) applies not only to jump functions, but to all functions where Theorem~\ref{thm:main} can be employed and, more interestingly, to all functions that agree with \onemax on all search point $x$ with $\frac n6 \le \|x\|_1 \le \frac {5n}{6}$. This restriction is necessary to use the arguments of~\cite{SudholtW19}. Overcoming this restriction is most likely non-trivial. It would most likely immediately imply a general lower bound of $\Omega(n \log n)$ for the runtime of the cGA on any function with unique global optimum, which is a major open problem in the field. 

\section{Conclusion}

This study (including the preliminary versions~\cite{Doerr19foga,Doerr19gecco}) is, to the best of our knowledge, after~\cite{HasenohrlS18} only the second mathematical analysis of an EDA on a multimodal optimization problem. Our two main results are
\begin{enumerate}
\item that the cGA can optimize jump functions with logarithmic jump sizes in asymptotically the same efficiency as the simple \onemax function; it thus does not suffer from the fitness valleys present in these objective functions;
\item an $\exp(\Omega(k))$ lower bound for the runtime of the cGA on jump functions with jump size $k$, regardless of the hypothetical population size $\mu$. This result shows, in particular, that the corresponding upper bound by Hasen\"ohrl and Sutton~\cite{HasenohrlS18} cannot be improved by running the cGA with a hypothetical population size that is sub-exponential in $k$. 
\end{enumerate}

The obvious question arising from this work is whether similar results hold for other EDAs and other optimization problems, or whether this result is a particularity of the cGA and jump functions. Natural candidates for other EDAs could be the UMDA, for which several rigorous runtime results exist, see~\cite{KrejcaW20bookchapter}, and the significance-based cGA~\cite{DoerrK20tec}, which might profit from using only the three frequencies $\frac 1n$, $\frac 12$, and $1 - \frac 1n$. Candidates for optimization problems leading to a multimodal fitness landscape include the maximum matching problem~\cite{GielW03,GielW04} or the minimum vertex cover problem~\cite{OlivetoHY09,JansenOZ13}. 

We also proved an $\Omega(n \log n)$ lower bound for jump functions in Section~\ref{sec:nlogn}, and did so by arguing that this lower bound is witnessed in the \onemax process at a time up to which the cGA most likely has not sampled a search point that lies in the gap of a jump function. For this reason, the proof of~\cite{SudholtW19} extends to jump functions as well. This argument was sufficient for our purposes, but left the real (and most likely very difficult) question untouched, namely if $\Omega(n \log n)$ is a lower bound for the cGA optimizing any function with unique global optimum. We do not dare to speculate what is the answer.


\newcommand{\etalchar}[1]{$^{#1}$}


}

\end{document}